\newenvironment{proofOfOne}   {\subproof{(i)}}  {\end{proof}}
\newenvironment{proofOfTwo}   {\subproof{(ii)}} {\end{proof}}
\newenvironment{proofOfThree} {\subproof{(iii)}}{\end{proof}}
\newenvironment{proofOfFour}  {\subproof{(iv)}} {\end{proof}}
\newenvironment{proofS}       {\begin{proof}[Proof sketch]}{\end{proof}}
\newcommand{\va}{\text{\textcolor{violet}{A}}}
\newcommand{\vb}{\text{\textcolor{violet}{B}}}
\newcommand{\vc}{\text{\textcolor{violet}{C}}}
\newcommand{\vd}{\text{\textcolor{violet}{D}}}
\newcommand{\ve}{\text{\textcolor{violet}{E}}}
\newcommand{\vbo}{\color{violet}{\text{B}_1}}
\newcommand{\vbt}{\color{violet}{\text{B}_2}}
\newcommand{\vbth}{\color{violet}{\text{B}_3}}
\newcommand{\vbf}{\color{violet}{\text{B}_4}}
\newcommand{\vbfi}{\color{violet}{\text{B}_5}}
\newcommand{\vdo}{\color{violet}{\text{D}_1}}
\newcommand{\vdt}{\color{violet}{\text{D}_2}}
\newcommand{\update}[1]{{\color{black}#1}}
\renewcommand\footnotetextcopyrightpermission[1]{} 
\title[]{{Decentralized Asynchronous Multi-player Bandits}}
\author{Jingqi Fan}
\affiliation{
  \institution{Northeastern University, China}
  \city{Shenyang}
  \country{China}}
\email{fanjingqi@stumail.neu.edu.cn}
\author{Canzhe Zhao}
\affiliation{
  \institution{Shanghai Jiao Tong University}
  \city{Shanghai}
  \country{China}}
\email{canzhezhao@sjtu.edu.cn}
\author{Shuai Li}
\affiliation{
  \institution{Shanghai Jiao Tong University}
  \city{Shanghai}
  \country{China}}
\email{shuaili8@sjtu.edu.cn}
\author{Siwei Wang}
\affiliation{
  \institution{Microsoft Research Asia}
  \city{Beijing}
  \country{China}}
\email{siweiwang@microsoft.com}
\begin{abstract}
In recent years, multi-player multi-armed bandits (MP-MAB) have been extensively studied due to their wide applications in cognitive radio networks and Internet of Things systems. 
While most existing research on MP-MAB focuses on synchronized settings, real-world systems are often decentralized and asynchronous, where players may enter or leave the system at arbitrary times, i.e., there does not exist a global variable that can be assumed as a common knowledge of other users' clock.
%
%, and do not have a global clock. 
%
This decentralized asynchronous setting introduces two major challenges. 
First, without a global clock, players cannot implicitly coordinate their actions through time, making it difficult to avoid collisions. 
Second, it is important to detect how many players are in the system, but doing so may cost a lot.
In this paper, we address the challenges posed by such a fully asynchronous setting in a decentralized environment. 
We develop a novel algorithm in which players adaptively change between exploration and exploitation. During exploration, players uniformly pull their arms, reducing the probability of collisions and effectively mitigating the first challenge. Meanwhile, players continue pulling arms currently exploited by others with a small probability, enabling them to detect when a player has left, thereby addressing the second challenge. 
We prove that our algorithm achieves a regret of \(\mathcal{O}(\sqrt{T \log T} + {\log T}/{\Delta^2})\), where \(\Delta\) is the minimum expected reward gap between any two arms. 
To the best of our knowledge, this is the first efficient MP-MAB algorithm in the asynchronous and decentralized environment.
Extensive experiments further validate the effectiveness and robustness of our algorithm, demonstrating its applicability to real-world scenarios. 
%
%\update{We further conduct large-scale experiments with up to 100 arms, highlighting the scalability of our approach.}
\end{abstract}
\keywords{Multi-player Multi-armed Bandits, Asynchronous Coordination, Decentralized Learning}
\newcommand{\BibTeX}{\rm B\kern-.05em{\sc i\kern-.025em b}\kern-.08em\TeX}
\begin{document}

%%% The following commands remove the headers in your paper. For final 
%%% papers, these will be inserted during the pagination process.

\pagestyle{fancy}
\fancyhead{}

%%% The next command prints the information defined in the preamble.

\maketitle

\section{Introduction}

Multi-armed bandit (MAB) is a well-established model with broad applications in areas such as online advertising, clinical trials, and recommendation systems \citep{auer2002finite}. In this problem, at each time step $t\leq T$, a player pulls an arm \( k \) from a finite set \([K] := \{1, \dots, K\}\) and receives a stochastic reward \( X_k(t) \). The goal is to maximize the cumulative reward of this player, which is equivalent to minimizing the regret, defined as the cumulative reward difference between the optimal arm and the chosen arms over time. However, many real-world scenarios exhibit complexities that the standard MAB model cannot fully capture. For instance, in cognitive radio systems, efficient spectrum sharing among users is crucial \citep{wyglinski2009cognitive}. Unlike the traditional MAB setup, these systems contain multiple players, and face collisions when more than one users select the same channel, leading to failed transmissions. This challenge gives rise to the multi-player multi-armed bandit (MP-MAB) problem, where \( M \) players simultaneously pull arms from $[K]$. When multiple players pull the same arm, their rewards turn to zero, indicating that no information is transmitted. 
\update{Compared with the single-player setting, the MP-MAB problem introduces additional layers of difficulty, as players need coordinate with others while still dealing with uncertainty in reward distributions.}

When players can observe the arm selections and corresponding rewards of all other players at each step, the problem falls into the centralized setting. In this case, shared information enables coordination to avoid collisions and optimize resource allocation through joint strategy updates. \citet{komiyama2015optimal} proposed algorithms in this setting that achieve an asymptotic optimal regret of $\mathcal{O}({\log T}/\Delta)$. 
However, in practical systems, 
frequent explicit communication among players 
incurs high energy overhead, making centralized coordination costly. To address these limitations, recent research has focused on the decentralized setting, where each player acts based solely on her own observations, and direct communication is not allowed. Despite this constraint, many existing approaches deliberately introduce collisions as an \textit{implicit communication} %
mechanism, enabling players to indirectly share information and thereby approximate the performance of the centralized case \citep{boursier2019sic, huang2022towards}. These methods typically assume a synchronous environment, where all players enter the system simultaneously and remain active throughout.
As a result, all players know a global clock, which is critical for the sharing protocol.

In contrast, real-world applications often involve inherently asynchronous systems. For example, in cognitive radio networks, users access the spectrum based on local availability and transmission demands, joining and leaving the network at arbitrary times \citep{liang2011cognitive}. 
Similarly, in Internet of Things deployments, sensors and edge devices operate on independent schedules, waking up or going offline in response to environmental triggers or battery levels \citep{li2015internet}. In such environments, players may join or leave the system at unpredictable times. 
This fails most of the existing algorithms under synchronous setting. %e.g., a new player may break a communication between two existing players, since he does not know others are trying to communicate. 
There are also some prior works who try to relax the synchronization assumption. For example, \citet{rosenski2016multi} assume a shared global clock to synchronize each epoch, and require to use a lower bound of $\Delta$ as input, which could be impractical in real applications. 
\citet{boursier2019sic} allow players to join at different times but require them to remain active until the end. Other models assume that each player is active in each round with some fixed probability \citep{bonnefoi2017multi, dakdouk2022massive, richard2024constant}. 
More details of related works are deferred to Appendix \ref{sec:related}.
While these approaches offer valuable insights into partially asynchronous settings, their applicability remains limited under more general forms of asynchrony and decentralization. %, where players act independently without global timing or coordination. 

\subsection{Our Contribution} \label{sec:cont}

In this paper, we consider a decentralized asynchronous setting in which players are unaware of the global clock, and may join or leave the system at arbitrary times. Compared to existing work that assumes players either become active with some fixed probability or enter the system arbitrarily but remain until the end, our asynchrony model is more general and better reflects real-world scenarios. The unpredictable access patterns in the decentralized environment introduce two major challenges:
\begin{enumerate}[label=(\roman*)]
    \item The absence of a global clock makes implicit communication through collisions unreliable. Since new players may join and pull arms at arbitrary times, they can unintentionally collide with existing players. This disrupts the structured implicit communication patterns, ultimately leading to frequent and uncontrolled collisions.
    \item The dynamic nature of player participation makes it important to detect the number of current active players. If the number is overestimated, then the player may exploit an arm that is not good enough, leading to unacceptable regret. How to detect the number of current active players with minimum cost is also challenging. 
\end{enumerate}

To deal with the above challenges, we propose a novel algorithm named \textbf{Adaptive Change between Exploration and Exploitation (ACE)}. %for the decentralized asynchronous MP-MAB problem. 
ACE enables every player $j$ to estimate an arm set \(\mathcal{A}^j\), which contains all arms that are believed to be currently exploited by other players. 
Based on this estimation, they can adaptively alternate between exploration and exploitation. %In particular, each player \(j\) maintains an arm set \(\mathcal{A}^j\), which contains all arms that are believed to be currently exploited by other players. 
In the exploration phase, player \(j\) randomly explores arms in \([K] \setminus \mathcal{A}^j\), and switches to the exploitation phase once she identifies a high-probability optimal arm \(\hat{k}^j \in [K] \setminus \mathcal{A}^j\). In the exploitation phase, she repeatedly pulls \(\hat{k}^j\) with high probability, and returns to exploration once she detects that an arm which was previously in $\mathcal{A}^j$ becomes available again and is not sufficiently explored. 
In this way, ACE addresses the two challenges. On the one hand, uniformly pulling arms from \([K] \setminus \mathcal{A}^j\) during the exploration phase ensures a sufficiently randomized access, which significantly reduces the probability of collisions, thereby mitigating challenge (i). On the other hand, players continue pulling arms in \(\mathcal{A}^j\) with a small probability, allowing them to use a small cost to detect when such arms become available and update their exploitation choices accordingly. This mechanism effectively addresses challenge (ii).

Our analysis shows that ACE achieves a regret upper bound of \(\mathcal{O}(\sqrt{T \log T} + {\log T}/{\Delta^2})\), where the \(\mathcal{O}(\sqrt{T \log T})\) term arises from Challenge (ii), as players must occasionally try arms in \(\mathcal{A}^j\) with probability $\varepsilon = \mathcal{O}(\sqrt{\log T / T})$ to detect changes in availability. The \(\mathcal{O}({\log T}/{\Delta^2})\) term corresponds to Challenge (i), due to the unavoidable collisions caused by uniform exploration, resulting in a dependence on \(1/\Delta^2\) rather than the standard \(1/\Delta\).
We further support our theoretical findings with comprehensive experiments, which confirm the practical effectiveness and robustness of ACE across a variety of asynchronous settings, including large-scale scenarios with many players and arms.

\section{Preliminaries}

We consider a \(T\)-step decentralized asynchronous multi-player multi-armed bandit problem with \(K\) arms and \(M\) players.  
Let \([K] := \{1, 2, \ldots, K\}\) denote the set of arms, and \([M] := \{1, 2, \ldots, M\}\) denote the set of players.  
Each player \(j \in [M]\) joins the system at time step \(T^j_{\mathrm{start}}\) and leaves at time step \(T^j_{\mathrm{end}}\).  
%
% We define \(T^j = T^j_{\mathrm{end}} - T^j_{\mathrm{start}}\) as the active period length of player \(j\).  
%
% Note that in the decentralized asynchronous setting, player \(j\) does not know \(T^j_{\mathrm{start}}\) or \(T^j_{\mathrm{end}}\), but only the total duration \(T^j\)
% Note that in the decentralized asynchronous setting, player \(j\) is unaware of her own \(T^j_{\mathrm{start}}\) and \(T^j_{\mathrm{end}}\), but only knows that the game lasts for totally $T$ time steps, i.e., $T^j_{\mathrm{start}}$, $T^j_{\mathrm{end}}$ (and real time step $t$) are not able to be the input of her algorithm.
Note that in the decentralized asynchronous setting, player \(j\) is unaware of her own \(T^j_{\mathrm{start}}\) and \(T^j_{\mathrm{end}}\), and only knows that the game lasts for a total of \(T\) time steps. That is, \(T^j_{\mathrm{start}}\), \(T^j_{\mathrm{end}}\), and the actual time step \(t\) cannot be used as inputs to her algorithm.

At each \update{discrete} time \update{step} $T^j_\mathrm{start} \le t \le T^j_\mathrm{end}$, player $j$ selects an arm $\pi^j(t) \in [K]$ to pull (for $t < T^j_\mathrm{start}$ or $t > T^j_\mathrm{end}$, we let $\pi^j(t) = 0$).
If more than one players choose arm $k$ at $t$, then there is a collision, and $\eta_k(t):= \mathds{1}[\sum_{j\leq M} \mathds{1}[ \pi^j(t) = k ]>1]$ denotes the collision indicator.
For player $j$, her observation at step $t$ contains two values, $\eta^j(t) = \eta_{\pi^j(t)}(t)$ tells her whether there is a collision, and $r^j(t):= (1-\eta_{\pi^j(t)}(t))X_{\pi^j(t)}(t)$ is her reward in this step. Here $X_{\pi^j(t)}(t)$ is drawn independently according to an unknown fixed distribution with expectation $\mu_{\pi^j(t)}\in[0,1]$. 
Without loss of generality, we assume that $\mu_1 > \mu_2 > \cdots > \mu_K$ \citep{wang2020optimal, mahesh2022multi, mahesh2024attacking}.
For player $j$, her own history information is given by $\mathcal{F}_{t-1}^j = \{(t'-T^j_\mathrm{start},\pi^j(t^\prime),\eta^j(t^\prime),r^j(t^\prime))| T^j_\mathrm{start}\le t^\prime \le t-1\}$.

The goal of the players is to choose arms properly based on their own history $\mathcal{F}_{t-1}^j$'s to minimize the regret defined as 
\begin{equation*}
    R(T) := \sum_{t=1}^T \sum_{k \leq m_t}\mu_k -\mathbb{E}\left[\sum_{t=1}^{T}\sum_{j: T_\mathrm{start}^j \le t \le T_\mathrm{end}^j } r^j(t)\right],
\end{equation*}
where $m_t := |\{j: T_\mathrm{start}^j \le t \le T_\mathrm{end}^j \}|$ denotes the number of active players at time \( t \), and the baseline $\sum_{t=1}^T \sum_{k \leq m_t}\mu_k$ is the best expected reward one can get in a centralized offline setting. %We consider the following assumptions.

\begin{assumption}\label{Assumption:active_players}
    The number of active players during the game is upper bounded, i.e., there exists a constant $m$ such that for any \( t \leq T \), $m_t \le m \le {{K}/{2}}$.
\end{assumption}

% Assumption~\ref{Assumption:knownT} is introduced to facilitate the analysis by enabling a bound on the regret of bad events, where the estimated reward deviates significantly from the true mean.  
% Note that this assumption can be relaxed to a more general case where each player \(j\) satisfies \(T^j \leq T_{\max} \leq T\), with \(T_{\max}\) known in advance. This condition is realistic in cognitive radio networks where secondary users operate under time-limited licenses or expected activity durations, and thus have access to an upper bound on their participation time \citep{wyglinski2009cognitive, liang2011cognitive}.

Assumption~\ref{Assumption:active_players} ensures that players have access to enough arms. This kind of assumption is common in real world applications \citep{jia2009cooperative, jin2010multicast, zhang2010dynamic, ngo2011distributed, naeem2013resource, kumar2021multi, mughal2024intelligent}, and is well adopted in MP-MAB literatures \citep{besson2018multi,bistritz2018distributed,boursier2019sic, shi2020decentralized,shi2021heterogeneous, xiong2023decentralized,boursier2024survey}.

\section{Algorithm}

In this section, we propose our \textbf{Adaptive Change between Exploration and Exploitation (ACE)} algorithm. The key idea behind ACE is that players adaptively change between exploration and exploitation based on the number of collisions over a given period, enabling them to update their exploitation arms to the latest optimal choices.

\begin{algorithm*}[t]
\caption{ACE (from the view of player $j$)}
\label{alg:1} 
\begin{algorithmic}[1] 
    \Statex \textbf{Input:} $T$, $K$ (the number of arms), $m$ (the maximum number of players), $\varepsilon$ \update{(the probability of pulling arms in $\mathcal{A}^j$ during exploration)}
    \State  \textbf{Init:} $\hat{k}^j = 0$, $\mathcal{A}^j= \emptyset$ (the set of occupied arms), $ \texttt{Correction} = \texttt{False}, \texttt{Phase} = \texttt{Exploration}$. For all $ k\leq K$, initialize $\mathcal{P}^j_{k}, \mathcal{Q}^j_{k}$ as empty queue separately with length $L_p, L_q$ as defined in \eqref{eq:length}.
    \While{Player $j$ remains in the system}
    \State $k^j_1, k^j_2 \leftarrow \mathrm{DoubleSelection}()$ \label{l:uni}
    \State Pull $k^j_1, k^j_2$ and observe $r^j(t_1), \eta_{k^j_1}(t_1), r^j(t_2), \eta_{k^j_2}(t_2)$
    \State \textbf{if} {$k^j_1\in \mathcal{A}^j$} {($k^j_2\in \mathcal{A}^j$)} \textbf{then} Add $[1-\eta_{k^j_1}(t_1)]$ to the end of $\mathcal{Q}^j_{k^j_1}$ (Add $[1-\eta_{k^j_2}(t_2)]$ to the end of $\mathcal{Q}^j_{k^j_2}$) \textbf{end if} \label{l:p_col-0}
    %\State \textbf{if} {$k^j_2\in \mathcal{A}^j$} \textbf{then} Add $[1-\eta_{k^j_2}(t_2)]$ to the end of $\mathcal{Q}^j_{k^j_2}$ \textbf{end if} \label{l:p_col}
    \If{$\texttt{Phase} = \texttt{Exploration}$}
    \State Update $N^j_k, \hat{\mu}^j_k, \mathrm{UCB}^j_{k}, \mathrm{LCB}^j_k, \forall k\in [K] \setminus \mathcal{A}^j$ according to \eqref{eq:mu1}, \eqref{eq:mu2}, \eqref{eq:ucb_compute1} and \eqref{eq:ucb_compute2} \label{l:update}
    \State \textbf{if} $k^j_1 = k^j_2$ \textbf{then} Add $[\eta_{k^j_1}(t_1)\cdot \eta_{k^j_2}(t_2)]$ to the end of $\mathcal{P}^j_{k^j_1}$ \textbf{end if} \label{l:puttwo}
    \If{$\exists k \in [K]\setminus \mathcal{A}^j \text{ s.t. }\sum_{i\in \mathcal{P}^j_k} i\geq \lceil 0.85L_p \rceil$} \label{l:cond1} \textcolor{violet}{\Comment{Find an occupied arm}}
    \State Add $k$ to $\mathcal{A}^j$ and reset $\mathcal{P}_k^j$  \label{l:putA}
    \State \textbf{if} $|\mathcal{A}^j| > m - 1$ \textbf{then} $\texttt{Correction} \leftarrow \texttt{True}$ \textbf{end if} \label{l:cor-grt}
    \EndIf 
    \If{$\exists k \in \mathcal{A}^j,\text{ s.t. }\sum_{i\in \mathcal{Q}^j_{k}} i \geq \lceil 0.142L_q \rceil$}  \label{l:cond2_new} \textcolor{violet}{\Comment{Find a released arm}} 
    \State Remove $k$ from $\mathcal{A}^j$ and reset $Q^j_k$ 
    \State \textbf{if} $|\mathcal{A}^j| < m$ \textbf{then} $\texttt{Correction} \leftarrow \texttt{False}$ \textbf{end if}
    \EndIf \label{l:delete-2}
    \If{$\texttt{Correction} = \texttt{False}$ \textbf{and} $\exists k \in [K]\setminus \mathcal{A}^j$, $k$ satisfies Conditions \ref{eq:cd1} and \ref{eq:cd2}} \label{l:nsame_0}
    \State $\hat{k}^j \leftarrow k$ and $\texttt{Phase} \leftarrow \texttt{Exploitation}$ \textcolor{violet}{\Comment{Be prepared to exploit $\hat{k}^j$}}
    \EndIf
    \ElsIf{$\texttt{Phase} = \texttt{Exploitation}$}
    \If{$\exists k \in \mathcal{A}^j,\text{ s.t. }\sum_{i\in \mathcal{Q}^j_{k}} i \geq \lceil 0.142L_q \rceil$}  \label{l:cond2} \textcolor{violet}{\Comment{Find a released arm}} 
    \State Remove $k$ from $\mathcal{A}^j$ and reset $\mathcal{Q}_k^j$ \label{l:remove-2}
    \State \textbf{if} {$\mathrm{LCB}^j_{\hat{k}^j} < \mathrm{UCB}^j_{k}$} \textbf{then} $\hat{k}^j \leftarrow 0$ and $\texttt{Phase} \leftarrow \texttt{Exploration}$ \textbf{end if} \textcolor{violet}{\Comment{Back to Exploration Phase}} \label{l:return}
    \EndIf
    \EndIf
    %\State $t\leftarrow t+2$
    \EndWhile
\end{algorithmic}
\end{algorithm*}

\subsection{Notations}

In general, the algorithm is divided into exploration phase and exploitation phase. 
Let $\hat{k}^j$ denote the arm that is exploited by player $j$ during the exploitation phase, and $\mathcal{A}^j$ denote the set of arms that player \(j\) perceives as occupied (i.e., arms she believes are being exploited by others). 
To adaptively change between exploration and exploitation, we maintain two queues \(\mathcal{P}^j_k\) and \(\mathcal{Q}^j_k\) for each player \(j\) and arm \(k\). The lengths of these queues are defined as:
\begin{equation}
    |\mathcal{P}^j_k|:=L_p= \lceil 866\ln(T) \rceil, \quad |\mathcal{Q}^j_k|:= L_q= \lceil 570\ln(T)\rceil ~.
    \label{eq:length}
\end{equation}
Let \(\varepsilon \in (0,1)\) denote a parameter that controls the trade-off between doing exploration-exploitation and checking whether $\mathcal{A}^j$ is accurate.
% for player \(j\). 
We use the phrase “player \(j\) occupies arm \(k\)” to indicate that player \(j\) is selecting \(\hat{k}^j\) to exploit in the exploitation phase, during which she pulls \(\hat{k}^j\) with high probability. Similarly, we say “arm \(k\) is occupied at time \(t\)” if there exists a player \(j\) such that \(\hat{k}^j = k\). Conversely, “arm \(k\) is released at time \(t\)” refers to the situation where the player previously occupying arm \(k\) either leaves the system or stops exploitation and returns to exploration.

\subsection{Exploration Phase} \label{sec:exp}

When a player \(j\) enters the system, she first initializes two empty queues \(\mathcal{P}^j_k\) and \(\mathcal{Q}^j_k\) for each arm \(k \in [K]\), as well as an empty set \(\mathcal{A}^j\). She then sets her current phase to exploration. 

In the exploration phase, the player sequentially pulls two arms over two consecutive time steps, denoted by \(k^j_1\) and \(k^j_2\), as specified in Algorithm~\ref{alg:2}. Specifically, $k^j_1$ is the arm to be explored and is uniformly sampled from $[K] \setminus \mathcal{A}^j$. Then, with probability $\varepsilon$, $k^j_2$ is uniformly sampled from $\mathcal{A}^j$ if $\mathcal{A}^j\neq \emptyset$; otherwise, $k^j_2$ is set equal to $k^j_1$ (Line \ref{l:ds-exp-1}-\ref{l:ds-exp-2} in Algorithm~\ref{alg:2}). 
After pulling $k^j_1$ and $k^j_2$, player $j$ inserts $[1-\eta_{k^j_2}(t_2)]$ to the end of $\mathcal{Q}^j_{k^j_2}$ if $k^j_2$ is sampled from $\mathcal{A}^j$ (Line \ref{l:p_col-0} %-\ref{l:p_col} 
in Algorithm \ref{alg:1}). 
How these queues work will be explained after a few lines. 
The player also updates her estimations upon receiving feedback at each time step \(t\) according to
\begin{align}
& \hat{\mu}^j_k(t) := \frac{\sum_{t^\prime=1}^t r^j_k(t^\prime)\, \mathds{1}\{\eta_k(t^\prime) = 0\}}{N^j_k(t)} ~, 
\label{eq:mu1} \\
& N^j_k(t) := \sum_{t^\prime=1}^t \mathds{1}\{\pi^j(t^\prime) = k,\ \eta_k(t^\prime) = 0\} ~, 
\label{eq:mu2} 
\end{align}
where $\hat{\mu}^j(t)$ denotes player $j$'s estimate of the mean reward of arm $k$ at time $t$, and \(N^j_k(t)\) denotes the number of successful (i.e., non-collided) pulls of arm \(k\) by player \(j\) up to time \(t\). In addition, the player updates the upper and lower confidence bounds as
\begin{align}
& \mathrm{UCB}^j_k(t) := \hat{\mu}^j_k(t) + \sqrt{\frac{6 \log T}{N^j_k(t)}}~, \label{eq:ucb_compute1} \\
& \mathrm{LCB}^j_k(t) := \hat{\mu}^j_k(t) - \sqrt{\frac{6 \log T}{N^j_k(t)}} ~. \label{eq:ucb_compute2}
\end{align}

Then, player $j$ stores the product \([\eta_{k^j_1}(t_1) \cdot \eta_{k^j_2}(t_2)]\) into \(\mathcal{P}^j_{k^j_1}\) when $k^j_1=k^j_2$. If there exists an arm $k\in [K]\setminus \mathcal{A}^j$ such that $\sum_{i\in \mathcal{P}_k^j} i \geq \lceil 0.85L_p \rceil$, i.e., too many collisions have occurred, player $j$ adds arm $k$ into $\mathcal{A}^j$ and resets $\mathcal{P}_k^j$ (Line~\ref{l:putA} in Algorithm \ref{alg:1}). 
Intuitively, a high cumulative collision count in $\mathcal{P}^j_k$ indicates that another player is exploiting arm $k$. We prove that (Lemma \ref{lm:temp-step-new} in Appendix \ref{app:proof}), with high probability, an occupied arm will be detected correctly, and a non-occupied arm will not be detected as occupied. 
After adding an arm into $\mathcal{A}^j$, the player will check whether there are too many arms in $\mathcal{A}^j$, i.e., if $|\mathcal{A}^j| > m -1$, she will start to do correction by only selecting arms from $\mathcal{A}^j$ (Line \ref{l:cor-select} in Algorithm \ref{alg:2}). 
If there exists an arm \(k \in \mathcal{A}^j \) such that \(\sum_{i \in \mathcal{Q}_k^j} i \geq \lceil 0.142L_q \rceil\), i.e., many non-collisions are observed, this suggests that arm \(k\) has likely been released by the player who previously occupied it. 
Lemma \ref{lm:temp-step-new} proves that, with high probability, a released arm will be detected correctly, and a non-released arm will not be detected as released.
In this case, player $j$ will remove that arm from $\mathcal{A}^j$, reset its $\mathcal{Q}_k^j$ and stop correction since now $|\mathcal{A}^j| \le m-1$ and is probably correct.

To switch to the exploitation phase, player $j$ needs to find an arm $k\in [K]\setminus \mathcal{A}^j$ that satisfies the following two conditions.
\begin{condition}
    $\eta_{k^j_1}(t_1)+\eta_{k^j_2}(t_2)=0$, where $k^j_1 = k^j_2 = k$. \label{eq:cd1}
\end{condition}
\begin{condition}
    $\forall \ell\neq k, \ell\in  [K]\setminus \mathcal{A}^j \text{ s.t. } \mathrm{LCB}^j_k(t) \geq \mathrm{UCB}^j_{\ell}(t)$.  \label{eq:cd2}
\end{condition}
Condition~\ref{eq:cd1} ensures that no other player is occupying the same arm \(k\) as player \(j\). The requirement of observing two consecutive collision-free pulls is crucial because, during the exploitation phase, a player may not pull her exploitation arm in every step (Line~\ref{l:2-xxxx} in Algorithm \ref{alg:2}). Therefore, a single collision-free pull does not imply that the arm is not being exploited by other players. In contrast, two consecutive non-collision steps ensure that the arm is truly unoccupied: %even if an exploiting player temporarily pulls a different arm, she 
an exploiting player will always select her exploitation arm at least once in two consecutive steps (Line \ref{l:explt1}-\ref{l:nsame_2} in Algorithm \ref{alg:2}).
Condition~\ref{eq:cd2} is a regular condition for explore-then-exploit algorithms, which guarantees that arm \(k\) is the best available option for player \(j\), i.e., its lower confidence bound dominates the upper confidence bounds of all other remaining arms in \([K] \setminus \mathcal{A}^j\). 
If both conditions are satisfied, player \(j\) sets \(\hat{k}^j = k\) and transitions to the exploitation phase. 

\begin{remark}\label{remark:c1}
    Note that our mechanism ensures that: as long as a player $j$ is in the exploration phase, for any arm $k$, the probability of choosing to pull $k$ is at most $1/m$. Specifically, when she is doing regular exploration, there are at most $m-1$ arms in $\mathcal{A}^j$, hence the probability of choosing some arm $k$ is at most $1/(K-|\mathcal{A}^j|) \le 1/m$.  On the other hand, when she is doing correction, there are at least $m$ arms in $\mathcal{A}^j$, hence the probability of choosing some arm $k$ is still at most $1/|\mathcal{A}^j| \le 1/m$. Because of this, we can obtain an upper bound for the collision probability over all the players who are doing exploration, and solve Challenge (i) described in Section \ref{sec:cont}.
\end{remark}

\begin{algorithm}[t] 
\caption{DoubleSelection (from the view of player $j$)}
\label{alg:2} 
\begin{algorithmic}[1] 
    \State Sample $Y^j \sim \text{Bernoulli}( \varepsilon)$
    % \If{$Y^j = 0$} 
    \If{$\texttt{Phase} = \texttt{Exploration}$}
    \If{$\texttt{Correction} = \texttt{False}$}
    \State $k^j_1 \sim \mathrm{Uniform}([K] \setminus \mathcal{A}^j)$ \label{l:ds-exp-1} \textcolor{violet}{\Comment{Explore unoccupied arms}}
    \State \textbf{if} {$Y^j = 1$ and $\mathcal{A}^j\neq \emptyset$} \textbf{then}
    \State $\quad\ \ k^j_2 \sim \mathrm{Uniform}(\mathcal{A}^j)$ \label{l:nsame_1}
    \State \textbf{else} $k^j_2 \leftarrow k^j_1$ \label{l:ds-exp-2} \textbf{end if}
    \Else \textcolor{violet}{\Comment{Try to quickly detect error in \(\mathcal{A}^j\)}}
    \State $k^j_1\sim \mathrm{Uniform}(\mathcal{A}^j)$
    \State $k^j_2\sim \mathrm{Uniform}(\mathcal{A}^j)$ \label{l:cor-select}
    \EndIf
    \Else
    \State $k^j_1 \leftarrow \hat{k}^j$ \label{l:explt1} \textcolor{violet}{\Comment{Exploit arm $\hat{k}^j$}}
    \State \textbf{if} {$Y^j = 1$ and $\mathcal{A}^j\neq \emptyset$} \textbf{then}
    \State $\quad\ \ k^j_2 \sim \mathrm{Uniform}(\mathcal{A}^j)$ \label{l:2-xxxx}
    \State \textbf{else} $\ k^j_2 \leftarrow k^j_1$ \textbf{end if} \label{l:nsame_2}
    \EndIf 
    \Statex \textbf{Output:} $k^j_1, k^j_2$
\end{algorithmic}
\end{algorithm}

\subsection{Exploitation Phase}

During the exploitation phase, player \(j\) selects \(k^j_1 = \hat{k}^j\) and \(k^j_2 = \hat{k}^j\) with probability \(1 - \varepsilon\). With the remaining probability \(\varepsilon\), she selects \(k^j_1 = \hat{k}^j\), and then uniformly selects an arm \(k^j_2\) from \(\mathcal{A}^j\) to pull (Line~\ref{l:explt1}-\ref{l:nsame_2} in Algorithm \ref{alg:2}). Then, player $j$ inserts $[1-\eta_{k^j_2}(t_2)]$ to the end of $\mathcal{Q}^j_{k^j_2}$ if $k^j_2$ is sampled from \(\mathcal{A}^j\) (Line \ref{l:p_col-0} %-\ref{l:p_col} 
in Algorithm \ref{alg:1}). 
If there exists an arm \(k \in \mathcal{A}^j\) such that \(\sum_{i \in \mathcal{Q}_k^j} i \geq \lceil 0.142L_q \rceil\), arm \(k\) is considered released, and player \(j\) removes \(k\) from \(\mathcal{A}^j\), resets \(\mathcal{Q}_k^j\) to empty, and compares \(\mathrm{LCB}^j_{\hat{k}^j}(t)\) with \(\mathrm{UCB}^j_k(t)\). 
If \(\mathrm{LCB}_{\hat{k}^j}(t) < \mathrm{UCB}_k(t)\), i.e., arm $k$ might be better than her exploitation arm $\hat{k}^j(t)$, she then sets \(\hat{k}^j = 0\) and returns to the exploration phase (Line \ref{l:return} in Algorithm \ref{alg:1}). Otherwise, it implies that arm \(\hat{k}^j\) is better than \(k\), and player $j$ will continue exploiting $\hat{k}^j$.

\begin{remark}
    Our algorithm lets the players keep updating $\mathcal{A}^j$ even in the exploitation phase, by pulling arms in $\mathcal{A}^j$ with probability $\varepsilon$. The set $\mathcal{A}^j$ is an estimation of current active players who are doing exploitation, which can be regarded as a lower bound for current active players. Hence, a correct estimation of $\mathcal{A}^j$ guarantees that her exploitation arm is good enough. By doing trade-off on parameter $\varepsilon$, we solve Challenge (ii) described in Section \ref{sec:cont}, i.e., the player can use limited cost to obtain a sufficiently accurate estimation of current players, and thus avoid the potential high cost of missing the exact optimal arm in the exploitation phase. 
\end{remark}

\section{Theoretical Analysis} \label{sec:th}

This section presents a theoretical analysis of the proposed algorithm ACE, establishing the following regret bound of $\mathcal{O}(\sqrt{T \log T} + \log T / \Delta^2)$.

\begin{theorem}
Let $\varepsilon = \min\{ \sqrt{\frac{1141m^3\ln(T) }{2T}}, \frac{1}{K}, \frac{1}{10} \}$. Then given $K$ arms and $M$ players, the regret of Algorithm \ref{alg:1} is bounded by
\begin{align*}
    R(T) \leq & \frac{576emKM\log(T)}{\Delta^2} + 96m^{3/2}M\sqrt{T\ln(T)} \\
    & + 7704m^2KM\ln(T) + (4emKM)^2  ~,
\end{align*}
where $\Delta := \min_{k\leq m}(\mu_k- \mu_{k+1})$.
\label{th:upper}
\end{theorem}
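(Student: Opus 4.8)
The plan is to decompose the total regret into per-step contributions and then classify each active player's behavior at each time step into a small number of categories, bounding the cumulative cost of each category separately. The key conceptual device is that the baseline $\sum_{t=1}^T \sum_{k \le m_t} \mu_k$ assigns to each of the $m_t$ active players the reward of one of the top-$m_t$ arms, so I would pair each active player with a ``target'' arm and measure her per-step regret as the gap between her target's mean and her realized reward $r^j(t)$. The realized reward falls short of the target for three distinct reasons: (a) the player is still in an exploration phase and has not yet locked onto a good arm; (b) the player is exploiting but suffers a collision on a given step (either from the $\varepsilon$-probability probing pulls into $\mathcal{A}^j$, or from a transient mis-coordination); or (c) the player's estimate $\mathcal{A}^j$ is temporarily wrong, causing her to exploit a suboptimal arm or to remain in correction mode. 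I would introduce a high-probability ``good event'' $\mathcal{E}$ under which all the detection guarantees of Lemma~\ref{lm:temp-step-new} hold simultaneously --- every occupied arm is correctly flagged, every released arm is correctly cleared, and no non-occupied (resp.\ non-released) arm is misclassified --- and bound the probability of $\mathcal{E}^c$ by a union bound over arms, players, and the $O(T)$ detection windows, paying the crude additive $(4emKM)^2$ term for the low-probability failure regime.

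Conditioned on $\mathcal{E}$, I would bound each of the three regret sources in turn. For source (a), the exploration cost per successful arm-identification episode is the standard explore-then-commit budget: Condition~\ref{eq:cd2} requires each candidate arm to be pulled roughly $O(\log T / \Delta^2)$ times before the confidence intervals separate, and because uniform exploration over $[K]\setminus\mathcal{A}^j$ collides with probability at most the aggregate $1/m$-per-arm bound from Remark~\ref{remark:c1}, the number of rounds needed to accumulate $N^j_k$ successful pulls inflates by only a constant factor --- this is precisely where the $1/\Delta^2$ rather than $1/\Delta$ dependence enters, since collisions dilute the effective sampling rate. Summing over the at most $O(mKM)$ identification episodes (each player can be displaced a bounded number of times as arms are released and reclaimed) yields the $O(mKM\log T/\Delta^2)$ term. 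For source (b), I would bound the expected number of colliding steps: the $\varepsilon$-probing pulls contribute an expected $\varepsilon T$ collisions per player, and with $\varepsilon = O(\sqrt{m^3 \log T / T})$ this sums across the $M$ players to the $O(m^{3/2}M\sqrt{T\log T})$ term, which is the dominant $\sqrt{T\log T}$ contribution. The residual collisions among exploring players are controlled by Remark~\ref{remark:c1}'s per-arm probability bound, contributing to the lower-order $O(m^2 KM \log T)$ term together with the bounded per-arm queue-length overheads $L_p, L_q = O(\log T)$.

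The main obstacle I anticipate is source (c), the cost of transient inaccuracy in $\mathcal{A}^j$, because this is where the asynchrony genuinely bites and where the detection-delay bookkeeping is most delicate. When a player leaves the system and releases an arm, the players who had that arm in their $\mathcal{A}^j$ do not learn immediately; they only detect the release after accumulating enough non-collisions in $\mathcal{Q}^j_k$, which takes $O(L_q/\varepsilon) = O(\log T / \varepsilon)$ steps in expectation since each probing pull lands on $k$ only with probability $\varepsilon$ times $1/|\mathcal{A}^j|$. During this detection window a player may be exploiting a strictly suboptimal arm, incurring per-step regret up to $\Delta$-scale. The crux is to argue that the number of such release events is bounded (each player joins and leaves once, so there are at most $M$ releases) and that the per-release detection delay, multiplied by the number of affected players, produces a term no larger than those already accounted for. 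I would charge each detection delay of length $O(\log T/\varepsilon)$ against the $\sqrt{T\log T}$ budget by substituting $\varepsilon = \Theta(\sqrt{m^3\log T/T})$, giving $O(\log T/\varepsilon) = O(\sqrt{T\log T/m^3})$ per event and $O(M\sqrt{T\log T})$ in aggregate, which folds cleanly into the stated bound.

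The remaining work is to verify that these three regret sources are genuinely exhaustive --- that under $\mathcal{E}$ every shortfall of a player's realized reward below her assigned target is attributable to exploration, collision, or a bounded-duration $\mathcal{A}^j$ error --- and to confirm that the target-assignment is consistent across players so that no arm in the top-$m_t$ set is double-counted. I would formalize this via an invariant stating that, under $\mathcal{E}$ and outside of detection windows, the arms occupied by exploiting players together with the arms each is about to identify cover a top-$m_t$ configuration up to the error budget, and then assemble the four terms, absorbing all constants into the stated coefficients.
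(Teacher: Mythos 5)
Your overall architecture (good event, three-way classification into exploration / collision / detection-delay costs, $\varepsilon$ trade-off) mirrors the paper's own proof, but there is a genuine gap in your source-(c) bookkeeping, and it sits exactly where the asynchrony analysis is hardest. You claim that ``each player joins and leaves once, so there are at most $M$ releases.'' This is false: releases are not caused only by departures. When a player detects that a released arm $k$ may beat her current arm (i.e.\ $\mathrm{LCB}^j_{\hat{k}^j} < \mathrm{UCB}^j_k$), she abandons exploitation and returns to exploration, thereby releasing \emph{her own} arm; that release is then detected by other players, who may release theirs, and so on. A single departure can trigger a cascade of up to $m$ releases (players $1,\dots,m$ exploit arms $1,\dots,m$; player $1$ leaves; player $2$ moves toward arm $1$, releasing arm $2$; player $3$ then abandons arm $3$; etc.). Moreover, each released arm must be detected separately by every player holding it in her set $\mathcal{A}^j$ (up to $m$ players), and each such detection costs an expected $\Theta(m\log T/\varepsilon)$ steps --- note the extra $m$, which you drop: a probing pull lands on a \emph{specific} arm of $\mathcal{A}^j$ with probability $\varepsilon/|\mathcal{A}^j|$, not $\varepsilon$. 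The paper formalizes this bookkeeping in Lemma~\ref{lm:switch-times} (at most $3m^2M$ changes of the sets $\mathcal{A}^j$ in total, driven by at most $M$ permanent departures) and Lemma~\ref{lm:temp-step-new}(iii), giving a detection-delay contribution of order $m^2M\cdot m\ln T/\varepsilon = m^3M\ln T/\varepsilon$; this is what forces the $m^{3/2}$ factor in the $\sqrt{T\ln T}$ term after optimizing $\varepsilon$. Your count ($M$ releases, delay $\log T/\varepsilon$ each) is smaller by roughly a factor of $m^2$, so the derivation as written does not establish the stated bound --- and it is internally inconsistent with your source-(a) step, where you allow each player to be ``displaced'' repeatedly: every displacement is itself a release.

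Two smaller but real issues. First, your source-(a) claim of ``at most $O(mKM)$ identification episodes'' needs exactly the same cascade bound to control how often players re-enter exploration; the paper instead sidesteps episode counting altogether by proving the cumulative pull-count bound $N^j_k(t) \le 288m\log T/\Delta^2$ (Lemma~\ref{lm:mlogT-3}) via a budget argument that exploits uniform exploration and the persistence of estimates across episodes. Second, the ``no double-counting'' invariant you defer to the end is not a routine verification: it is the paper's Lemma~\ref{lm:nsame}, and it requires both Condition~\ref{eq:cd1}'s two consecutive collision-free pulls and the structural fact that an exploiting player pulls her arm at least once in every two steps; without it, two players could simultaneously occupy the same top-$m_t$ arm and your target assignment collapses.
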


\update{The following provides a sketch of the proof for Theorem~\ref{th:upper}, and the complete version is deferred to Appendix~\ref{app:proof}.}

\begin{proofS}
Let \(\mathcal{T}^j_{\mathrm{exp}}\), \(\mathcal{T}^j_{\mathrm{explt}}\) denote the sets of time steps during which player \(j\) is in the exploration, exploitation phases, respectively. Define \(T^j_{\mathrm{exp}} := |\mathcal{T}^j_{\mathrm{exp}}|\), \(T^j_{\mathrm{explt}} := |\mathcal{T}^j_{\mathrm{explt}}|\), %, \(T^j_{\mathrm{cor}} := |\mathcal{T}^j_{\mathrm{cor}}|\) 
and $\mathcal{T}^j := \mathcal{T}^j_{\mathrm{exp}} \cup \mathcal{T}^j_{\mathrm{explt}}$.
With slight abuse of notation, we denote by \(\mathcal{A}^j(t)\) the set of occupied arms from the view of player \(j\) at time \(t\).  
Define the following event:
\begin{align*}
    & \mathcal{E}_0 := \left\{ \exists t\in \mathcal{T}^j, j\leq M, k\leq K : |\hat{\mu}^j_k(t) - \mu_k |\geq \sqrt{\frac{6\log(T)}{N^j_k(t)}} \right\} ~,
\end{align*}
and two sets of time steps:
\begin{align*}
    & \mathcal{G}^{j}_1 := \left\{ t\in \mathcal{T}^j: \exists j^{\prime}\neq j, j^{\prime}\in[M],\exists k\leq K, k\notin \mathcal{A}^j(t), \hat{k}^{j^{\prime}} = k \right\} ~,\\
    & \mathcal{G}^{j}_2 := \left\{ t\in \mathcal{T}^j: \exists k\in \mathcal{A}^j(t), \forall j^{\prime} \neq j, j^{\prime} \in [M], \hat{k}^{j^{\prime}}\ne k \right\} ~,% \\
    %& \mathcal{G}^{j}_3 := \left\{T^j_\mathrm{start} \le  t \le T^j_\mathrm{end}: t\in \mathcal{T}^j_{\mathrm{cor}} \right\} ~.
\end{align*}
Here $\mathcal{E}_0$ denotes that the estimated reward significantly deviates from the expected reward at some time step. \(\mathcal{G}_1^j\) denotes the set of time steps during which an arm \(k\) is occupied by player \(j'\) but has not yet been discovered by player \(j\). \(\mathcal{G}_2^j\) denotes the set of time steps during which an arm \(k \in \mathcal{A}^j(t)\) has been released but but remains undiscovered to player \(j\). %\(\mathcal{G}_2^j\) denotes the set of time steps when player $j$ is during the correction phase. 

Then we can decompose the regret as follows:
\begin{align}
    \!\!\!\!\!\!\!\!\!\!R(T) &= \sum_{t=1}^T \sum_{k \leq m_t}\mu_k -\mathbb{E}\left[\sum_{t=1}^{T}\sum_{j: T_\mathrm{start}^j \le t \le T_\mathrm{end}^j } r^j(t)\right] \label{eq:new-x} \\
    &\update{= \sum_{t=1}^T \sum_{k =1}^{m_t} \mu_k -\mathbb{E}\left[\sum_{t=1}^{T}\sum_{j: T_\mathrm{start}^j \le t \le T_\mathrm{end}^j } X_{\pi^j(t)}(t) [1- \eta^j(t) ] \right]} \nonumber \\
    &\update{\le \sum_{t=1}^T \sum_{k =1}^{m_t} \mu_k -\mathbb{E}\left[\sum_{t=1}^{T}\sum_{j: T_\mathrm{start}^j \le t \le T_\mathrm{end}^j } X_{\pi^j(t)}(t) [1- \eta^j(t) ] \mathds{1}[ \pi^j(t) \le m_t ] \right]} \label{eq:new-y} \\
    &\update{= \sum_{t=1}^T \sum_{k =1}^{m_t} \mu_k - \sum_{t=1}^{T}\sum_{j: T_\mathrm{start}^j \le t \le T_\mathrm{end}^j } \mu_{\pi^j(t)} \mathbb{E}\left[ \mathds{1}[\eta^j(t)=0 , \pi^j(t) \le m_t ] \right]} \nonumber \\
    &\update{= \sum_{t=1}^T \sum_{k =1}^{m_t} \mu_k\left(\mathbb{E}\left[1 - \sum_{j: T_\mathrm{start}^j \le t \le T_\mathrm{end}^j } \mathds{1}[\eta^j(t)=0 , \pi^j(t) = k,\pi^j(t) \le m_t ] \right]\right)}  \label{eq:new1} \\
    &\update{\le  \sum_{t=1}^T \sum_{k =1}^{m_t} \left(\mathbb{E}\left[1 - \sum_{j: T_\mathrm{start}^j \le t \le T_\mathrm{end}^j } \mathds{1}[\eta^j(t)=0 , \pi^j(t) = k,\pi^j(t) \le m_t ] \right]\right)} \nonumber \\
    &\le \sum_{t=1}^T \left(m_t - \mathbb{E}\left[\sum_{j: T_\mathrm{start}^j \le t \le T_\mathrm{end}^j } \mathds{1}[\pi^j(t) \le m_t, \eta^j(t) = 0] \right] \right) \label{eq:dcmp-1} \\
    &\le \sum_{t=1}^T \sum_{j: T_\mathrm{start}^j \le t \le T_\mathrm{end}^j }  \mathbb{E}\left[1 - \mathds{1}[\pi^j(t) \le m_t, \eta^j(t) = 0] \right] \label{eq:dcmp-1.5}  \\
    &\le \sum_{j \le M}\sum_{t=T_\mathrm{start}^j}^{T_\mathrm{end}^j}   \mathbb{E}\left[1 - \mathds{1}[\pi^j(t) \le m_t, \eta^j(t) = 0] \right] \label{eq:dcmp-2} \\
    &\le \sum_{j \le M}\sum_{t=T_\mathrm{start}^j}^{T_\mathrm{end}^j}   \mathbb{E}\left[1 - \mathds{1}[\pi^j(t) \le m_t, \eta^j(t) = 0] \,\middle|\, \overline{\mathcal{E}_0}\, \right] + \sum_{j \le M} T^j \Pr[\mathcal{E}_0] \label{eq:new-dcmp} \\
    &\le \underbrace{\sum_{j \le M} \mathbb{E}\left[|\mathcal{G}^{j}_2| \,\middle|\, \overline{\mathcal{E}_0} \right]}_{\va} 
    + \underbrace{\sum_{j \le M}\sum_{t \in \mathcal{T}^j_{\mathrm{exp}}}  \mathbb{E}\left[\mathds{1}[t \notin \mathcal{G}^{j}_1 \cup \mathcal{G}^{j}_2] \,\middle|\, \overline{\mathcal{E}_0} \right]}_{\vb} \nonumber \\
    &\quad+ \underbrace{\sum_{j \le M}\sum_{t \in \mathcal{T}^j_{\mathrm{exp}}}  \mathbb{E}\left[\mathds{1}[t \in \mathcal{G}^{j}_1] \,\middle|\, \overline{\mathcal{E}_0} \right]}_{\vc} +\underbrace{\sum_{j \le M} T^j \Pr[\mathcal{E}_0]}_{\ve} \nonumber \\
    &\quad+ \underbrace{\sum_{j \le M}\sum_{t \in \mathcal{T}^j_{\mathrm{explt}}}  \mathbb{E}\left[\left(1 - \mathds{1}[\pi^j(t) \le m_t, \eta_{\pi^j(t)}(t) = 0]\right)\mathds{1}[t \notin \mathcal{G}^{j}_2] \,\middle|\,  \overline{\mathcal{E}_0} \right]}_{\vd} ~. \nonumber
\end{align}
\update{The intuition from \eqref{eq:new-x} to \eqref{eq:dcmp-1} follows from the facts that} $\mu_k \le 1$ for all $k \in [K]$ and $\mu_1 > \mu_2 > \dots > \mu_K$, which implies that the first $m_t$ arms are optimal. Consequently, the regret at each time step is decomposed into the total number of players minus the number of players who select arms $\pi^j(t) \le m_t$ and do not experience collisions. Specifically, \eqref{eq:new-y} results from the omission of the event $\pi^j(t) > m_t$ and \eqref{eq:new1} holds because there is at most one player $j$ with $\eta^j(t)=0, \pi^j(t) = k,\pi^j(t) \le m_t$, and $\mu_k \le 1$. 
%
%\update{For the left decompisition,} 
Then \eqref{eq:dcmp-1.5} is because that $|\{j: T_\mathrm{start}^j \le t \le T_\mathrm{end}^j \}| = m_t$, and \eqref{eq:dcmp-2} holds by exchanging the summation. %\update{The detailed decomposition from \eqref{eq:new-dcmp} to the final inequality can be found in Appendix \ref{app:dcmp}.}

By Hoeffding's Inequality (Lemma \ref{lm:hfd}), $\Pr[\mathcal{E}_0] \leq 2KM/T$. Thus, \ve\ is upper bounded by $2KM^2$.

\(\vb\) corresponds to the regret from exploring to identify optimal arms when the occupied arm set \(\mathcal{A}^j(t)\) is correctly estimated (i.e., \(t \notin \mathcal{G}^j_1 \cup \mathcal{G}^j_2\) ). The following lemma provides an upper bound of $\vb$.
\begin{restatable}{lemma}{Appvb}
\label{lm:vb}
Given \(K\) arms and \(M\) players, \(\vb\) is bounded as
\[
\vb \leq \frac{576emKM\log(T)}{\Delta^2} + 12e^2m^2K^2M + 2KM^2 + \sum_{j\leq M} \varepsilon T^j_{\mathrm{exp}} ~.
\]
\end{restatable}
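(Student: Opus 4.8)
The plan is to bound, for each player $j$ separately, the expected number of exploration steps on which her occupied-arm estimate is correct, and then sum over $j \le M$. The natural first move is to split every exploration round according to the Bernoulli variable $Y^j$ of Algorithm~\ref{alg:2}: the \emph{checking} steps, on which $k^j_2$ is drawn from $\mathcal{A}^j$ (this occurs with probability $\varepsilon$ per round and burns one time step without contributing to the estimation of $[K]\setminus\mathcal{A}^j$), and the \emph{regular} steps, on which $k^j_1=k^j_2$ is drawn uniformly from $[K]\setminus\mathcal{A}^j$. Because the $\varepsilon$-sampling is independent of the reward estimates, the checking steps contribute at most $\sum_{j\le M}\varepsilon T^j_{\mathrm{exp}}$ in expectation, which is precisely the last term of the claimed bound, so it remains to control the regular steps.

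For the regular steps I would use $\overline{\mathcal{E}_0}$ to convert Condition~\ref{eq:cd2} into a pull-count threshold. Since $\overline{\mathcal{E}_0}$ guarantees $|\hat{\mu}^j_k(t)-\mu_k|<\sqrt{6\log T/N^j_k(t)}$ for all $k$ and $t$, a short computation shows that once every available arm $k\in[K]\setminus\mathcal{A}^j$ has accumulated $N_0:=\lceil 96\log T/\Delta^2\rceil$ successful (non-collided) pulls, the confidence intervals of any two available arms separate by their true gap; hence the best available arm has its $\mathrm{LCB}$ above every other available arm's $\mathrm{UCB}$, Condition~\ref{eq:cd2} holds for it, and (again under $\overline{\mathcal{E}_0}$, using that the true means are ordered) for no suboptimal arm. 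Consequently the player is forced to leave exploration as soon as all available arms reach $N_0$ successful pulls and a regular round realizes $k^j_1=k^j_2=k^{\ast}$ with both pulls collision-free, which is exactly Condition~\ref{eq:cd1}.

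The core of the argument is then to upper bound the number of regular rounds needed to cross the $N_0$-per-arm threshold while the estimate is correct. On a correct-estimate step ($t\notin\mathcal{G}^j_1\cup\mathcal{G}^j_2$) every genuinely occupied arm lies in $\mathcal{A}^j$, so the uniform draw $k^j_1\in[K]\setminus\mathcal{A}^j$ can only collide with \emph{other exploring} players — exploiters reach $[K]\setminus\mathcal{A}^j$ solely through their own $\varepsilon$-checks, a negligible contribution. Invoking Remark~\ref{remark:c1}, each of the at most $m-1$ other exploring players selects any fixed arm with probability at most $1/m$, so a union bound yields a per-round non-collision probability of at least $1/m$, and since $|[K]\setminus\mathcal{A}^j|\le K$, each fixed available arm receives a successful pull with probability at least $1/(Km)$ per regular round. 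A multiplicative Chernoff bound over $R=\Theta(e\,mKN_0)$ rounds then shows that all $K$ available arms simultaneously exceed $N_0$, with the failure probability contributing only the lower-order $O(e^2m^2K^2)$ term; the extra geometric wait for a collision-free realization of Condition~\ref{eq:cd1} on $k^{\ast}$ is $O(mK)$ in expectation and is absorbed into the same lower-order term. Counting two time steps per round and summing over $j\le M$ produces the leading term of order $mKM\log T/\Delta^2$, with the explicit constant $576$ emerging from the confidence width $6\log T$ together with the collision slack; the residual $2KM^2$ term is the standard cost of conditioning on $\overline{\mathcal{E}_0}$ via Hoeffding's inequality (Lemma~\ref{lm:hfd}).

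The step I expect to be the main obstacle is the collision accounting in the third paragraph. The naive pairwise collision bound gives only a non-collision probability of $1-2m/K$, which degenerates as $m\to K/2$; the usable $1/m$ lower bound must instead be extracted from the per-arm selection bound of Remark~\ref{remark:c1}, and it must feed a concentration argument that holds \emph{uniformly over all $K$ arms} rather than for a single bottleneck arm. The other delicate point is the bookkeeping that progress is only charged on correct-estimate steps: I must argue that successful pulls accrued on incorrect-estimate steps can only accelerate reaching the threshold and therefore never inflate the quantity being bounded, so that restricting to $t\notin\mathcal{G}^j_1\cup\mathcal{G}^j_2$ is legitimate.
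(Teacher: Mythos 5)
Your decomposition starts the same way as the paper's (the $\varepsilon$-checking steps give $\sum_{j\le M}\varepsilon T^j_{\mathrm{exp}}$, and $\overline{\mathcal{E}_0}$ converts Condition~\ref{eq:cd2} into the threshold $N_0 = 96\log T/\Delta^2$, which is the paper's Lemma~\ref{lm:exp-times}), but the core of your argument has a genuine gap: you bound the length of a \emph{single} exploration phase and stop there. The term \(\vb\) sums over \emph{all} exploration steps of player $j$ during the horizon, and in this asynchronous setting a player re-enters exploration every time an arm she believed occupied is released (Line~\ref{l:return} of Algorithm~\ref{alg:1}); this can happen up to order $m$ times per player. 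Although the counts $N^j_k$ persist across phases, each re-entry forces a fresh round of \emph{uniform} exploration over $[K]\setminus\mathcal{A}^j$ — to give the newly released arm its $N_0$ successful pulls, every other available arm must be re-pulled about $N_0$ more times as well — so the per-arm counts grow to order $mN_0$, not $N_0$. This repeated-exploration budget is exactly where the paper's factor $m$ in the leading term $576emKM\log T/\Delta^2$ comes from (Lemma~\ref{lm:mlogT-3}, whose claims (i)--(ii) formalize the catch-up argument). Your proposal attributes the factor $m$ to a different source entirely and never addresses re-entries, so as written it does not bound \(\vb\).

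The second problem is that your collision accounting is too lossy to survive a correct treatment of repeated phases. You lower-bound the per-round non-collision probability by a union bound, $1-(m-1)\cdot\frac1m = \frac1m$, whereas the paper uses the product bound $\prod_{j'\neq j}(1-\frac1m)\ge(1-\frac1m)^m\ge\frac{1}{2e}$ — a \emph{constant} (this is where the $e$ in the leading term comes from, via the collided-to-successful ratio $\frac{1-1/2e}{1/2e}$). With your $\frac1m$ bound, a single phase already costs $\Theta(mKN_0)$ rounds; multiplying by the unavoidable factor $m$ from repeated phases gives $\Theta(m^2KM\log T/\Delta^2)$, which overshoots the claimed bound. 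In other words, your two deviations from the paper's argument (weak collision bound, missing phase accounting) coincidentally cancel in the order of magnitude, but neither step is individually valid, so the proof does not go through. A similar bookkeeping issue affects your treatment of the wait for Condition~\ref{eq:cd1}: that wait is incurred once per phase transition, and the paper bounds it by $\mathbb{E}[T_c]\le 4e^2K^2$ (using the constant non-collision probability, squared for two consecutive successes) times the total number of $\mathcal{A}^j$-changes, $3m^2M$, from Lemma~\ref{lm:switch-times}; your $O(mK)$-per-player estimate with absorption into an $O(e^2m^2K^2)$ term is not justified by the argument you give.
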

The bound follows from a standard \(\mathcal{O}(\log(T)/\Delta^2)\) sample complexity for distinguishing two arms. The fact that there are totally \(M\) players and each of them needs to explore \(K\) arms contributes the \(KM\) factor. The additional multiplicative factor \(m\) accounts for repeated exploration. For example, at some time step $t$, \(m-1\) players are exploiting the first \(m-1\) arms. Player \(j\) then joins, explores every arm in $\{m,m+1,\cdots,K\}$ for \(\mathcal{O}(\log(T)/\Delta^2)\) times, and enters exploitation phase with $\hat{k}^j=m$ and $\mathcal{A}^j(t) = \{1,2,\cdots,m-1\}$. 
Then a player $j^\prime$ leaves the system and releases arm \(m-1\).
After a while, player \(j\) finds that arm \(m-1\) is released, removes arm $m-1$ from $\mathcal{A}^j(t)$ and re-enters the exploration phase. 
Now she still needs to uniformly pull all the available arms again, including previously distinguished sub-optimal ones (as highlighted in Remark \ref{remark:c1}, the uniform exploring is very important). 
That is, to distinguish that arm $m-1$ is better than arm $m$, she needs to pull all the arms in $\{m,m+1,\cdots,K\}$ for another \(\mathcal{O}(\log(T)/\Delta^2)\) times. 
This process can repeat up to \(m\) times, resulting in the multiplicative factor $m$. The last term \(\sum_{j \leq M} \varepsilon T^j_{\mathrm{exp}}\) arises from the process in which players pull occupied arms in \(\mathcal{A}^j(t)\) with probability \(\varepsilon\).

\va\ and \vc\ denote the regret incurred due to incorrect estimation of the occupied arm set $\mathcal{A}^j$. These two terms are jointly bounded as follows:
\begin{restatable}{lemma}{Appvavc}
\label{lm:va-vc}
Given \(K\) arms and \(M\) players, \(\va+\vc\) is bounded as
\begin{equation*}
    \va + \vc  \leq  \frac{1141m^3M\ln(T)}{\varepsilon} +  3852m^2KM\ln(T) + 4KM^2 ~.
\end{equation*}
\end{restatable}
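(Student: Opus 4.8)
The plan is to treat both \(\va\) and \(\vc\) as \emph{detection-delay} terms: \(\va\) measures the time a player wrongly keeps a released arm in \(\mathcal{A}^j\) (an over-count), while \(\vc\) measures the time a player misses an arm genuinely occupied by someone else (an under-count). For each I would bound the total delay as (number of detection tasks)\,\(\times\)\,(delay per task), plus a low-probability failure term. The two tasks are governed by the queues \(\mathcal{Q}^j_k\) and \(\mathcal{P}^j_k\) respectively, so the two delays scale differently — one with \(1/\varepsilon\), the other with \(K\) — which is exactly the split seen in the two leading terms of the bound.

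First I would bound the number of detection tasks. The central structural claim is that the total number of phase transitions into and out of exploitation, summed over all players, is \(O(mM)\). By the pigeonhole principle and Remark~\ref{remark:c1}, an exploiting player's arm always has index at most \(m\), since \(|\mathcal{A}^j|\le m-1\) during regular exploration forces one of the top \(m\) arms to be available. Conditioned on \(\overline{\mathcal{E}_0}\), whenever a player voluntarily returns to exploration (Line~\ref{l:return}) she re-enters exploitation with either a strictly better arm (index strictly decreasing inside \(\{1,\dots,m\}\)) or, if the triggering released arm was under-explored and hence had a loose \(\mathrm{UCB}^j_k\), the same arm after a self-correcting re-exploration. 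This yields at most \(m\) genuine improvements per player, and combined with the count of spurious returns (handled below) gives \(O(m)\) episodes per player, hence \(O(mM)\) occupy/release events overall. Since each event is relevant only to players that are simultaneously active, Assumption~\ref{Assumption:active_players} caps the affected players per event at \(m\), so the total number of detection tasks entering \(\va\) and \(\vc\) is \(O(m^2M)\).

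Next I would bound the delay per task using the concentration behind Lemma~\ref{lm:temp-step-new}. For \(\va\), a player probes a released arm in \(\mathcal{A}^j\) only through the \(\varepsilon\)-sampling step, pulling any fixed arm of \(\mathcal{A}^j\) with probability at least \(\varepsilon/|\mathcal{A}^j|\ge \varepsilon/m\) per round; filling \(\mathcal{Q}^j_k\) enough to cross the \(\lceil 0.142L_q\rceil\) threshold needs \(O(L_q)\) such pulls, so with high probability the removal happens within \(O(L_q m/\varepsilon)=O(m\ln(T)/\varepsilon)\) steps, and multiplying by the \(O(m^2M)\) tasks gives the leading \(O(m^3M\ln(T)/\varepsilon)\) contribution. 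For \(\vc\), occupancy is detected during exploration where the arm is pulled twice with probability at least \((1-\varepsilon)/K\) per round, so filling \(\mathcal{P}^j_k\) to \(\lceil 0.85L_p\rceil\) takes \(O(L_p K)=O(K\ln T)\) steps, yielding the \(O(m^2KM\ln T)\) contribution. Here I must keep the threshold constants (\(0.142\) for \(\mathcal{Q}\), \(0.85\) for \(\mathcal{P}\)) bounded away from the true collision and non-collision rates, so that under \(\overline{\mathcal{E}_0}\) and Remark~\ref{remark:c1}'s collision bound, Hoeffding's inequality makes each detection both timely and correct except with probability \(O(1/T)\); summing these failures over arms and players and bounding the failed time trivially by \(T^j\le T\) produces the additive \(O(KM^2)\) term.

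The main obstacle is controlling the spurious returns in the structural step, that is, ruling out that under-explored released arms with loose confidence bounds trigger an unbounded cascade of fruitless re-explorations. The key is that each arm can trigger at most one false alarm per player: a false alarm forces re-exploration of that arm, after which — because \(N^j_k\) is non-decreasing, being frozen but never reset while the arm sits in \(\mathcal{A}^j\) — its confidence width stays below \(\Delta/2\) forever, so under \(\overline{\mathcal{E}_0}\) and the gap \(\Delta\) it can never again satisfy \(\mathrm{LCB}^j_{\hat{k}^j}<\mathrm{UCB}^j_k\) unless it is genuinely better. Since occupied, and hence released, arms always lie in \(\{1,\dots,m\}\), this caps spurious returns at \(m\) per player and closes the \(O(m)\)-episode bound. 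A secondary subtlety, which I would dispatch with a short monotonicity observation, is that overlapping detection windows and re-occupation of an arm during a window can only shrink the relevant counts, so the per-event accounting above remains a valid upper bound.
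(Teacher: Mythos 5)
Your overall architecture --- (number of detection tasks) \(\times\) (delay per task) \(+\) (failure-probability term) --- is exactly the paper's, and your per-task delay and concentration steps are sound and match the paper's ingredients: the \(O(m\ln(T)/\varepsilon)\) removal delay and \(O(K\ln T)\) addition delay correspond to parts (iii) and (i) of Lemma~\ref{lm:temp-step-new}, and your Hoeffding failure accounting corresponds to the events \(\mathcal{E}_1,\mathcal{E}_2\) (Lemmas~\ref{lm:e1} and~\ref{lm:e2}). The genuine gap is in your structural counting step, i.e., your substitute for Lemma~\ref{lm:switch-times}. You claim each player has \(O(m)\) exploitation episodes because every voluntary return to exploration (Line~\ref{l:return}) ends either with a strictly better arm or is a once-per-arm ``false alarm.'' This dichotomy omits a third case: \emph{contention/displacement}. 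A player can return to exploration because a genuinely better, well-explored arm was released (so not a false alarm under your \(\Delta/2\)-width argument), yet fail to occupy that arm because another player --- for instance a newly joined one --- reaches it first; she then re-enters exploitation with exactly the arm she held before, with no improvement in index. This case can recur without bound for a single player: let her hold arm \(2\) while a succession of \(\Theta(M)\) players join, occupy arm \(1\), and permanently depart; each departure triggers her to release arm \(2\) and re-explore (since under \(\overline{\mathcal{E}_0}\) one has \(\mathrm{UCB}^j_1 \ge \mu_1 > \mu_2 \ge \mathrm{LCB}^j_2\)), and each time the newcomer beats her back to arm \(1\). She then undergoes \(\Theta(M)\) episodes, so your intermediate claim of \(O(m)\) episodes per player, and hence \(O(mM)\) events ``overall,'' is false as stated.

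The paper's Lemma~\ref{lm:switch-times} avoids per-player accounting precisely for this reason: it anchors \emph{all} releases to the at most \(M\) permanent departures (a release can only originate, possibly through a cascade of players abandoning their arms, from some departure), bounds each departure's cascade by \(m\) exploitation-to-exploration switches, and lets each switching player remove at most \(m\) arms from her set, yielding the global bound \(\sum_{j\le M}\sum_{t\in\mathcal{T}^j}\mathds{1}[\mathcal{A}^j(t)\ne\mathcal{A}^j(t+1)]\le 3m^2M\) without ever claiming that any individual player has few episodes. In my displacement example this accounting is intact: the many episodes of the fixed player are charged to the many departures, not to her. Your final task count \(O(m^2M)\) coincides with the paper's, but the reason it is true is the departure-anchored global argument, not per-player monotonicity plus false-alarm caps. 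To repair your proof, replace your structural step with this departure-based counting (your delay, threshold-separation, and failure-probability steps can then be kept essentially unchanged, and the constants \(1141 m^3 M\ln(T)/\varepsilon\), \(3852 m^2KM\ln(T)\), and \(4KM^2\) follow as in the paper).
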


The regret \(\va\) arises when an arm \(k\) has been released, but players fail to detect this in time. As a result, they may continue to exploit a sub-optimal arm. 
One key observation here is that, if \(k \in \mathcal{A}^j(t)\) is released, when player \(j\) pulls $k$, %during the process of exploring whether arms in $\mathcal{A}^j(t)$ are still occupied, 
the non-collision probability increases from \( \varepsilon\) to approximately \(1/2e\). 
Because of this, after \(1141\ln T\) times of pulling arm $k$, player $j$ can be almost sure that arm $k$ is released. 
Since the probability of choosing this specific arm $k$ is at least $\varepsilon/m$, this period can last for  \(1141m\ln T / \varepsilon \) in expectation. 
Another important observation is that releasing arms can only happen due to a permanent departure of one player.
Each departure can cause at most $m$ times of releasing (i.e., after the other players realize that the arm is released, they may also change to exploration phase, and thus releasing their exploiting arms), and it can cause at most $m^2$ times of deleting arms in $\mathcal{A}^j(t)$ (taking sum over all the players). 
Therefore, the total number of such deletions (over all the players) is at most $\mathcal{O} (m^2M)$, and there is another factor of $\mathcal{O}(m^2M)$ in the first term of Lemma \ref{lm:va-vc}.

On the other hand, \(\vc\) captures the regret when an arm \(k\) is currently occupied but mistakenly excluded from \(\mathcal{A}^j(t)\). In this case, player \(j\) may pull it during exploration, leading to wasted effort and collisions. 
If \(k \in \mathcal{A}^j(t)\) is occupied, when player \(j\) pulls $k$, the collision probability increases from \(1-1/2e\) to \(1 - \varepsilon\).
Because of this, after \(1926\ln T\) times of pulling arm $k$, player $j$ can be almost sure that arm $k$ is occupied. 
Since the probability of choosing this specific arm $k$ is at least $1/K$, this period can last for  \(1926K \ln (T)\) in expectation. 
Similar to term \(\va\), we still need another factor of $\mathcal{O}(m^2M)$ to count for all such additions, and this leads to the second term in Lemma \ref{lm:va-vc}.

\(\vd\) denotes the regret incurred during the exploitation phase when \(t \notin \mathcal{G}^j_2\). Note that if \(t \notin \mathcal{G}^j_2\), then our algorithm makes sure that $|\mathcal{A}^j(t)| < m_t$, and thus player $j$ must be exploiting an arm $k \le m_t$. Also, no other player is able to exploit the same arm $k$ at time step $t$. In this case, the regret can appears only if: i) a player $j'$ is in the exploring phase, and she does not realized that arm $k$ is occupied; ii) a player $j'$ is trying arms in $\mathcal{A}^{j'}(t)$ with probability $\epsilon$ and collides with player $j$; iii) player $j$ is pulling arms in $\mathcal{A}^j$ with probability $\varepsilon$. Because of this, we have the following lemma. Here the first term captures the regret from case i), using a similar technique in the proof of Lemma \ref{lm:va-vc}; the third term arises because of case ii) and iii).

\begin{restatable}{lemma}{Appvd}
\label{lm:vd}
Given \(K\) arms and \(M\) players, \(\vd\) is bounded as
\begin{equation*}
    \vd \leq 3852m^2KM\ln(T) + 2KM^2 + \sum_{j \le M}  \varepsilon  ( \max_{j^\prime\leq M} T^{j^\prime}_{\mathrm{explt}} + T^j_{\mathrm{explt}} ) ~.
\end{equation*}
\end{restatable}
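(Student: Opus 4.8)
The plan is to bound $\vd$ by isolating, for each exploiting player $j$ at a step $t \in \mathcal{T}^j_{\mathrm{explt}}$ with $t \notin \mathcal{G}^j_2$, the precise reasons a unit of regret $1 - \mathds{1}[\pi^j(t) \le m_t, \eta_{\pi^j(t)}(t) = 0]$ can be incurred, and then match each reason to one of the three terms. The first step is a structural claim: conditioned on $\overline{\mathcal{E}_0}$, whenever $t \notin \mathcal{G}^j_2$ every arm of $\mathcal{A}^j(t)$ is genuinely occupied by a distinct other player, so $|\mathcal{A}^j(t)| \le m_t - 1$; combined with the ordering $\mu_1 > \cdots > \mu_K$ and the validity of the confidence bounds under $\overline{\mathcal{E}_0}$, a pigeonhole argument on Condition~\ref{eq:cd2} forces the exploited arm to satisfy $\hat{k}^j \le m_t$. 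I would also invoke the detection guarantee (Lemma~\ref{lm:temp-step-new}) together with the two-consecutive-collision-free requirement of Condition~\ref{eq:cd1} to ensure that no second player simultaneously exploits $\hat{k}^j$. With these facts, player $j$'s ``main'' pull $k^j_1 = \hat{k}^j$ is always a top-$m_t$ arm, so it contributes regret only through a collision, and the only remaining self-inflicted source of regret is player $j$'s own $\varepsilon$-probe $k^j_2 \in \mathcal{A}^j(t)$.

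This immediately partitions the regret into the three disjoint causes described in the text. The easiest is case (iii), player $j$'s own probe: an exploitation iteration draws $k^j_2$ from $\mathcal{A}^j(t)$ only when $Y^j = 1$, which happens with probability $\varepsilon$, so the expected number of such steps over the whole exploitation phase of $j$ is at most $\varepsilon T^j_{\mathrm{explt}}$; summing over $j$ yields the $\sum_{j \le M}\varepsilon T^j_{\mathrm{explt}}$ contribution. For case (ii), a collision on $\hat{k}^j$ produced by another player's $\varepsilon$-probe, the plan is to charge each such collision to the probing player: every probe pull lands on a single arm and can collide with at most one exploiter, so the total case-(ii) regret is bounded by the number of probe pulls, and bounding $\sum_{j'} T^{j'}_{\mathrm{explt}} \le M\max_{j'}T^{j'}_{\mathrm{explt}}$ gives the $\sum_{j\le M}\varepsilon \max_{j'\le M} T^{j'}_{\mathrm{explt}}$ term. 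The one subtlety here is that probe pulls also occur during a player's exploration phase (Line~\ref{l:nsame_1}); I would account for those jointly with the $\sum_j \varepsilon T^j_{\mathrm{exp}}$ term already isolated in Lemma~\ref{lm:vb}, so that no probe-induced collision is counted twice and only the exploitation-time probes remain in the $\vd$ bound.

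Case (i) is the term that needs the real work and reuses the machinery of Lemma~\ref{lm:va-vc}. It counts collisions on $\hat{k}^j$ caused by an exploring player $j'$ whose set $\mathcal{A}^{j'}$ has not yet recorded that $\hat{k}^j$ is occupied. The key quantitative observation, identical to the $\vc$ analysis, is that once $j'$ starts hitting the occupied arm its collision frequency jumps to roughly $1 - 1/2e$, so the queue $\mathcal{P}^{j'}_{\hat{k}^j}$ crosses its threshold after $\mathcal{O}(\ln T)$ collided pulls; since that arm is selected with probability at least $1/K$ in uniform exploration, detection occurs within $\mathcal{O}(K\ln T)$ exploration steps in expectation, and during this window at most that many collisions are inflicted on player $j$. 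It then remains to bound the number of distinct ``undetected-occupation episodes'' across all players: since an arm can be (re)occupied only as a consequence of a player's participation change, and each such change propagates through at most $m$ players and can be re-triggered at most $m$ times, the episode count is $\mathcal{O}(m^2 M)$. Multiplying $\mathcal{O}(K\ln T)$ by $\mathcal{O}(m^2 M)$ reproduces the leading $3852 m^2 K M \ln(T)$ term, while the residual $2KM^2$ collects the $\mathcal{O}(KM^2)$ failure mass of the queue-based detection tests, bounded through Hoeffding exactly as for $\Pr[\mathcal{E}_0]$ in term~\ve.

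The main obstacle I anticipate is the structural claim of the first step together with the episode-counting of case (i). Proving $\hat{k}^j \le m_t$ is delicate because $\mathcal{A}^{j'}$ for other players $j'$ may be transiently incorrect, so I cannot assume all exploited arms lie in $\{1,\dots,m_t\}$; the argument must rely only on the correctness of $j$'s own set under $t \notin \mathcal{G}^j_2$. Likewise, the $\mathcal{O}(m^2 M)$ episode bound requires carefully arguing that the same physical collision, charged here to the exploiting player $j$ and charged in $\vc$ to the exploring player $j'$, is legitimately counted on both sides of the regret decomposition, and that releases (not just occupations) cannot inflate the episode count beyond the departures that cause them.
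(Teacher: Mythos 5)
Your proposal follows essentially the same route as the paper's proof: the same structural claim (under $\overline{\mathcal{E}_0}$ and $t \notin \mathcal{G}^j_2$, player $j$ exploits an arm $\hat{k}^j \le m_t$ that no one else exploits -- though the paper derives the "no simultaneous exploiter" part from Condition~\ref{eq:cd1} via Lemma~\ref{lm:nsame}, not from Lemma~\ref{lm:temp-step-new}), the same three-way split of collision causes, the same $\sum_j \varepsilon T^j_{\mathrm{explt}}$ bound for player $j$'s own probes, and the same quantitative machinery for your case (i). The only stylistic difference there is that the paper does not re-derive the detection/episode bound: it observes in \eqref{eq:boundD-o-a} that \vdo\ can be rewritten, by switching to the exploring player's perspective, as exactly the quantity \vc, and then reuses the bound $\vc \le 3852m^2KM\ln(T) + 2KM^2$ from Lemma~\ref{lm:va-vc}. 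The identity you flag as a concern in your last paragraph is precisely this step, and it is legitimate: the exploiter's loss and the explorer's loss at a colliding step are two distinct summands of the theorem's regret decomposition, so counting one inside \vd\ and the other inside \vc\ is not double counting.

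The same observation, however, exposes the one genuine flaw in your plan: the proposed bookkeeping for exploration-phase probes in case (ii). You suggest absorbing collisions caused by an exploring player's $\varepsilon$-probe (Line~\ref{l:nsame_1}) into the $\sum_j \varepsilon T^j_{\mathrm{exp}}$ term of Lemma~\ref{lm:vb}, "so that only the exploitation-time probes remain in the \vd\ bound." This cannot work: Lemma~\ref{lm:vb} bounds \vb, which sums only over the \emph{prober's} own exploration steps and hence covers only the prober's lost reward; the exploiting player $j$'s lost reward at that same time step is, by definition, a term of \vd\ (it is a step $t \in \mathcal{T}^j_{\mathrm{explt}}$ with $\eta_{\hat{k}^j}(t)=1$ and $t \notin \mathcal{G}^j_2$), and no other lemma bounds it. Under your charging scheme this portion of \vd\ is never bounded, so the stated inequality would not be established. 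The paper avoids this by letting \vdt\ cover \emph{all} collisions caused by players $j'$ with $\hat{k}^j \in \mathcal{A}^{j'}(t)$, whatever phase $j'$ is in, and bounding them through the fact that such players touch arms of $\mathcal{A}^{j'}(t)$ only with probability $\varepsilon$ per step; that is where the $\sum_j \varepsilon \max_{j'} T^{j'}_{\mathrm{explt}}$ term comes from. The fix on your side is simply to keep every probe-caused collision inside case (ii) and bound it by the per-step probe probability (or charge it to the probing player, including her exploration-time probes); quantitatively this costs at most an extra $\sum_{j'} \varepsilon T^{j'}_{\mathrm{exp}}$, which is of the same order as the terms already present.
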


Putting all the terms together and setting $\varepsilon = \min\{ \sqrt{\frac{1141m^3\ln(T) }{2T}},$ $\frac{1}{K}, \frac{1}{10} \}$, we obtain the final regret bound as stated. %This completes the proof of Theorem \ref{th:upper}
\end{proofS}

\begin{remark}\label{remark:het}
While our theoretical analysis is conducted under the homogeneous reward setting, ACE can be applied to heterogeneous reward scenarios in practice, since each player independently explores and exploits arms based on her own feedback. A formal regret analysis under heterogeneous rewards is left as future work. 
\end{remark}

\begin{figure*}[th]
    \centering
    \begin{subfigure}[b]{0.3\textwidth}
        \includegraphics[width=\textwidth]{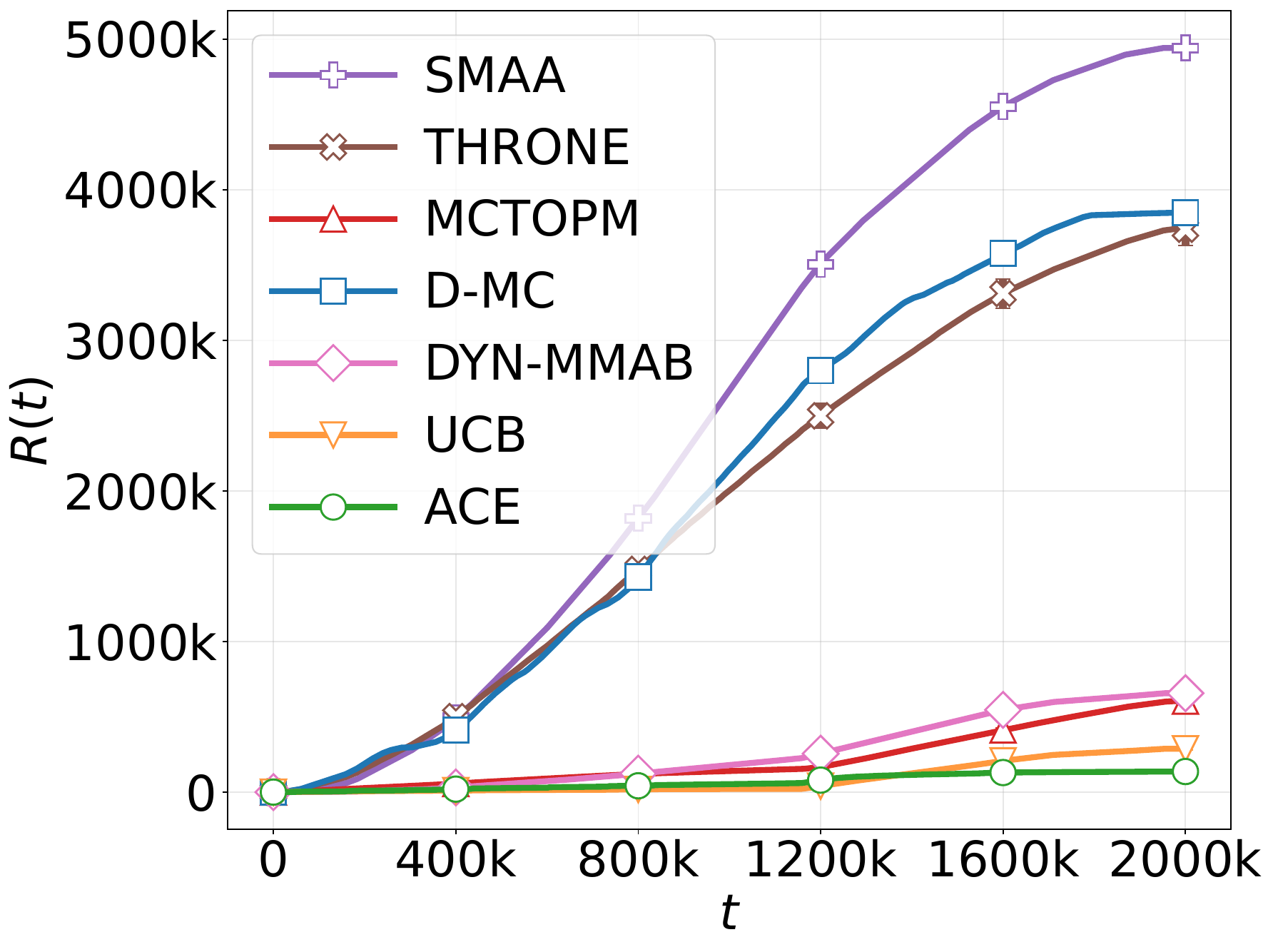}
        \caption{K=20, random.}
        \label{fig:asyn-rdm}
    \end{subfigure}
    \hfill
    \begin{subfigure}[b]{0.3\textwidth}
        \includegraphics[width=\textwidth]{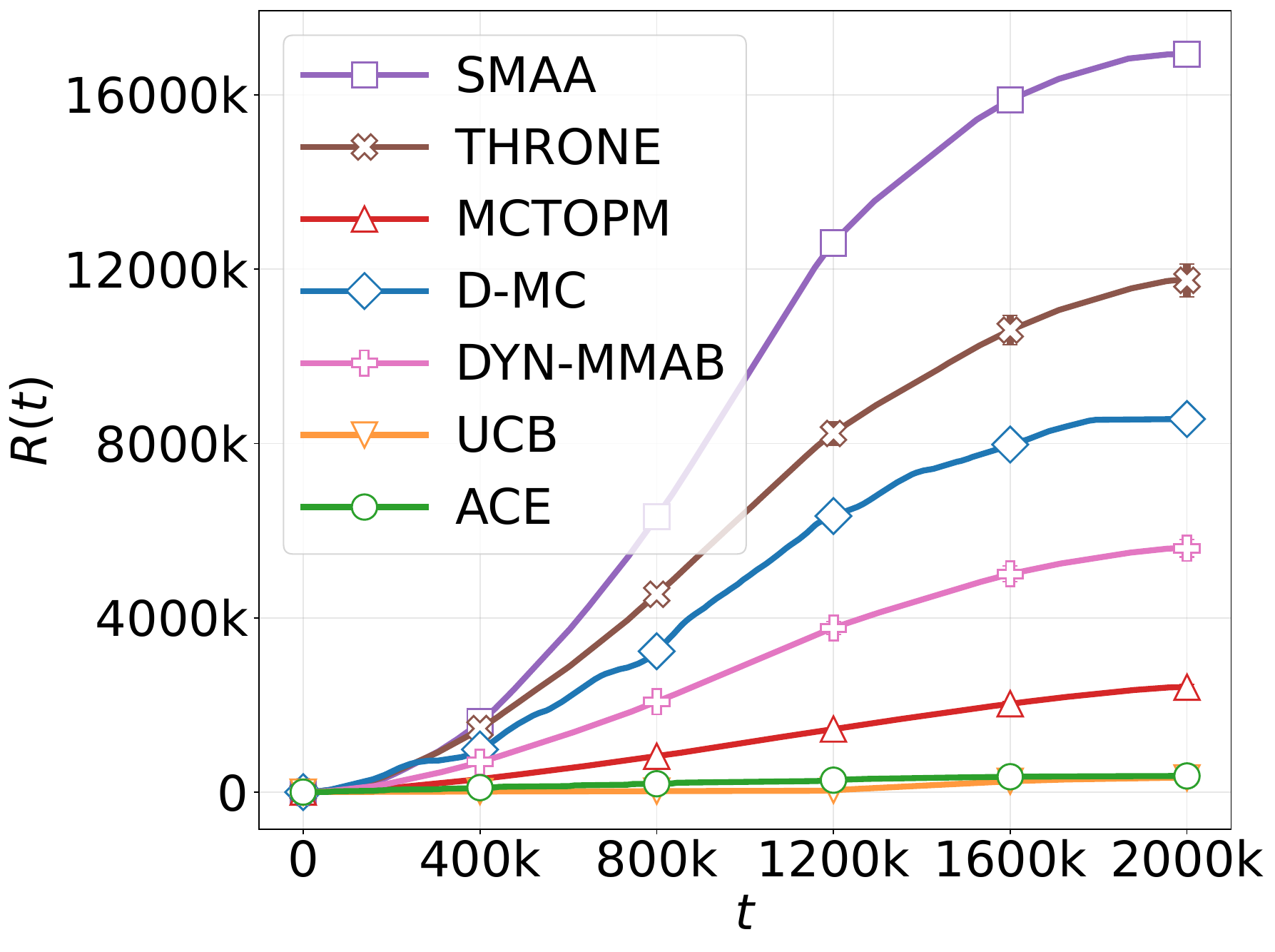}
        \caption{K=50, random.}
        \label{fig:r-k-50}
    \end{subfigure}
    \hfill
    \begin{subfigure}[b]{0.3\textwidth}
        \includegraphics[width=\textwidth]{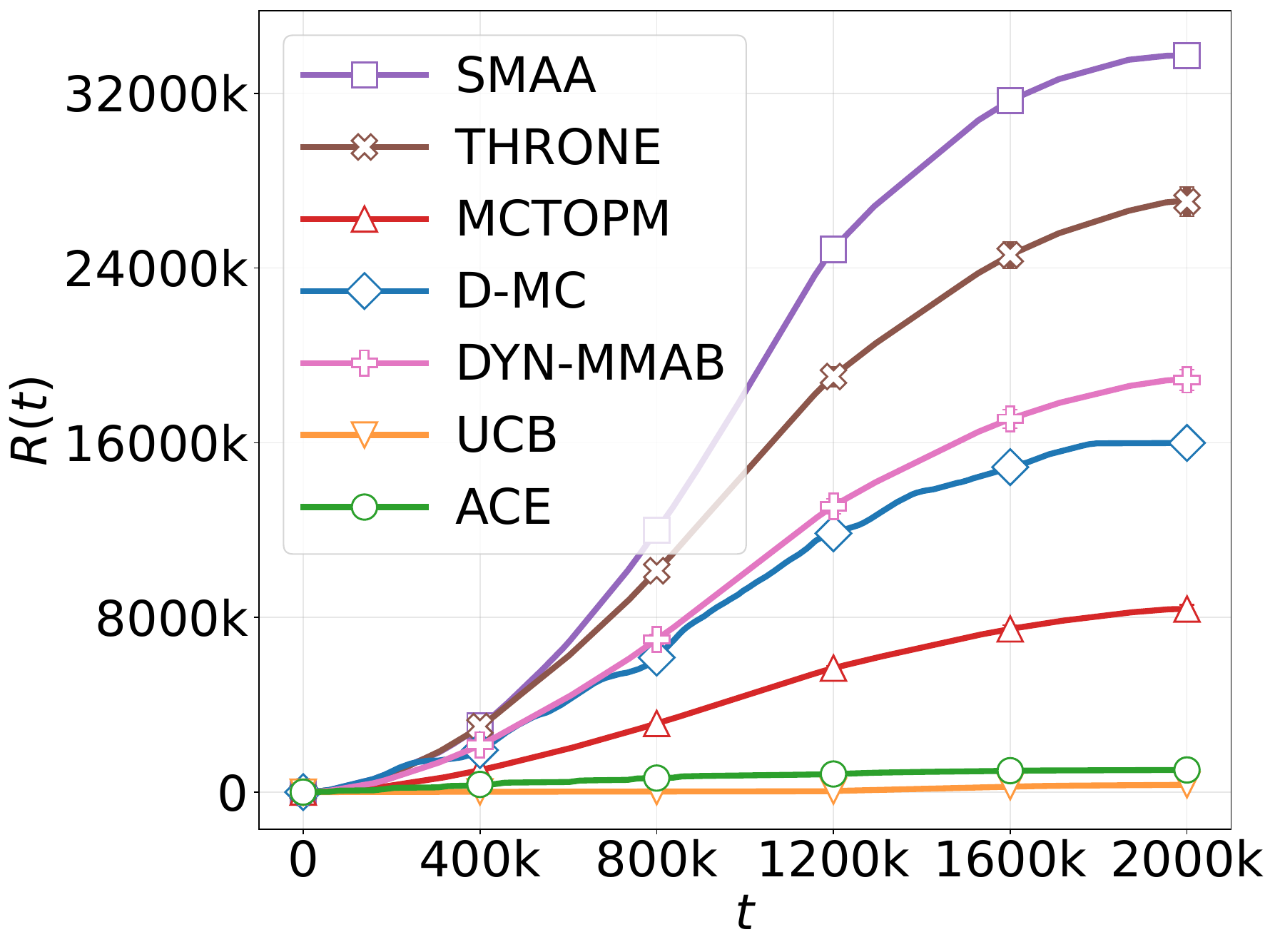}
        \caption{K=100, random.}
        \label{fig:r-k-100}
    \end{subfigure}
    \\[1.5em]
    \begin{subfigure}[b]{0.3\textwidth}
        \includegraphics[width=\textwidth]{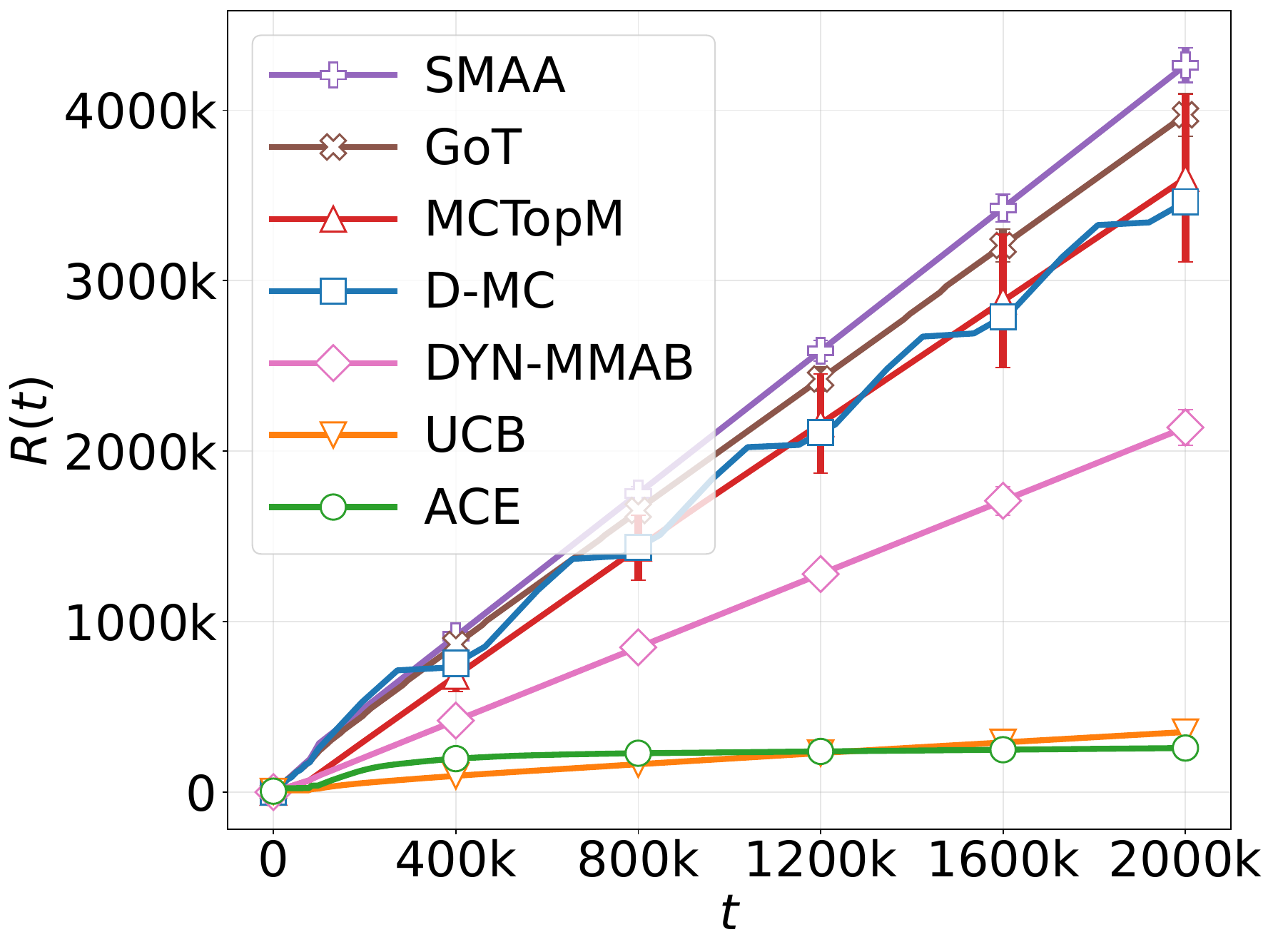}
        \caption{K=20, synthetic.}
        \label{fig:asyn-syn}
    \end{subfigure}
    \hfill
    \begin{subfigure}[b]{0.3\textwidth}
        \includegraphics[width=\textwidth]{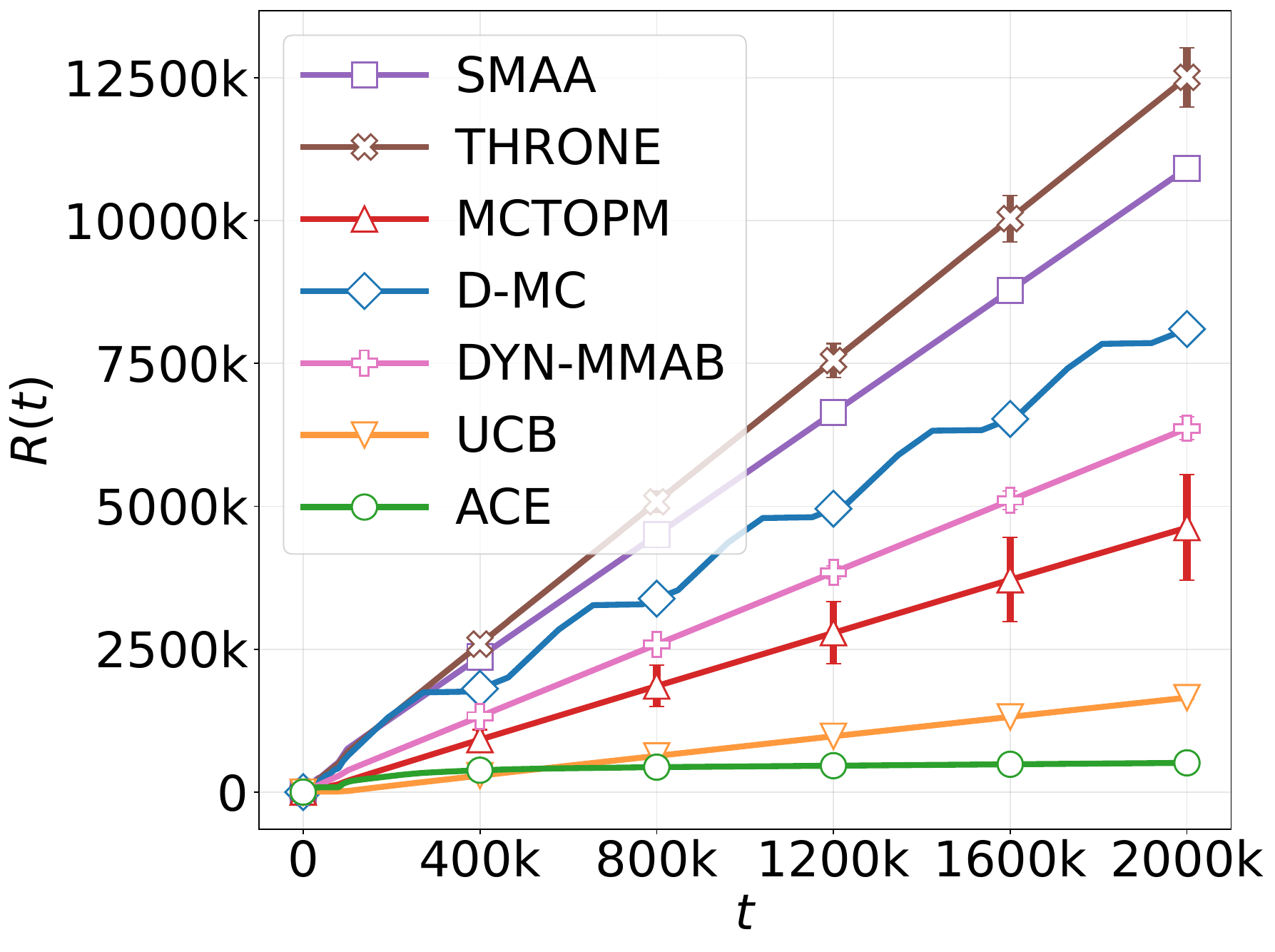}
        \caption{K=50, synthetic.}
        \label{fig:s-k-50}
    \end{subfigure}
    \hspace{0.02\textwidth}
    \begin{subfigure}[b]{0.3\textwidth}
        \includegraphics[width=\textwidth]{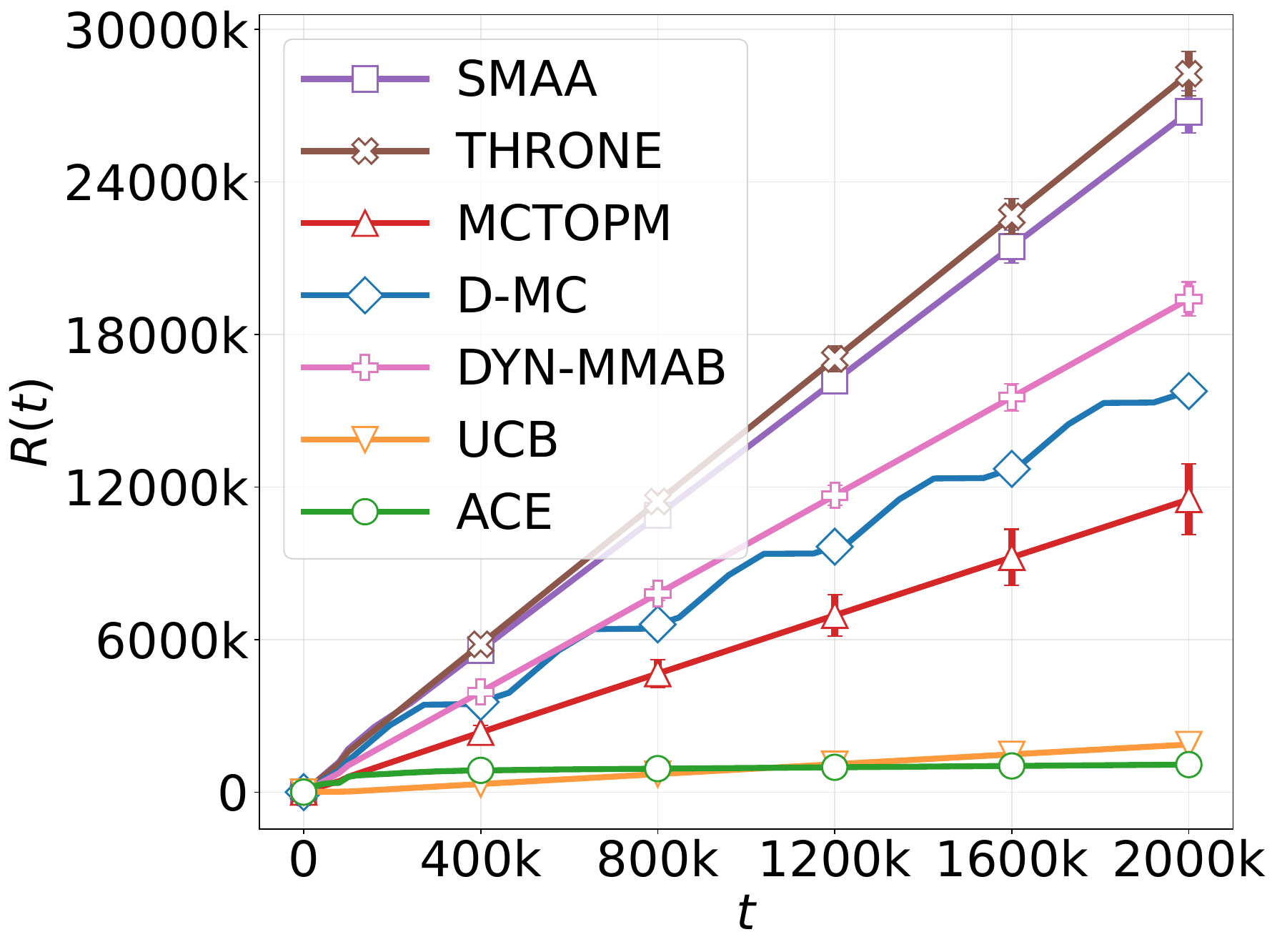}
        \caption{K=100, synthetic.}
        \label{fig:s-k-100}
    \end{subfigure}
    \vspace{0.2cm}
    \caption{Comparison of cumulative regret for different numbers of arms \(\mathbf{K}\) under different asynchronization settings.}
    \Description{Comparison of cumulative regret for different numbers of arms \(K\) under different asynchronization settings.}
    \label{fig:k}
\end{figure*}

\section{Experiments}

We conduct a series of experiments to validate our theoretical findings. Each experiment is independently repeated \update{50 times}, and the resulting standard error across runs is visualized using error bars in the plots. The proposed algorithm, ACE, is compared with Dynamic Musical Chair (D-MC) \citep{rosenski2016multi}, Game of Thrones (GoT) \citep{bistritz2018distributed}, MCTopM \citep{besson2018multi}, DYN-MMAB \citep{boursier2019sic}, SMAA \citep{xu2023competing}, \update{ SefishUCB (UCB) \citep{besson2018multi}, and Randomized Sefish UCB (RD-UCB) \citep{trinh2021high}.}
This section presents comparisons across different numbers of arms $K$ under varying asynchronous settings. Additional results for various values of players \(M\) and implementation details are reported in Appendix~\ref{app:exp}.

\begin{table}[ht]
\centering
% ---- Row 1 ---- (flipped data)
\begin{subtable}[t]{0.23\textwidth}
    \centering
    \begin{tabular}{ccc}
    \toprule
    \textbf{j} & \textbf{Start} & \textbf{End} \\
    \midrule
        1 & 431945 & 1291229 \\
        2 & 304242 & 1524756 \\
        3 & 181824 & 1183404 \\
        4 & 832442 & 1212339 \\
        5 & 20584  & 1969909 \\
        6 & 601115 & 1708072 \\
        7 & 58083  & 1866176 \\
        8 & 156018 & 1155994 \\
        9 & 731993 & 1598658 \\
        10 & 374540 & 1950714 \\
    \bottomrule
    \end{tabular}
    \\[1em]
\caption{Random setting.}
\label{tb:asyn-set-r}
\end{subtable}
\hfill
\begin{subtable}[t]{0.23\textwidth}
    \centering
    \begin{tabular}{ccc}
    \toprule
    \textbf{j} & \textbf{Start} & \textbf{End} \\
    \midrule
        1 & 1 & 100000 \\
        2 & 1 & 100000 \\
        3 & 1 & 100000 \\
        4 & 1 & 100000 \\
        5 & 80000 & 2000000 \\
        6 & 80000 & 2000000 \\
        7 & 80000 & 2000000 \\
        8 & 80000 & 2000000 \\
        9 & 1 & 2000000 \\
        10 & 1 & 2000000 \\
    \bottomrule
    \end{tabular}
    \\[1em]
\caption{Synthetic setting.}
\label{tb:asyn-set-s}
\end{subtable}
\caption{Players' active periods for the comparison across different asynchronization settings.}
\label{tb:asyn-set}
\end{table}

\textbf{Setup} $\ \ $ The experiments comparing different asynchronous settings are conducted in a Gaussian bandit environment. \update{Note that the reason why we choose Gaussian bandits rather than Bernoulli Bandits is to maintain consistent reward gaps across different numbers of arms. }
Specifically, the reward of each arm $k$ follows a Gaussian distribution \(\mathcal{N}(\mu_k, 0.5^2)\), where the smallest mean \(\mu_K\) is \(0.1\) and the gap between adjacent arms is fixed at \(0.05\). In all experiments, the number of players is fixed as \(M=10\). We evaluate the performance under \(K=20\), \(K=50\), and \(K=100\).

For the asynchronous setting, we consider two types: \textit{random} and \textit{synthetic}. 
In the random setting, each player $j$ is active between two time steps \(T^j_{\mathrm{start}} \in [0, T/2]\) and \(T^j_{\mathrm{end}} \in [T/2, T]\), which are selected uniformly at random, subject to \(T^j_{\mathrm{end}} - T^j_{\mathrm{start}} \geq T/M\). The synthetic setting, on the other hand, is manually constructed to simulate a challenging scenario: certain players holding optimal arms leave the system in the middle of the game, and the remaining players may continue exploiting arms that are no longer optimal. 
This challenges the algorithms' capacity of adaptively choosing exploitation arms. \update{The detailed active periods %for all players in the random and synthetic asynchronization setting 
are listed in Table~\ref{tb:asyn-set}. }

\begin{figure*}[th]
    \centering
    \begin{subfigure}[b]{0.3\textwidth}
        \includegraphics[width=\textwidth]{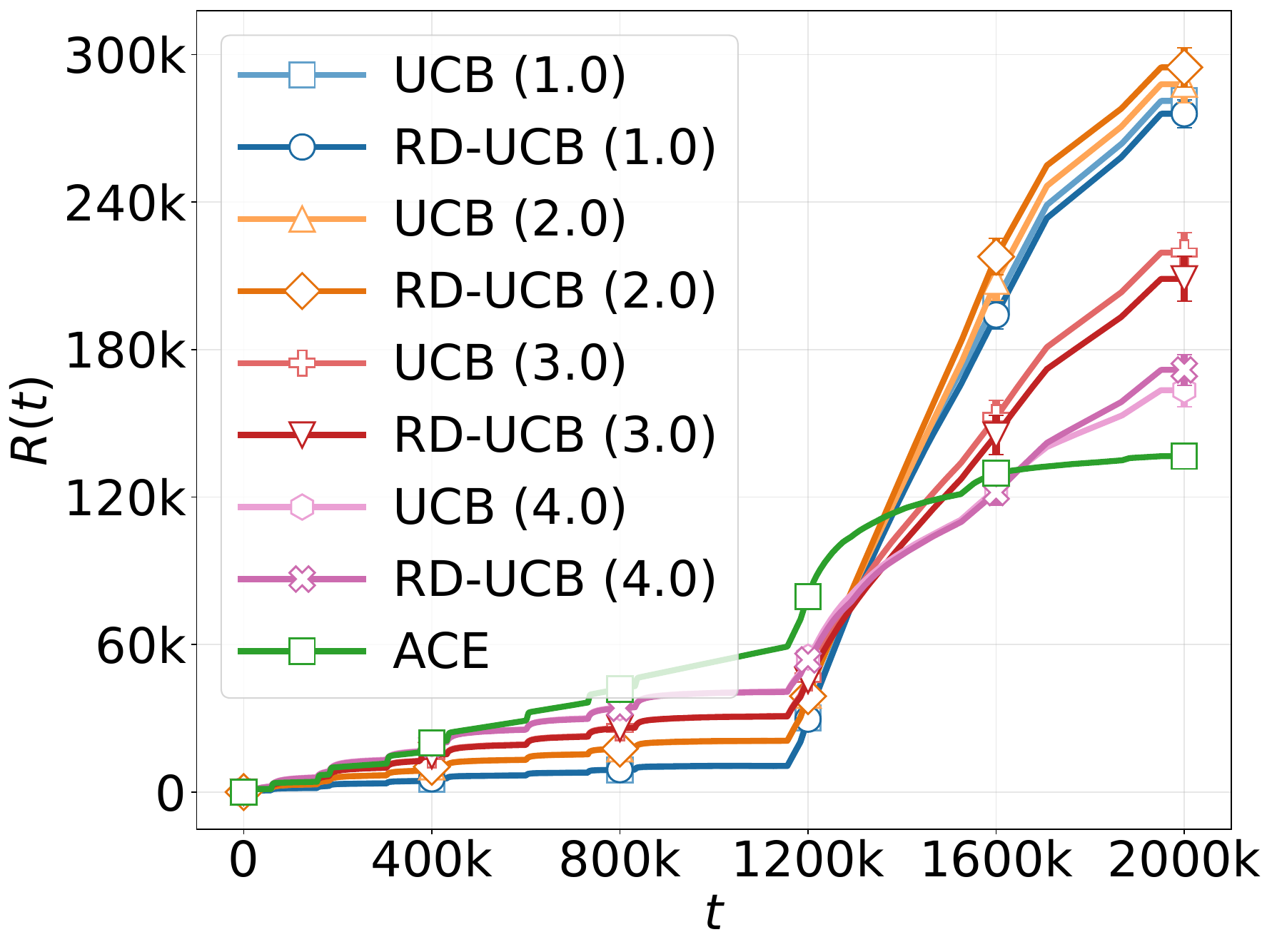}
        \caption{K=20, random, with UCBs.}
        % \label{fig:ucb-asyn-rdm}
    \end{subfigure}
    \hfill
    \begin{subfigure}[b]{0.3\textwidth}
        \includegraphics[width=\textwidth]{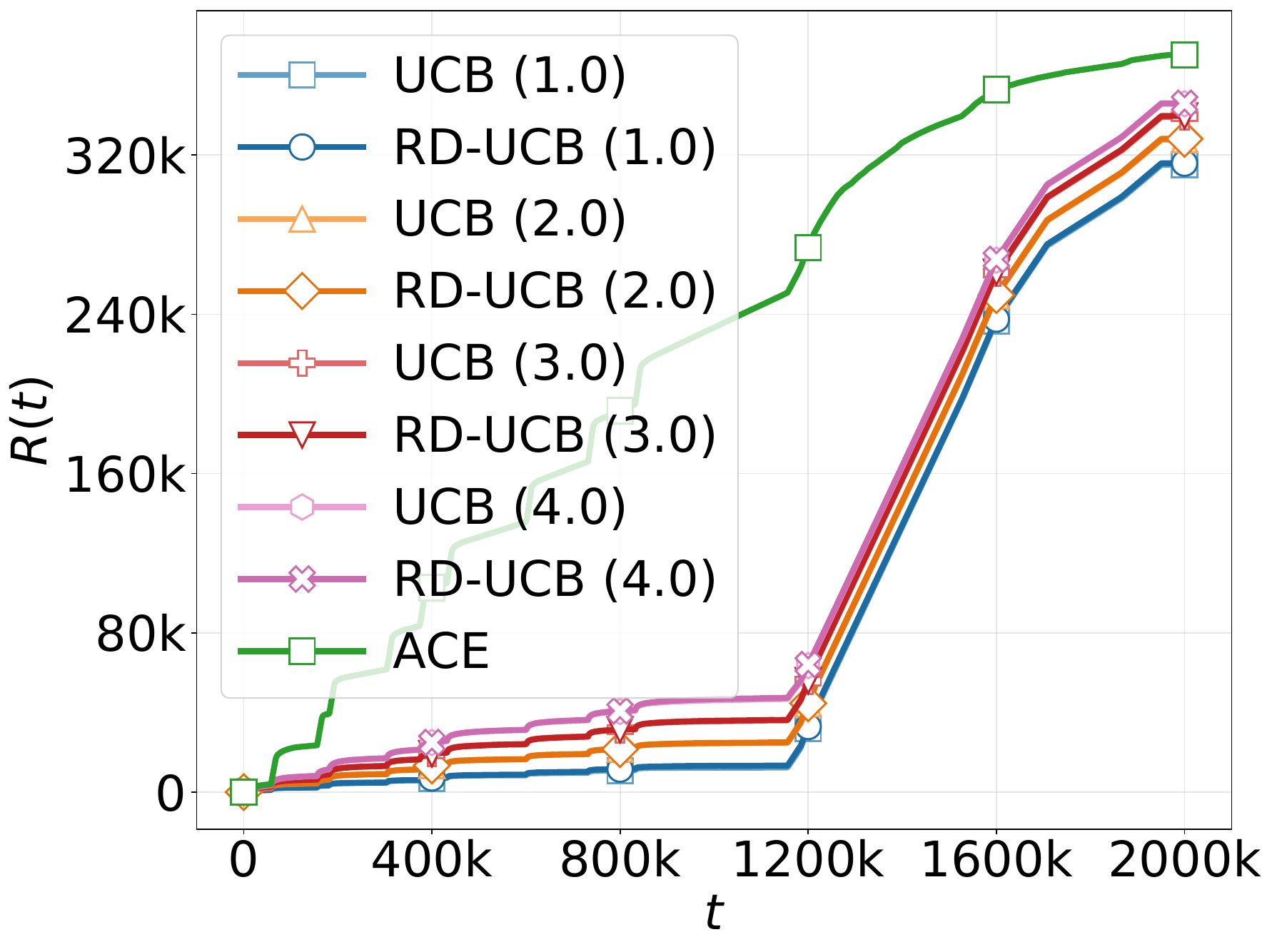}
        \caption{K=50, random, with UCBs.}
        % \label{fig:ucb-asyn-syn}
    \end{subfigure}
    \hfill
    \begin{subfigure}[b]{0.3\textwidth}
        \includegraphics[width=\textwidth]{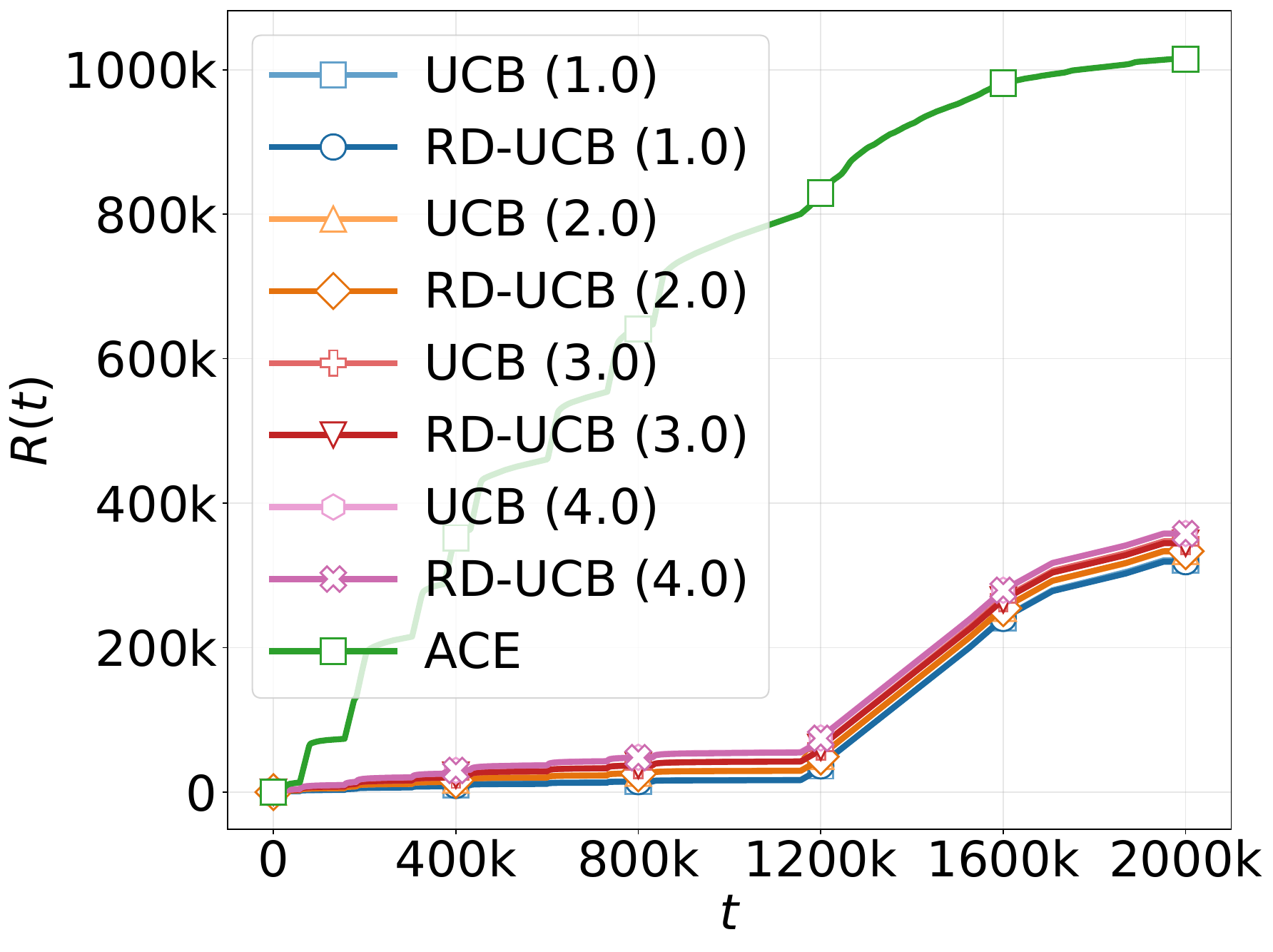}
        \caption{K=100, random, with UCBs.}
        % \label{fig:ucb-asyn-syn}
    \end{subfigure}
    \\[1.5em]
    \begin{subfigure}[b]{0.3\textwidth}
        \includegraphics[width=\textwidth]{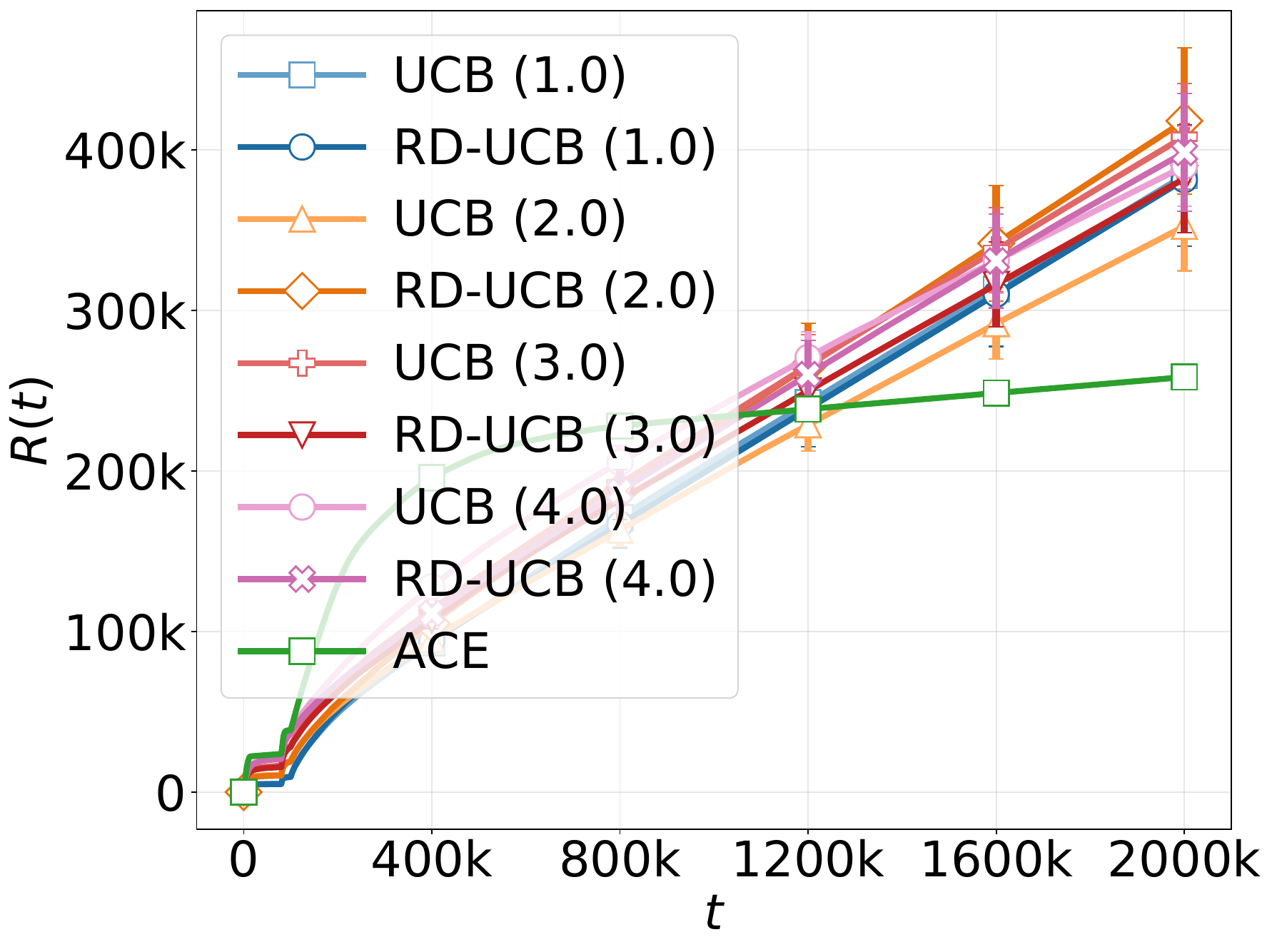}
        \caption{K=20, synthetic, with UCBs.}
        \label{fig:ucb-asyn-syn}
    \end{subfigure}
    \hfill
    \begin{subfigure}[b]{0.3\textwidth}
        \includegraphics[width=\textwidth]{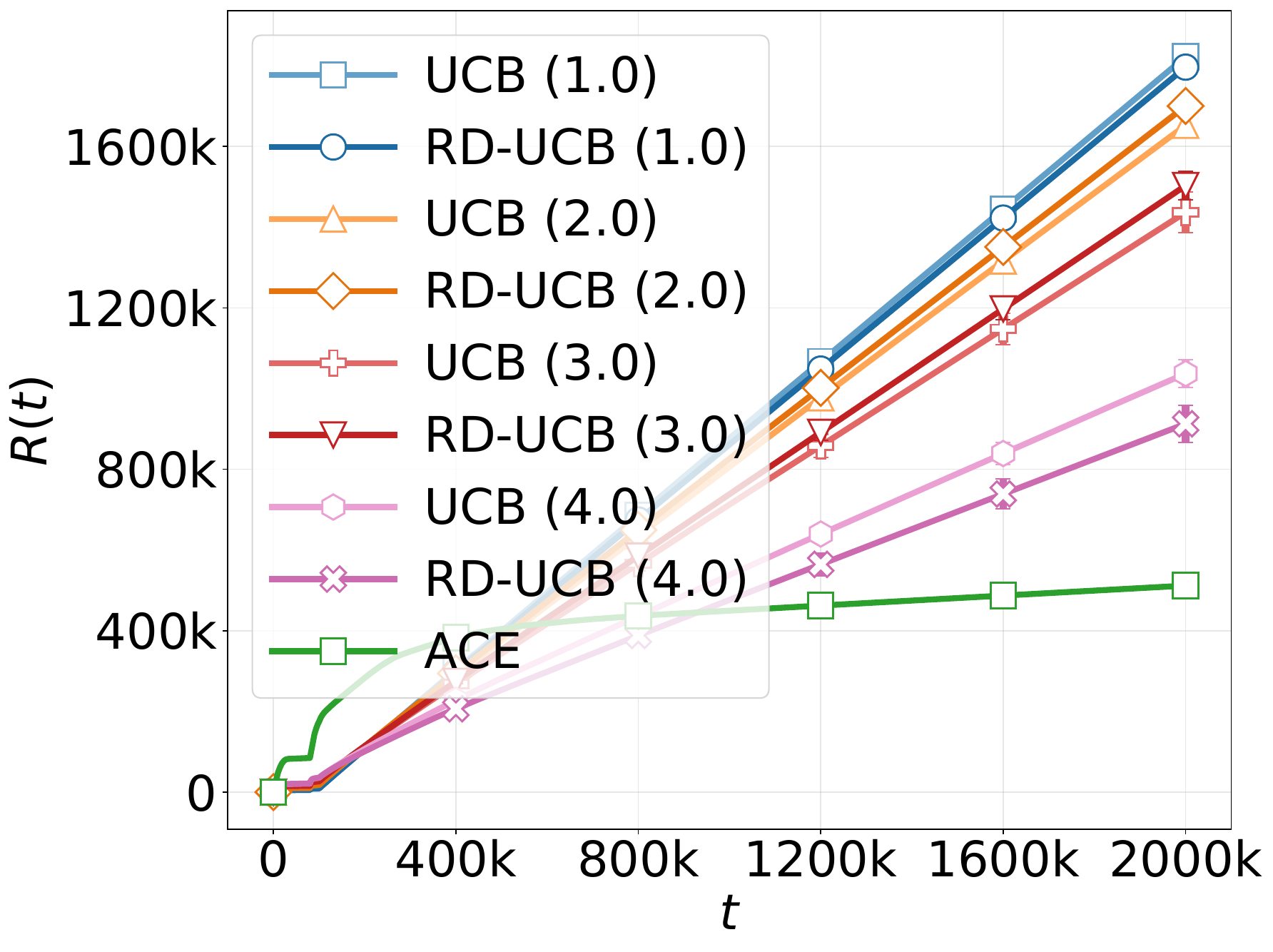}
        \caption{K=50, synthetic, with UCBs.}
        % \label{fig:ucb-asyn-syn}
    \end{subfigure}
    \hfill
    \begin{subfigure}[b]{0.3\textwidth}
        \includegraphics[width=\textwidth]{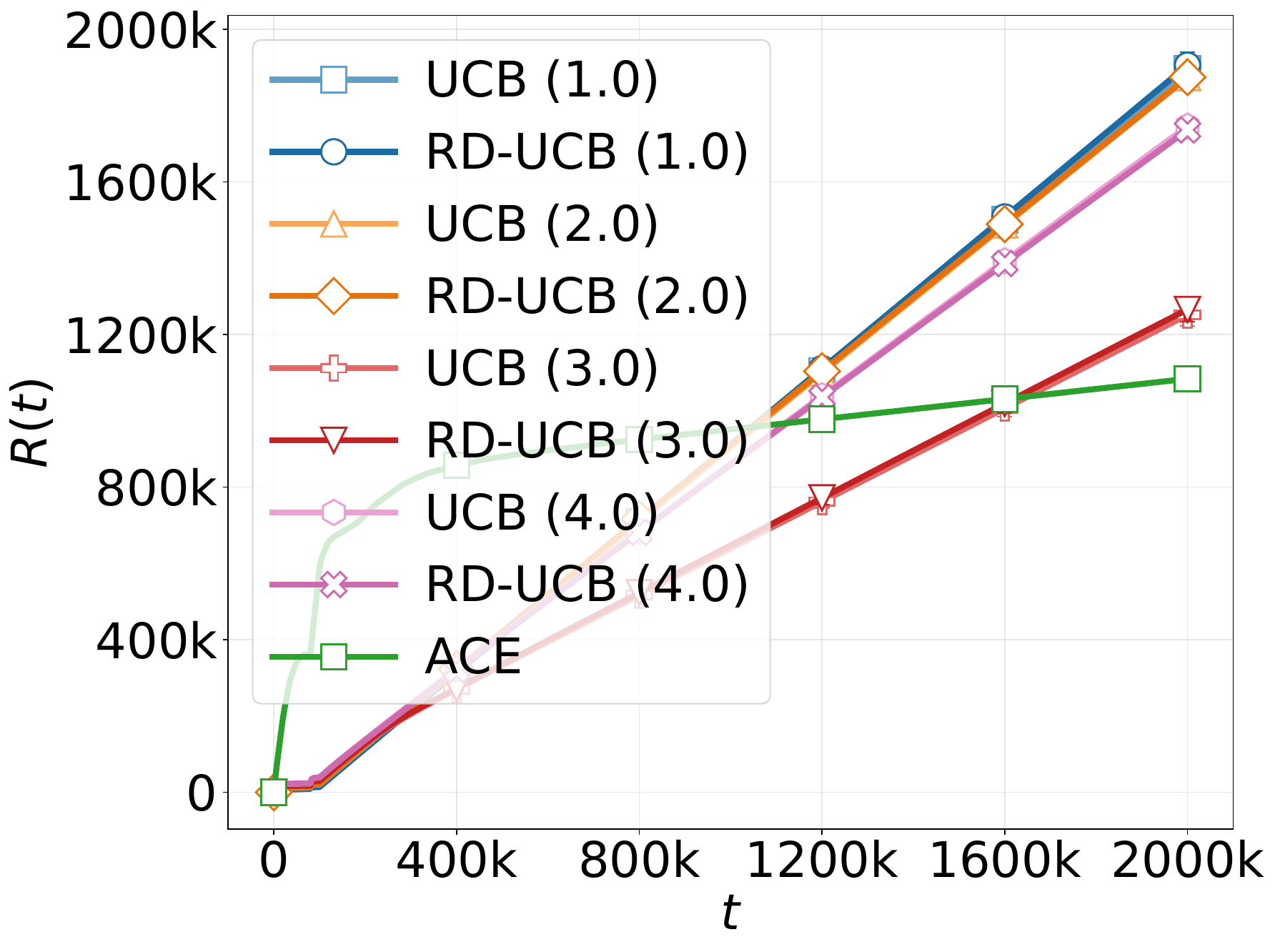}
        \caption{K=100, synthetic, with UCBs.}
        % \label{fig:ucb-asyn-syn}
    \end{subfigure}
    \vspace{0.2cm}
    \caption{Comparison of cumulative regret between UCB with multiple parameters and ACE for different $\mathbf{K}$ under different asynchronous settings.}
    \Description{Comparison of cumulative regret between UCB with multiple parameters and ACE for different $\mathbf{K}$ under different asynchronous settings.}
    \label{fig:ucb-k}
\end{figure*}

\textbf{Result Analysis on Figure \ref{fig:k}} $\ \ $ 
\update{
From Figure~\ref{fig:asyn-rdm} to Figure \ref{fig:r-k-100}, we compare the cumulative regret across different numbers of arms \(K\) under the random asynchronization setting. Since players gradually leave toward the end of the time horizon, the regret of all algorithms increases slowly in the later stages. Among the algorithms, ACE and UCB demonstrate superior performance.

Figure~\ref{fig:asyn-syn} to Figure \ref{fig:s-k-100} compare the cumulative regret across different numbers of arms \(K\) under the synthetic asynchronization setting. 
% In Table \ref{tb:asyn-set-s}, since players 7–10 arrive earlier than players 3–6, the arms they release upon leaving tend to have higher expected rewards than the arms currently being exploited by players 3–6. This highlights a key challenge in decentralized asynchronous settings: previously optimal arms may become sub-optimal as players leave.
We observe that for algorithms including SMAA, GoT, DYN-MMAB, and MCTopM, the regret grows linearly as $t$ increases, indicating that player departures cause the remaining players’ selected arms to become sub-optimal. D-MC exhibits phase-wise growth and eventually converges, but with a much higher regret. This is because DMC requires both a global clock and a known lower bound of $\Delta$ as inputs. For a fair comparison, we do not supply it with a global clock and an exact lower bound. Such a situation could be common in real applications. 
In comparison, ACE does not require these extra knowledge, and consistently achieves stable convergence, demonstrating better robustness to various environments.

}

\textbf{Result Analysis on Figure \ref{fig:ucb-k}} $\ \ $ 
\update{
Since Figure~\ref{fig:k} shows that ACE and UCB exhibit comparable regret, we further compare ACE with two types of the UCB algorithm using different confidence parameters in Figure~\ref{fig:ucb-k}. The upper confidence bound in UCB(c) is defined as $\mathrm{UCB}^j_k(c,t) := \hat{\mu}_k^j(t) + \sqrt{c\log T / N_k^j(t)}$. For RD-UCB(c), the upper confidence bound is defined as $\mathrm{\text{RD-UCB}}^j_k(c,t) := \hat{\mu}_k^j(t) + \sqrt{c\log T / N_k^j(t)} + Z^j_k(t)/t$, where $\{Z^j_k(t)\}_{j=1,...,M, k=1,...K, t=1,...T}$ are i.i.d. Gaussian random variables with mean 0 and variance 1. %\cite{trinh2021high}. 
% In Figure~\ref{fig:ucb-k}, we compare UCB and RD-UCB with confidence parameters $c=1,2,3,4$. 
Note that the UCB algorithm shown in Figure~\ref{fig:k} corresponds to UCB(2.0) in Figure~\ref{fig:ucb-k}. 

Figure~\ref{fig:ucb-k} demonstrates that for both UCB and RD-UCB, the regret begins to increase rapidly once a certain point is reached, especially in the synthetic asynchronous setting. 
In comparison, ACE can converge after a brief period of growth. In the following, we provide an example to explain why UCB suffers a linear regret under this synthetic asynchronous setting.
%A similar phenomenon also occurs for ACE; however, after a brief period of growth, ACE can converge. We provide an example to explain that UCB suffers a linear regret under this asynchronous setting.

\begin{example} (Why UCB suffers a linear regret)
Let $K=2$. The expected rewards of arm 1 and 2 are $\mu_1$ and $\mu_2$, respectively. We assume $\mu_1 > \mu_2$ w.l.o.g. Let the time horizon $T$ be sufficiently large. 

% We aim to show that Selfish-UCB can suffer linear regret under this setting.

Suppose player 1 joins the system first and quickly identifies arm 1 as the optimal arm by $t_1 = 0.1 T$. At this point, player 2 enters the system. From player 2's perspective, arm 1 consistently yields zero reward due to collisions with player 1, while arm 2 yields the expected reward $\mu_2$. Consequently, player 2 views arm 2 as the better option.

Since arm 1 appears suboptimal to player 2, the algorithm will only explore it with low probability: the number of pulls grows in an order of ${\log t/ \mu_2^2}$. In other words, player 2 pulls arm 1 at exponentially increasing intervals, approximately at the following time steps: $\exp(\mu_2^2), \exp(2\mu_2^2),\exp(3\mu_2^2), \cdots$. By the time the system reaches $t_2 = 0.6 T$, the probability that player 2 pulls arm 1 at any step $t>t_2$ becomes approximately ${1/ \mu_2^2 T}$. 

Now, suppose player 1 leaves the system at time step $t_2 = 0.6T$. Let $t_3$ denote the first time after 
$t_2$ that player 2 pulls arm 1. Due to the extremely low exploration frequency, the time interval between $t_2$ and $t_3$ can be linear in $T$, during which player 2 continues to exploit the suboptimal arm 2. This results in a significant regret accumulation over that period.

\end{example}

The above example, together with our experimental results, further highlights our contribution: whereas all existing algorithms incur linear regret in the general asynchronous setting, our proposed ACE algorithm achieves a sub-linear regret upper bound.

%This theoretical insight aligns with our experimental results. As shown in Figure \ref{fig:ucb-k}, UCB algorithms consistently exhibit a phase of linear regret growth, confirming the expected behavior from our analysis.
}

\section{Limitations and Future Work} \label{app:lmt}

One limitation of ACE is its reliance on uniform arm selection, which may lead to frequent collisions when \(m\) is close to \(K\). To mitigate this, we rely on Assumption~\ref{Assumption:active_players}, which ensures that \(m \leq K/2\). A promising future direction is to design an explicit initialization phase that allows players to estimate their relative ranks even under asynchronous settings, inspired by techniques developed for the synchronous setting in~\citet{boursier2019sic, wang2020optimal}. These estimated ranks can then be used to implement a round-robin arm selection strategy that avoids collisions. With such a mechanism, Assumption~\ref{Assumption:active_players} could potentially be removed, and the regret may also be reduced due to fewer collisions.

Another limitation is that ACE does not fully utilize the exploration information collected by different players, resulting in a regret bound that includes a multiplicative factor of $M$ due to the summation over all players. One possible remedy is to introduce a communication phase where players intentionally trigger collisions to exchange reward information. Communication strategies of this type have been explored in the synchronous setting by \citet{boursier2019sic, shi2020decentralized, huang2022towards}, and may be adapted to support more efficient collaborative exploration in asynchronous environments in our future research.

As discussed in Remark \ref{remark:het}, ACE is naturally applicable to the heterogeneous reward setting, since each player explores and exploits arms independently. This suggests that a regret analysis under heterogeneous rewards is also feasible for our algorithm, which is left as future work. While near-optimal regret has been achieved in the decentralized synchronous setting \citep{shi2021heterogeneous}, extending such results to the asynchronous case remains an open problem.

% \clearpage
\bibliographystyle{plainnat}
\bibliography{reference}

\begin{thebibliography}{41}
\providecommand{\natexlab}[1]{#1}
\providecommand{\url}[1]{\texttt{#1}}
\expandafter\ifx\csname urlstyle\endcsname\relax
  \providecommand{\doi}[1]{doi: #1}\else
  \providecommand{\doi}{doi: \begingroup \urlstyle{rm}\Url}\fi

\bibitem[Anantharam et~al.(1987)Anantharam, Varaiya, and
  Walrand]{anantharam1987asymptotically}
Venkatachalam Anantharam, Pravin Varaiya, and Jean Walrand.
\newblock Asymptotically efficient allocation rules for the multiarmed bandit
  problem with multiple plays-part i: Iid rewards.
\newblock \emph{IEEE Transactions on Automatic Control}, 32\penalty0
  (11):\penalty0 968--976, 1987.

\bibitem[Auer(2002)]{auer2002finite}
P~Auer.
\newblock Finite-time analysis of the multiarmed bandit problem, 2002.

\bibitem[Besson and Kaufmann(2018)]{besson2018multi}
Lilian Besson and Emilie Kaufmann.
\newblock Multi-player bandits revisited.
\newblock In \emph{Algorithmic Learning Theory}, pages 56--92. PMLR, 2018.

\bibitem[Bistritz and Leshem(2018)]{bistritz2018distributed}
Ilai Bistritz and Amir Leshem.
\newblock Distributed multi-player bandits-a game of thrones approach.
\newblock \emph{Advances in Neural Information Processing Systems}, 31, 2018.

\bibitem[Bonnefoi et~al.(2017)Bonnefoi, Besson, Moy, Kaufmann, and
  Palicot]{bonnefoi2017multi}
R{\'e}mi Bonnefoi, Lilian Besson, Christophe Moy, Emilie Kaufmann, and Jacques
  Palicot.
\newblock Multi-armed bandit learning in iot networks: Learning helps even in
  non-stationary settings.
\newblock In \emph{International Conference on Cognitive Radio Oriented
  Wireless Networks}, pages 173--185. Springer, 2017.

\bibitem[Boursier and Perchet(2019)]{boursier2019sic}
Etienne Boursier and Vianney Perchet.
\newblock Sic-mmab: Synchronisation involves communication in multiplayer
  multi-armed bandits.
\newblock \emph{Advances in Neural Information Processing Systems}, 32, 2019.

\bibitem[Boursier and Perchet(2024)]{boursier2024survey}
Etienne Boursier and Vianney Perchet.
\newblock A survey on multi-player bandits.
\newblock \emph{Journal of Machine Learning Research}, 25\penalty0
  (137):\penalty0 1--45, 2024.

\bibitem[Chen et~al.(2023)Chen, Yang, Wang, Liu, Hajiesmaili, Lui, and
  Towsley]{chen2023demand}
Yu-Zhen~Janice Chen, Lin Yang, Xuchuang Wang, Xutong Liu, Mohammad Hajiesmaili,
  John~CS Lui, and Don Towsley.
\newblock On-demand communication for asynchronous multi-agent bandits.
\newblock In \emph{International Conference on Artificial Intelligence and
  Statistics}, pages 3903--3930. PMLR, 2023.

\bibitem[Dakdouk(2022)]{dakdouk2022massive}
Hiba Dakdouk.
\newblock \emph{Massive multi-player multi-armed bandits for internet of things
  networks}.
\newblock PhD thesis, Ecole nationale sup{\'e}rieure Mines-T{\'e}l{\'e}com
  Atlantique, 2022.

\bibitem[Huang et~al.(2022)Huang, Combes, and Trinh]{huang2022towards}
Wei Huang, Richard Combes, and Cindy Trinh.
\newblock Towards optimal algorithms for multi-player bandits without collision
  sensing information.
\newblock In \emph{Conference on Learning Theory}, pages 1990--2012. PMLR,
  2022.

\bibitem[Jia et~al.(2009)Jia, Zhang, and Zhang]{jia2009cooperative}
Juncheng Jia, Jin Zhang, and Qian Zhang.
\newblock Cooperative relay for cognitive radio networks.
\newblock In \emph{IEEE INFOCOM 2009}, pages 2304--2312. IEEE, 2009.

\bibitem[Jin et~al.(2010)Jin, Xu, and Li]{jin2010multicast}
Jin Jin, Hong Xu, and Baochun Li.
\newblock Multicast scheduling with cooperation and network coding in cognitive
  radio networks.
\newblock In \emph{2010 Proceedings IEEE INFOCOM}, pages 1--9. IEEE, 2010.

\bibitem[Komiyama et~al.(2015)Komiyama, Honda, and
  Nakagawa]{komiyama2015optimal}
Junpei Komiyama, Junya Honda, and Hiroshi Nakagawa.
\newblock Optimal regret analysis of thompson sampling in stochastic
  multi-armed bandit problem with multiple plays.
\newblock In \emph{International Conference on Machine Learning}, pages
  1152--1161. PMLR, 2015.

\bibitem[Kumar et~al.(2021)Kumar, Zhao, and Fernando]{kumar2021multi}
Anitha~Saravana Kumar, Lian Zhao, and Xavier Fernando.
\newblock Multi-agent deep reinforcement learning-empowered channel allocation
  in vehicular networks.
\newblock \emph{IEEE Transactions on Vehicular Technology}, 71\penalty0
  (2):\penalty0 1726--1736, 2021.

\bibitem[Li et~al.(2015)Li, Xu, and Zhao]{li2015internet}
Shancang Li, Li~Da Xu, and Shanshan Zhao.
\newblock The internet of things: a survey.
\newblock \emph{Information systems frontiers}, 17:\penalty0 243--259, 2015.

\bibitem[Liang et~al.(2011)Liang, Chen, Li, and Mahonen]{liang2011cognitive}
Ying-Chang Liang, Kwang-Cheng Chen, Geoffrey~Ye Li, and Petri Mahonen.
\newblock Cognitive radio networking and communications: An overview.
\newblock \emph{IEEE transactions on vehicular technology}, 60\penalty0
  (7):\penalty0 3386--3407, 2011.

\bibitem[Lugosi and Mehrabian(2022)]{lugosi2022multiplayer}
G{\'a}bor Lugosi and Abbas Mehrabian.
\newblock Multiplayer bandits without observing collision information.
\newblock \emph{Mathematics of Operations Research}, 47\penalty0 (2):\penalty0
  1247--1265, 2022.

\bibitem[Mahesh et~al.(2022)Mahesh, Rangi, Xu, and Tran-Thanh]{mahesh2022multi}
Shivakumar Mahesh, Anshuka Rangi, Haifeng Xu, and Long Tran-Thanh.
\newblock Multi-player bandits robust to adversarial collisions.
\newblock \emph{arXiv e-prints}, pages arXiv--2211, 2022.

\bibitem[Mahesh et~al.(2024)Mahesh, Rangi, Xu, and
  Tran-Thanh]{mahesh2024attacking}
Shivakumar Mahesh, Anshuka Rangi, Haifeng Xu, and Long Tran-Thanh.
\newblock Attacking multi-player bandits and how to robustify them.
\newblock In \emph{Proceedings of 23rd Conference on Autonomous Agents and
  Multiagent Systems (AAMAS 2024)}. ACM; International Foundation for
  Autonomous Agents and Multiagent Systems~…, 2024.

\bibitem[Mart{\'\i}nez-Rubio et~al.(2019)Mart{\'\i}nez-Rubio, Kanade, and
  Rebeschini]{martinez2019decentralized}
David Mart{\'\i}nez-Rubio, Varun Kanade, and Patrick Rebeschini.
\newblock Decentralized cooperative stochastic bandits.
\newblock \emph{Advances in Neural Information Processing Systems}, 32, 2019.

\bibitem[Mughal et~al.(2024)Mughal, Ullah, Cheema, Yu, and
  Jhanjhi]{mughal2024intelligent}
Muhammad~Arif Mughal, Ata Ullah, Muhammad Awais~Zafar Cheema, Xinbo Yu, and
  NZ~Jhanjhi.
\newblock An intelligent channel assignment algorithm for cognitive radio
  networks using a tree-centric approach in iot.
\newblock \emph{Alexandria Engineering Journal}, 91:\penalty0 152--160, 2024.

\bibitem[Naeem et~al.(2013)Naeem, Anpalagan, Jaseemuddin, and
  Lee]{naeem2013resource}
Muhammad Naeem, Alagan Anpalagan, Muhammad Jaseemuddin, and Daniel~C Lee.
\newblock Resource allocation techniques in cooperative cognitive radio
  networks.
\newblock \emph{IEEE Communications surveys \& tutorials}, 16\penalty0
  (2):\penalty0 729--744, 2013.

\bibitem[Ngo and Le-Ngoc(2011)]{ngo2011distributed}
Duy~Trong Ngo and Tho Le-Ngoc.
\newblock Distributed resource allocation for cognitive radio networks with
  spectrum-sharing constraints.
\newblock \emph{IEEE Transactions on Vehicular Technology}, 60\penalty0
  (7):\penalty0 3436--3449, 2011.

\bibitem[Richard et~al.(2024)Richard, Boursier, and
  Perchet]{richard2024constant}
Hugo Richard, Etienne Boursier, and Vianney Perchet.
\newblock Constant or logarithmic regret in asynchronous multiplayer bandits
  with limited communication.
\newblock In \emph{International Conference on Artificial Intelligence and
  Statistics}, pages 388--396. PMLR, 2024.

\bibitem[Rosenski et~al.(2016)Rosenski, Shamir, and Szlak]{rosenski2016multi}
Jonathan Rosenski, Ohad Shamir, and Liran Szlak.
\newblock Multi-player bandits--a musical chairs approach.
\newblock In \emph{International Conference on Machine Learning}, pages
  155--163. PMLR, 2016.

\bibitem[Shi and Shen(2021)]{shi2021multi}
Chengshuai Shi and Cong Shen.
\newblock Multi-player multi-armed bandits with collision-dependent reward
  distributions.
\newblock \emph{IEEE Transactions on Signal Processing}, 69:\penalty0
  4385--4402, 2021.

\bibitem[Shi et~al.(2020)Shi, Xiong, Shen, and Yang]{shi2020decentralized}
Chengshuai Shi, Wei Xiong, Cong Shen, and Jing Yang.
\newblock Decentralized multi-player multi-armed bandits with no collision
  information.
\newblock In \emph{International Conference on Artificial Intelligence and
  Statistics}, pages 1519--1528. PMLR, 2020.

\bibitem[Shi et~al.(2021)Shi, Xiong, Shen, and Yang]{shi2021heterogeneous}
Chengshuai Shi, Wei Xiong, Cong Shen, and Jing Yang.
\newblock Heterogeneous multi-player multi-armed bandits: Closing the gap and
  generalization.
\newblock \emph{Advances in neural information processing systems},
  34:\penalty0 22392--22404, 2021.

\bibitem[Szorenyi et~al.(2013)Szorenyi, Busa-Fekete, Hegedus, Orm{\'a}ndi,
  Jelasity, and K{\'e}gl]{szorenyi2013gossip}
Balazs Szorenyi, R{\'o}bert Busa-Fekete, Istv{\'a}n Hegedus, R{\'o}bert
  Orm{\'a}ndi, M{\'a}rk Jelasity, and Bal{\'a}zs K{\'e}gl.
\newblock Gossip-based distributed stochastic bandit algorithms.
\newblock In \emph{International conference on machine learning}, pages 19--27.
  PMLR, 2013.

\bibitem[Tibrewal et~al.(2019)Tibrewal, Patchala, Hanawal, and
  Darak]{tibrewal2019distributed}
Harshvardhan Tibrewal, Sravan Patchala, Manjesh~K Hanawal, and Sumit~J Darak.
\newblock Distributed learning and optimal assignment in multiplayer
  heterogeneous networks.
\newblock In \emph{IEEE INFOCOM 2019-IEEE Conference on Computer
  Communications}, pages 1693--1701. IEEE, 2019.

\bibitem[Trinh and Combes(2021)]{trinh2021high}
Cindy Trinh and Richard Combes.
\newblock A high performance, low complexity algorithm for multi-player bandits
  without collision sensing information.
\newblock \emph{arXiv preprint arXiv:2102.10200}, 2021.

\bibitem[Wang et~al.(2020)Wang, Proutiere, Ariu, Jedra, and
  Russo]{wang2020optimal}
Po-An Wang, Alexandre Proutiere, Kaito Ariu, Yassir Jedra, and Alessio Russo.
\newblock Optimal algorithms for multiplayer multi-armed bandits.
\newblock In \emph{International Conference on Artificial Intelligence and
  Statistics}, pages 4120--4129. PMLR, 2020.

\bibitem[Wang and Yang(2023)]{wang2023achieving}
Xuchuang Wang and Lin Yang.
\newblock Achieving near-optimal individual regret low communications in
  multi-agent bandits.
\newblock In \emph{The Eleventh International Conference on Learning
  Representations (ICLR)}, 2023.

\bibitem[Wang et~al.(2022)Wang, Xie, and Lui]{wang2022multi}
Xuchuang Wang, Hong Xie, and John Lui.
\newblock Multi-player multi-armed bandits with finite shareable resources
  arms: Learning algorithms \& applications.
\newblock \emph{arXiv preprint arXiv:2204.13502}, 2022.

\bibitem[Wang et~al.(2025)Wang, Chen, Liu, Yang, Hajiesmaili, Towsley, and
  Lui]{wang2025asynchronous}
Xuchuang Wang, Yu-Zhen~Janice Chen, Xutong Liu, Lin Yang, Mohammad Hajiesmaili,
  Don Towsley, and John~CS Lui.
\newblock Asynchronous multi-agent bandits: Fully distributed vs.
  leader-coordinated algorithms.
\newblock \emph{Proceedings of the ACM on Measurement and Analysis of Computing
  Systems}, 9\penalty0 (1):\penalty0 1--39, 2025.

\bibitem[Wyglinski et~al.(2009)Wyglinski, Nekovee, and
  Hou]{wyglinski2009cognitive}
Alexander~M Wyglinski, Maziar Nekovee, and Thomas Hou.
\newblock \emph{Cognitive radio communications and networks: principles and
  practice}.
\newblock Academic Press, 2009.

\bibitem[Xiong and Li(2023)]{xiong2023decentralized}
Guojun Xiong and Jian Li.
\newblock Decentralized stochastic multi-player multi-armed walking bandits.
\newblock In \emph{Proceedings of the AAAI Conference on Artificial
  Intelligence}, volume~37, pages 10528--10536, 2023.

\bibitem[Xu et~al.(2023)Xu, Wang, Zhang, Li, and Cui]{xu2023competing}
Renzhe Xu, Haotian Wang, Xingxuan Zhang, Bo~Li, and Peng Cui.
\newblock Competing for shareable arms in multi-player multi-armed bandits.
\newblock In \emph{International Conference on Machine Learning}, pages
  38674--38706. PMLR, 2023.

\bibitem[Yang et~al.(2021)Yang, Chen, Pasteris, Hajiesmaili, Lui, and
  Towsley]{yang2021cooperative}
Lin Yang, Yu-Zhen~Janice Chen, Stephen Pasteris, Mohammad Hajiesmaili, John
  Lui, and Don Towsley.
\newblock Cooperative stochastic bandits with asynchronous agents and
  constrained feedback.
\newblock \emph{Advances in Neural Information Processing Systems},
  34:\penalty0 8885--8897, 2021.

\bibitem[Zhang et~al.(2010)Zhang, Liang, and Cui]{zhang2010dynamic}
Rui Zhang, Ying-Chang Liang, and Shuguang Cui.
\newblock Dynamic resource allocation in cognitive radio networks.
\newblock \emph{IEEE signal processing magazine}, 27\penalty0 (3):\penalty0
  102--114, 2010.

\bibitem[Zhang et~al.(2022)Zhang, Wang, and Fang]{zhang2022matching}
Yirui Zhang, Siwei Wang, and Zhixuan Fang.
\newblock Matching in multi-arm bandit with collision.
\newblock \emph{Advances in Neural Information Processing Systems},
  35:\penalty0 9552--9563, 2022.

\end{thebibliography}
%%%%%%%%%%%%%%%%%%%%%%%%%%%%%%%%%%%%%%%%%%%%%%%%%%%%%%%%%%%%

\clearpage
\appendix
\onecolumn

\section{Related Work}\label{sec:related}

The problem of multi-player multi-armed bandits (MP-MAB) has been extensively studied in the literature under various settings. Within the scope of synchronization, \citet{anantharam1987asymptotically} first study the problem in the centralized setting, and \citet{komiyama2015optimal} achieve asymptotically optimal regret. 
In decentralized MP-MAB, implicit communication based on collisions, which are used to transmit binary information, was gradually developed by \citet{rosenski2016multi, boursier2019sic, wang2020optimal}. This approach allows players to avoid collisions entirely during exploration and fully leverage each other's exploration results.

This implicit communication mechanism has been widely adopted across a range of MP-MAB settings. For instance, heterogeneous reward scenarios where each player has different mean rewards across arms have been addressed by \citet{tibrewal2019distributed, shi2021heterogeneous}. Implicit communication has also been applied in the no-sensing setting, where players can observe only their own rewards but not the collision indicator \(\eta_k(t)\) \citep{boursier2019sic, shi2020decentralized, lugosi2022multiplayer, huang2022towards}. 
Other notable variations include adversarial collisions \citep{mahesh2022multi}, collision-dependent rewards \citep{shi2021multi}, matching markets \citep{zhang2022matching}, and shareable rewards \citep{wang2022multi}. All these approaches rely on implicit communication.
In contrast, another line of work considers fully decentralized settings without any form of communication, where each player explores independently. This includes heterogeneous reward settings \citep{besson2018multi, bistritz2018distributed}, and scenarios with shareable rewards \citep{xu2023competing}.

In the asynchronous MP-MAB problem, several existing studies give their solutions under different assumptions. \citet{rosenski2016multi} design algorithm in decentralized environments with a shared global clock for epoch synchronization, and require to use a lower bound of $\Delta$ as input.
\citet{boursier2019sic} deal with the setting that players may enter at different times but will remain until the end of the game. Another setting where players become active at each time step with some probability has also been considered \citep{bonnefoi2017multi, dakdouk2022massive, richard2024constant}. Note that while \citet{dakdouk2022massive} study the decentralized problem, their approach allows explicit communication between players, which is typically assumed only in centralized environments.

In comparison, our asynchronous setting, in which players do not have a global clock and may enter or exit the system unpredictably, is more general and better aligned with real-world scenarios.
Table~\ref{tb:1} summarizes the regret bounds of algorithms under different asynchronous assumptions. Since the settings differ significantly from ours, these regret bounds are not directly comparable. 

The multi-agent multi-armed bandit problem (MA-MAB) considers a related but distinct setting, where \( M \) players pull arms from \([K]\) at each time \( t \), and no collision occurs even if multiple players select the same arm \citep{szorenyi2013gossip, martinez2019decentralized, yang2021cooperative, wang2023achieving}. Asynchronous variants of MA-MAB have also been explored \citep{chen2023demand, wang2025asynchronous}. However, these works focus on accelerating learning via decentralized communication protocols, rather than addressing collisions. This fundamental difference distinguishes MA-MAB from the MP-MAB setting we consider, and hence their algorithms are also very different from ours.

\section{Proof of Theorem \ref{th:upper}} \label{app:proof}
% Let $\mathcal{T}^j$ denote the set of active time steps for player $j$. Denote by $C^j_k(t)$ the random variable that player $j$ inserts into $\mathcal{P}_k^j$ at time step $t$. Let $t^j_{k,p}(\tau_p)$ denote the time step at which  player $j$ inserts a value into $\mathcal{P}_k^j$ for the $\tau_p$-th time. Also, denote by $D^j_k(t)$ the random variable that player $j$ inserts into $\mathcal{Q}_k^j$ at time step $t$. Let $t^j_{k,q}(\tau_q)$ denote the time step at which player $j$ inserts a value into $\mathcal{Q}_k^j$ for the $\tau_q$-th time. The proof of Theorem \ref{th:upper} is divided into several lemmas. To facilitate the analysis, we consider four events, denoted \( \mathcal{E}_0 \) through \( \mathcal{E}_3 \). The event \( \mathcal{E}_0 \) has already been introduced in the main text, while \( \mathcal{E}_1 \), \( \mathcal{E}_2 \) and \( \mathcal{E}_3 \) are defined below for completeness.

Let $\mathcal{T}^j$ denote the set of active time steps for player $j$. Denote by $C^j_k(t)$ \update{the number (1 or 0)} that player $j$ inserts into $\mathcal{P}_k^j$ at time step $t$. Let $t^j_{k,p}(\tau_p)$ denote the time step at which  player $j$ inserts a value into $\mathcal{P}_k^j$ for the $\tau_p$-th time. Also, denote by $D^j_k(t)$ \update{the number (1 or 0)} that player $j$ inserts into $\mathcal{Q}_k^j$ at time step $t$. 
% \update{In the following we will analyse $\sum_{\tau_p= \tau}^{L_p+\tau} \mathbb{E}\left[C^j_k\left(t^j_{k,p}(\tau_p)\right) | \mathcal{F}_{\tau_{p-1}}\right]$}
%
Let $t^j_{k,q}(\tau_q)$ denote the time step at which player $j$ inserts a value into $\mathcal{Q}_k^j$ for the $\tau_q$-th time. The proof of Theorem \ref{th:upper} is divided into several lemmas. To facilitate the analysis, we consider four events, denoted \( \mathcal{E}_0 \) through \( \mathcal{E}_3 \). The event \( \mathcal{E}_0 \) has already been introduced in the main text, while \( \mathcal{E}_1 \), \( \mathcal{E}_2 \) and \( \mathcal{E}_3 \) are defined below for completeness.

\begin{align*}
    & \mathcal{E}_0 = \left\{ \exists t\in \mathcal{T}^j, j\leq M, k\leq K : |\hat{\mu}^j_k(t) - \mu_k |\geq \sqrt{\frac{6\log(T)}{N^j_k(t)}} \right\} ~,\\
    & \mathcal{E}_1 := \left\{ \exists t \in \mathcal{T}^j, j\leq M, k\leq K: \left| \sum_{\tau_p= \tau}^{L_p+\tau} C^j_k\left(t^j_{k,p}(\tau_p)\right) - \sum_{\tau_p= \tau}^{L_p+\tau} \mathbb{E}\left[ C^j_k\left(t^j_{k,p}(\tau_p)\right)\update{\Big| \mathcal{F}_{\tau_{p-1}}}\right]  \right| \geq 0.034L_p  \right\} ~, \\
    & \mathcal{E}_2 := \left\{ \exists t \in \mathcal{T}^j, j\leq M, k\leq K: \left| \sum_{\tau_q= \tau}^{L_q+\tau} D^j_k\left(t^j_{k,q}(\tau_q)\right) - \sum_{\tau_q= \tau}^{L_q+\tau} \mathbb{E}\left[  D^j_k\left(t^j_{k,q}(\tau_q)\right)\update{\Big| \mathcal{F}_{\tau_{q-1}}}\right]\right| \geq 0.0419L_q  \right\} ~, \\
    & \mathcal{E}_3 = \left\{ \exists t\in \mathcal{T}^j, j\leq M, k\leq K : |N^j_k(t) - \mathbb{E}[N^j_k(t)]| \geq \frac{1}{2}\mathbb{E}[N^j_k(t)], \mathbb{E}[N_k^j(t)] \geq 36\ln(T)  \right\}~. 
\end{align*}
Here, \(\mathcal{E}_0\) denotes the event that the estimated reward deviates significantly from the expected reward at some time step. 
\(\mathcal{E}_1\) and \(\mathcal{E}_2\) refer to the events where the cumulative sums of \(C^j_k(\tau)\) and \(D^j_k(\tau)\) deviate significantly from their expectations \update{conditioned on the history $\mathcal{F}_{\tau_{p-1}}$ and $\mathcal{F}_{\tau_{q-1}}$, respectively. Note that once conditioned on $\mathcal{F}_{\tau_{p-1}}$ and $\mathcal{F}_{\tau_{q-1}}$, the relevant randomness becomes independent, since the actions of each player at different time steps are independently drawn given the past. 
This conditional independence enables the following concentration arguments to proceed (Lemma \ref{lm:e1}, Lemma \ref{lm:e2}).
} 
Finally, \(\mathcal{E}_3\) is the event that \(N^j_k(t)\) deviates significantly from its expectation \(\mathbb{E}[N^j_k(t)]\) at some time step while \(\mathbb{E}[N^j_k(t)] \geq 36 \ln T\).

\begin{table}[t]
  \centering
  \renewcommand{\arraystretch}{1.2} % 调整表格行间距
  \begin{tabular}{p{1.5cm} p{2.1cm} p{0.9cm} p{5.0cm} p{4.4cm}} % 调整列宽
    \toprule
    ~ & \textbf{Environment} & \textbf{Com} & \textbf{Async setting} & \textbf{Regret bound} \\
    \midrule
    \citet{boursier2019sic} & 
    \raggedright Decentralized & 
    No &
    \raggedright Players arrive at different times but never leave. & 
    $\mathcal{O} \left(\frac{KM\log T}{\Delta^2_{(1)}} + \frac{KM^2\log T}{\mu_M}  \right)$ \\
    \hline
    \citet{dakdouk2022massive} & 
    \raggedright Decentralized & 
    Yes &
    \raggedright Activation probability \( p \) & 
    $\mathcal{O} \left(\max \left\{K^2, \frac{\log(KT)}{Mp(1-p/K)^{M}}\right\}T^{2/3}\right)$ \\
    \hline
    \citet{richard2024constant} &  
    \raggedright Centralized &  
    Yes &
    \raggedright Known activation probability \( p \) &  
    $\mathcal{O} \left( \sqrt{KT\log(KT)\min\{K, Mp\}} \right)$ \\
    \hline
    \citet{richard2024constant} &  
    \raggedright Centralized &  
    Yes &
    \raggedright Known activation probability \( p \) &  
    $\mathcal{O} \left( \frac{(K^2+(1+p)M^2)\log(KT)}{\Delta_{(2)}} \right)$ \\
    \hline
    ACE &  
    \raggedright Decentralized &  
    No & 
    \raggedright Players arrive and leave arbitrarily over time. &  
    $\mathcal{O} \left( m^{3/2}M\sqrt{T\ln T} +  \frac{mKM\log T}{\Delta^2_{(3)}} \right)$
  \\
    \bottomrule
  \end{tabular}
  \\[1em]
  \caption{Comparison of different algorithms. The column "Com" indicates whether communication via a specific channel is allowed. Note that this refers to \textit{explicit communication}, where players directly exchange information, rather than relying on collisions as an implicit signaling mechanism. "Activation probability \( p \)" refers to the setting where a player becomes active at each step with probability \( p \). \( \Delta_{(1)} \), \( \Delta_{(2)} \) and \( \Delta_{(3)} \) represent different definitions of the reward gap.}
  \label{tb:1}
\end{table}

The first lemma is a well-established result based on Lemma \ref{lm:hfd}.
\begin{lemma}
The probability of event $\mathcal{E}_0$ is bounded by
\begin{align*}
    \Pr[\mathcal{E}_0]\leq \frac{2KM}{T} ~.
\end{align*}
\label{lm:p-e}
\end{lemma}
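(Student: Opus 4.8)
The plan is to bound $\Pr[\mathcal{E}_0]$ by a union bound over players, arms, and the number of successful (non-collided) pulls, controlling each term with Hoeffding's inequality (Lemma~\ref{lm:hfd}). First I would fix a player $j$ and an arm $k$ and observe that both $\hat\mu^j_k(t)$ and $N^j_k(t)$ stay constant between consecutive collision-free pulls of arm $k$ by player $j$; they change only when $N^j_k(t)$ increments. Hence the event inside $\mathcal{E}_0$, although quantified over all $t \in \mathcal{T}^j$, is equivalent to the event that for some integer $n \in \{1,\dots,T\}$ the empirical mean of the first $n$ collision-free rewards of arm $k$ deviates from $\mu_k$ by at least $\sqrt{6\log(T)/n}$. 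Reindexing by the sample count $n$ rather than by time $t$ is exactly what lets us avoid a random number of samples (note $N^j_k(t)\le T$ since player $j$ makes at most one pull per time step).

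Next I would justify that, for fixed $j$ and $k$, the rewards recorded on successful pulls form a sequence to which Hoeffding applies. The key point is that $X_k(t')$ is drawn independently of the history $\mathcal{F}_{t'-1}$ and of all players' actions at step $t'$; therefore the success indicator $\mathds{1}\{\pi^j(t')=k,\ \eta_k(t')=0\}$, which is fixed by the actions chosen \emph{before} rewards are realized, is independent of the value $X_k(t')$. Consequently the rewards observed at the successful pulls are i.i.d.\ draws with mean $\mu_k$ and range $[0,1]$ (equivalently, the centered increments form a bounded martingale-difference sequence). Applying Hoeffding to the average of the first $n$ such rewards with deviation radius $\sqrt{6\log(T)/n}$ gives
\[
\Pr\!\left[\bigl|\hat\mu^j_k - \mu_k\bigr| \ge \sqrt{\tfrac{6\log T}{n}}\right] \le 2\exp\!\left(-2n\cdot \tfrac{6\log T}{n}\right) = 2\exp(-12\log T) = 2T^{-12}.
\]
Taking a union bound over the $M$ players, the $K$ arms, and the at most $T$ values of $n$ then yields
\[
\Pr[\mathcal{E}_0] \le M\cdot K\cdot T\cdot 2T^{-12} = 2KM\,T^{-11} \le \frac{2KM}{T},
\]
valid for all $T\ge 1$. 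The constant $6$ in the confidence radius is chosen so that the Hoeffding exponent $12\log T$ comfortably dominates the $\log(MKT)$ cost of the union bound.

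I expect the main obstacle to be the second step: rigorously applying Hoeffding (a statement about i.i.d.\ or martingale-difference sequences) even though the set of ``successful pulls'' is itself a random, history-dependent collection determined by collisions with other asynchronously acting players. The clean resolution is to work with the filtration generated by all actions and rewards and to exploit the conditional independence of each step's reward realization from that step's success indicator; once this is established, the remaining reindexing and union-bound argument is entirely routine.
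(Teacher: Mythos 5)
Your proposal is correct and follows essentially the same route as the paper: a union bound over players, arms, and the possible values of the sample count $N^j_k(t)$, with Hoeffding's inequality (Lemma~\ref{lm:hfd}) applied at each fixed count to give $2\exp(-12\log T)$ per term, which comfortably absorbs the union bound. The only differences are cosmetic: the paper unions over both the time index $t$ and the count $t_0$ (costing an extra, harmless factor of $T$, yielding $2KMT^{-10}$ versus your $2KMT^{-11}$), whereas you reindex by the count alone and additionally spell out the conditional-independence/optional-skipping justification for applying Hoeffding to the collision-free rewards, which the paper leaves implicit.
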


\begin{proof}
\begin{align}
    \Pr\left[\mathcal{E}_0 \right] &\leq \sum_{t=1}^T \sum_{j\leq M} \sum_{k\leq K} \Pr \left[ |\hat{\mu}^j_k(t) - \mu_k| \geq \sqrt{\frac{6\log(T)}{N^j_k(t)}}\, \right] \nonumber \\
    &\leq \sum_{t=1}^T \sum_{j\leq M} \sum_{k\leq K} \sum_{t_0=1}^{t} \Pr \left[ N^j_k(t) = t_0, |\hat{\mu}^j_k(t) - \mu_k| \geq \sqrt{\frac{6\log(T)}{t_0}}\, \right] \nonumber \\
    &\leq \sum_{t=1}^T \sum_{j\leq M} \sum_{k\leq K} t\cdot 2\exp(-12\log(T)) \label{eq:e-a} \\
    &\leq \frac{2KM}{T} ~, \nonumber
\end{align}
where \eqref{eq:e-a} is from Lemma \ref{lm:hfd}.
\end{proof}

By Lemma \ref{lm:p-e}, we have $\ve \leq 2KM^2$.

Lemma \ref{lm:e1} and Lemma \ref{lm:e2} guarantee that $\mathcal{E}_1$ and $\mathcal{E}_2$ also happens with very low probability. 

%players add or remove arms from $\mathcal{A}^j(t)$ correctly with high probability.
\begin{lemma}
For any player $j$, arm $k$ and $\tau$, given \(L_p = 866 \ln(T)\),
\begin{align*}
    \Pr\left[ \mathcal{E}_1 \right] \leq \frac{2MK}{T}~.
\end{align*}
\label{lm:e1}
\end{lemma}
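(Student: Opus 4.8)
The plan is to bound $\Pr[\mathcal{E}_1]$ via a union bound over all players $j$, arms $k$, and starting indices $\tau$, together with a concentration inequality applied to each fixed triple. The event $\mathcal{E}_1$ asks whether the partial sum $\sum_{\tau_p=\tau}^{L_p+\tau} C^j_k(t^j_{k,p}(\tau_p))$ can deviate from its conditional mean by at least $0.034 L_p$. The crucial structural observation, already flagged in the text, is that once we condition on the history $\mathcal{F}_{\tau_{p-1}}$ preceding each insertion, the indicators $C^j_k(\cdot)$ become a sequence of bounded random variables whose conditional expectations are well-defined; the deviations $C^j_k - \mathbb{E}[C^j_k \mid \mathcal{F}_{\tau_{p-1}}]$ then form a martingale difference sequence. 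This is precisely what licenses a concentration argument.

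First I would fix a triple $(j,k,\tau)$ and define the martingale difference sequence $\Xi_{\tau_p} := C^j_k(t^j_{k,p}(\tau_p)) - \mathbb{E}[C^j_k(t^j_{k,p}(\tau_p)) \mid \mathcal{F}_{\tau_{p-1}}]$ for $\tau_p$ ranging over the relevant window of length $L_p$ (or $L_p+1$). Since each $C^j_k$ is an indicator taking values in $\{0,1\}$, each difference $\Xi_{\tau_p}$ is bounded in an interval of length $1$, so the Azuma--Hoeffding inequality (presumably the martingale form bundled into Lemma~\ref{lm:hfd}) applies. This yields a tail bound of the shape $\Pr[|\sum \Xi_{\tau_p}| \geq 0.034 L_p] \leq 2\exp(-2(0.034 L_p)^2 / L_p) = 2\exp(-2(0.034)^2 L_p)$. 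Plugging in $L_p = 866\ln(T)$ gives an exponent of $2 \times 0.034^2 \times 866 \ln(T) = 2 \times 0.001156 \times 866 \ln(T) \approx 2.00\ln(T)$, so each fixed triple contributes a probability at most $2\exp(-2\ln T) = 2/T^2$.

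Next I would complete the union bound. The number of relevant $(j,k,\tau)$ triples is at most $M \cdot K \cdot T$ (at most $T$ choices of the starting index $\tau$ per player--arm pair), so summing the per-triple bound $2/T^2$ gives $\Pr[\mathcal{E}_1] \leq MKT \cdot 2/T^2 = 2MK/T$, matching the claimed bound. The main obstacle — and the step I would be most careful about — is the justification that the conditioning structure genuinely produces a martingale difference sequence despite the asynchronous, adaptive dynamics: the time steps $t^j_{k,p}(\tau_p)$ at which insertions occur are themselves determined by the algorithm's random history, so one must verify that the filtration is correctly nested and that $\Xi_{\tau_p}$ is measurable with respect to $\mathcal{F}_{\tau_p}$ while having conditional mean zero given $\mathcal{F}_{\tau_{p-1}}$. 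Once this measurability and the optional-stopping / stopping-time compatibility are pinned down, the constant $0.034$ is chosen exactly so that the Azuma exponent clears the $2\ln T$ threshold needed for the union bound to close, and the remaining arithmetic is routine.
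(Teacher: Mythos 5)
Your proposal is correct and follows essentially the same route as the paper's proof: a union bound over the at most $MKT$ choices of $(j,k,\tau)$ (the paper indexes this by $t\le T$, $j\le M$, $k\le K$) combined with a Hoeffding-type bound $2\exp(-2(0.034L_p)^2/L_p)\le 2/T^2$ per window, with identical constants and arithmetic. The only difference is one of justification rather than substance: you invoke Azuma--Hoeffding for the martingale difference sequence $C^j_k - \mathbb{E}[C^j_k\mid\mathcal{F}_{\tau_{p-1}}]$, whereas the paper cites its independent-sum Hoeffding lemma (Lemma~\ref{lm:hfd-sum}) and relies on a remark about conditional independence given the history; your martingale framing is arguably the cleaner way to make that same step rigorous under the adaptive dynamics.
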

\begin{proof}
\begin{align}
    \Pr\left[ \mathcal{E}_1 \right] &\leq
    \sum_{t=1}^T \sum_{j\leq M} \sum_{k\leq K} \Pr \left[ \left| \sum_{\tau_p= \tau}^{L_p+\tau} C^j_k\left(t^j_{k,p}(\tau_p)\right) - \sum_{\tau_p= \tau}^{L_p+\tau}\mathbb{E}\left[ C^j_k\left(t^j_{k,p}(\tau_p)\right)\update{\Big| \mathcal{F}_{\tau_{p-1}}}\right] \right| \geq 0.034L_p \right] \nonumber \\
    &\leq \sum_{t=1}^T \sum_{j\leq M} \sum_{k\leq K} 2\exp \left( \frac{ -2\cdot(0.034L_p)^2}{L_p} \right) \label{eq:e1-a} \\
    &= \sum_{t=1}^T \sum_{j\leq M} \sum_{k\leq K} 2\exp \left( \frac{ -2\cdot[0.034 \cdot 866\ln(T) ]^2}{866\ln(T)} \right) \nonumber\\
    &\leq \sum_{t=1}^T \sum_{j\leq M} \sum_{k\leq K} 2\exp \left( -2.002\ln(T) \right) \nonumber\\
    &\leq \frac{2KM}{T} ~. \nonumber
\end{align}
where \eqref{eq:e1-a} comes from Lemma \ref{lm:hfd-sum}.
\end{proof}

\begin{lemma}
For any player $j$, arm $k$ and $\tau$, given \(L_q = 570 \ln(T)\),
\begin{align*}
    \Pr\left[ \mathcal{E}_2 \right] \leq \frac{2MK}{T}~.
\end{align*}
\label{lm:e2}
\end{lemma}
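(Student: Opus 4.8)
The plan is to prove Lemma~\ref{lm:e2} by following the exact same structure as the proof of Lemma~\ref{lm:e1}, since the two statements are completely parallel: both bound the probability that a length-$L$ cumulative sum of bounded $\{0,1\}$-valued increments deviates from its conditional expectation by a fixed fraction of $L$. First I would apply a union bound over all time steps $t \le T$, players $j \le M$, and arms $k \le K$, reducing the probability of $\mathcal{E}_2$ to a sum of $TMK$ individual deviation probabilities of the form $\Pr[|\sum_{\tau_q} (D^j_k - \mathbb{E}[D^j_k \mid \mathcal{F}_{\tau_{q-1}}])| \ge 0.0419 L_q]$.

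The key step is to apply the martingale concentration inequality (Lemma~\ref{lm:hfd-sum}) to each term. As noted in the discussion preceding these lemmas, once we condition on the history $\mathcal{F}_{\tau_{q-1}}$, the increments $D^j_k(t^j_{k,q}(\tau_q))$ behave as a martingale difference sequence of bounded random variables (each lying in $[0,1]$), so the Azuma--Hoeffding-type bound in Lemma~\ref{lm:hfd-sum} applies and yields a factor $2\exp(-2(0.0419 L_q)^2 / L_q)$ for each term. Substituting $L_q = 570\ln(T)$ into the exponent gives
\begin{align*}
2\exp\!\left( \frac{-2\,(0.0419 \cdot 570\ln(T))^2}{570\ln(T)} \right)
= 2\exp\!\left( -2\,(0.0419)^2 \cdot 570 \ln(T) \right) ~.
\end{align*}
The numerical content is simply to verify that $2 \cdot (0.0419)^2 \cdot 570 \ge 2$, i.e. that the constant $570$ for $L_q$ and the threshold fraction $0.0419$ were chosen so that the exponent is at least $2\ln(T)$, giving a per-term bound of at most $2T^{-2}$.

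Multiplying this per-term bound by the $TMK$ summands collapses one factor of $T$, leaving the claimed $2MK/T$. The main (and only genuine) obstacle I anticipate is confirming that the conditional independence structure justifies invoking Lemma~\ref{lm:hfd-sum}: the summation index $\tau_q$ runs over the insertion times into $\mathcal{Q}^j_k$, and one must check that conditioning on $\mathcal{F}_{\tau_{q-1}}$ makes each $D^j_k(t^j_{k,q}(\tau_q))$ a bounded martingale difference with the correct filtration, exactly as the paper's remark about conditional independence asserts. Beyond this, the argument is a routine plug-in of constants mirroring Lemma~\ref{lm:e1}, so no further difficulty is expected.
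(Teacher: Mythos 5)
Your proposal is correct and follows essentially the same route as the paper's own proof: a union bound over all $t \le T$, $j \le M$, $k \le K$, then Lemma~\ref{lm:hfd-sum} applied to the conditionally-centered increments $D^j_k$, with the numerical check that $2\,(0.0419)^2 \cdot 570 \approx 2.001 \ge 2$ so each term is at most $2T^{-2}$, and the sum collapses to $2MK/T$. Your remark about the martingale/conditional-independence justification is exactly the point the paper itself flags (it invokes the conditional independence of actions given the history to legitimize the Hoeffding-type bound), so no gap remains.
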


\begin{proof}
\begin{align}
    \Pr\left[ \mathcal{E}_2 \right] &=
    \sum_{t=1}^T \sum_{j\leq M} \sum_{k\leq K} \Pr \left[ \left| \sum_{\tau_q= \tau}^{L_q+\tau} D^j_k\left(t^j_{k,q}(\tau_q)\right) - \sum_{\tau_q= \tau}^{L_q+\tau}\mathbb{E}\left[ D^j_k\left(t^j_{k,q}(\tau_q)\right)\update{\Big| \mathcal{F}_{\tau_{q-1}}}\right]  \right| \geq 0.0419L_q \right] \nonumber \\
    &\leq \sum_{t=1}^T \sum_{j\leq M} \sum_{k\leq K} 2\exp\left( \frac{-2\cdot (0.0419L_q)^2 }{L_q} \right) \label{eq:e2-a} \\
    &= \sum_{t=1}^T \sum_{j\leq M} \sum_{k\leq K} 2\exp\left( \frac{-2\cdot (0.0419\cdot 570\ln(T) )^2 }{570\ln(T)} \right) \nonumber \\
    &\leq \sum_{t=1}^T \sum_{j\leq M} \sum_{k\leq K} 2\exp\left( 2.001\ln(T) \right)\nonumber  \\
    &\leq \frac{2KM}{T} ~, \nonumber 
\end{align}
where \eqref{eq:e2-a} comes from Lemma \ref{lm:hfd-sum}.
\end{proof}

Next, Lemma \ref{lm:p-e-4} proves that $N^j_k(t)$ remains close to its expectation $\mathbb{E}[N^j_k(t)]$ with high probability when $\mathbb{E}[N_k^j(t)] \geq 36\ln(T)$.
\begin{lemma}
The probability of event $\mathcal{E}_3$ is bounded by
\begin{align*}
    \Pr\left[\mathcal{E}_3 \right]\leq \frac{2KM}{T} ~.
\end{align*}
\label{lm:p-e-4}
\end{lemma}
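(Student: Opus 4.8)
The plan is to bound the single-triple deviation probability by a multiplicative Chernoff (Bernstein--Angluin--Valiant) argument and then close with a union bound, exactly mirroring the structure of the proofs of Lemmas~\ref{lm:p-e}, \ref{lm:e1}, and \ref{lm:e2}.

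First I would fix $t \le T$, $j \le M$, $k \le K$ restricting to triples with $\mu := \mathbb{E}[N^j_k(t)] \ge 36\ln(T)$, and write $N^j_k(t) = \sum_{t'=1}^t X_{t'}$ with $X_{t'} = \mathds{1}\{\pi^j(t')=k,\ \eta_k(t')=0\} \in \{0,1\}$. As in the remark following the definitions of $\mathcal{E}_1,\mathcal{E}_2$, conditioning on the history $\mathcal{F}_{t'-1}$ makes $X_{t'}$ a Bernoulli variable whose randomness (player $j$'s and the other players' independent action draws at step $t'$) is independent of the past, so $\{X_{t'}\}$ is a sequence of conditionally independent indicators. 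This is precisely the regime where a two-sided multiplicative concentration applies through the product of conditional moment-generating functions.

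Second, I would apply the multiplicative bound with relative accuracy $\delta = \tfrac12$, giving
\[
\Pr\!\left[\, |N^j_k(t) - \mu| \ge \tfrac12 \mu \,\right] \le 2\exp\!\left(-\frac{\mu}{12}\right) \le 2\exp(-3\ln T) = \frac{2}{T^3},
\]
where the last step uses $\mu \ge 36\ln(T)$. A union bound over all $(t,j,k)\in[T]\times[M]\times[K]$ then yields
\[
\Pr[\mathcal{E}_3] \le \sum_{t=1}^T\sum_{j\le M}\sum_{k\le K} \frac{2}{T^3} = 2KMT\cdot T^{-3} = \frac{2KM}{T^2} \le \frac{2KM}{T},
\]
which is the claimed bound, with room to spare. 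Note that, unlike Lemma~\ref{lm:p-e}, no extra union over the random value of $N^j_k(t)$ is needed, since for each fixed triple the threshold $\tfrac12\mathbb{E}[N^j_k(t)]$ is a deterministic quantity.

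The main obstacle is justifying the multiplicative step despite the dependence among the $X_{t'}$: player $j$'s arm choice and the collision pattern at step $t'$ depend on the realized history, so the indicators are not independent. Running the argument at the level of conditional MGFs (bounding $\mathbb{E}[e^{\lambda X_{t'}}\mid \mathcal{F}_{t'-1}]$ and telescoping) naturally produces concentration around the \emph{random} predictable mean $\sum_{t'}\Pr[\pi^j(t')=k,\eta_k(t')=0\mid \mathcal{F}_{t'-1}]$ rather than the deterministic unconditional mean $\mathbb{E}[N^j_k(t)]$ appearing in $\mathcal{E}_3$. Reconciling these two centers is the delicate point, and it is handled via the same conditional-independence reduction already invoked for $\mathcal{E}_1$ and $\mathcal{E}_2$ through Lemma~\ref{lm:hfd-sum}; once that reduction is in place, the Chernoff tail and union bound above are routine.
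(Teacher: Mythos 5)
Your proposal is correct and takes essentially the same route as the paper: a multiplicative Chernoff bound (the paper's Lemma~\ref{lm:chr}) with $\delta = \tfrac12$ and $\mathbb{E}[N^j_k(t)] \ge 36\ln(T)$, giving $2\exp(-3\ln T) = 2T^{-3}$ per triple, followed by a union bound over $(t,j,k)$. Your version is in fact marginally cleaner, since the paper additionally sums over possible values $t_0$ of the (deterministic) expectation $\mathbb{E}[N^j_k(t)]$, paying an extra factor of $t$ and landing at $2KM/T$, whereas you correctly observe this union is unnecessary and obtain the tighter $2KM/T^2 \le 2KM/T$; the dependence issue you flag at the end is likewise present (and left unaddressed) in the paper's own application of Lemma~\ref{lm:chr}.
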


\begin{proof}
%Note that player $j$ only updates $N^j_k(t)$ when arm $k$ is sampled uniformly from either $[K]\setminus \mathcal{A}^j(t)$ (Line \ref{l:update}, Algorithm \ref{alg:1}). 
%Let $\delta = 1/2$,
\begin{align}
    \Pr\left[ \mathcal{E}_3 \right] &= \sum_{t=1}^T \sum_{j\leq M} \sum_{k\leq K} \Pr\left[ |N^j_k(t) - \mathbb{E}[N^j_k(t)]| \geq \frac{1}{2}\mathbb{E}[N^j_k(t)], \mathbb{E}[N_k^j(t)] \geq 36\ln(T) \right] \nonumber \\
    %&= \sum_{t=1}^T \sum_{j\leq M} \sum_{k\leq K}  \Pr\left[ |N^j_k(t) - \mathbb{E}[N^j_k(t)]| \geq \frac{1}{2}\mathbb{E}[N^j_k(t)], \mathbb{E}[N_k^j(t)] \geq 36\ln(T) \right] \nonumber \\
    &\leq \sum_{t=1}^T  \sum_{j\leq M} \sum_{k\leq K} \sum_{t_0=36\ln(T)}^t  \Pr\left[ \mathbb{E}[N_k^j(t)] = t_0, |N^j_k(t) - t_0| \geq \frac{1}{2}t_0 \right] \nonumber \\
    &\leq \sum_{t=1}^T \sum_{j\leq M} \sum_{k\leq K} t\cdot 2 \exp(-3\ln(T)) \label{eq:n-concentration-b} \\
    &\leq \frac{2KM}{T} ~, \nonumber
\end{align}
where \eqref{eq:n-concentration-b} is from Lemma \ref{lm:chr}.
\end{proof}

With slight abuse of notation, we denote by $\hat{k}^j(t)$ the arm occupied by player $j$ at time step $t$. When the context is clear, we write $\hat{k}^j$ to refer to $\hat{k}^j(t)$. For player $j$, let \(\hat{k}^j(t) = 0\) for all \(t \notin \mathcal{T}^j\). Lemma \ref{lm:nsame} guarantees that no two players can occupy the same arm simultaneously.

\begin{lemma}  
    % Let \( j_1 \) and \( j_2 \) be two players in the system. Then, f
    For any players \( j_1, j_2 \in [M] \), there does not exist a time step \( t  \) and an arm $k$ such that \( \hat{k}^{j_1}(t) = \hat{k}^{j_2}(t) = k \).  
    \label{lm:nsame}  
\end{lemma}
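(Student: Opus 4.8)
The plan is to argue by contradiction, isolating the \emph{first} moment at which two occupations of the same arm could coexist. Suppose some arm $k$ is occupied by two players at some time, and let $t^\star$ be the earliest time step at which there are distinct $j_1,j_2$ with $\hat k^{j_1}(t^\star)=\hat k^{j_2}(t^\star)=k$. Write the two players' current occupation spells as $[T_1,\cdot]$ and $[T_2,\cdot]$, both containing $t^\star$, and assume without loss of generality $T_1\le T_2$. Since $j_1$ occupies $k$ throughout $[T_1,t^\star]$, both players already occupy $k$ at time $T_2$; by minimality of $t^\star$ this forces $T_2=t^\star$. Hence $j_2$'s occupation of $k$ \emph{begins} exactly at $t^\star$, while $j_1$ occupies $k$ on all of $[T_1,t^\star]$.

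The structural fact I would rely on is that an exploiting player pulls her exploitation arm at least once in every two consecutive time steps: by Line~\ref{l:explt1} of Algorithm~\ref{alg:2} she always sets $k^j_1=\hat k^j$ at the first step of each length-two iteration, and these iterations tile her active horizon into consecutive blocks of length two, so any two adjacent steps contain at least one such ``first step.'' Because $j_2$ only enters exploitation after passing Condition~\ref{eq:cd1}, there is an iteration with $k^{j_2}_1=k^{j_2}_2=k$ and $\eta_{k}=0$ at both of its (consecutive) steps; as this is the iteration immediately preceding the start of her spell, the verification steps are $t^\star-2$ and $t^\star-1$, and $j_2$ pulls $k$ collision-free at both.

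It then remains to show that $j_1$ contradicts this collision-free verification. When $T_1\le t^\star-2$, player $j_1$ is exploiting $k$ throughout $\{t^\star-2,t^\star-1\}$, so by the structural fact she pulls $k$ at one of these two steps, colliding with $j_2$ there. The remaining possibilities, $T_1=t^\star-1$ and $T_1=t^\star$, are the delicate ones, where $j_1$'s spell also begins within the last two steps; here I would compare the two players' iteration boundaries. Since a spell always starts at the first step of an iteration, $t^\star\equiv T^{j_2}_{\mathrm{start}}\pmod 2$ and $T_1\equiv T^{j_1}_{\mathrm{start}}\pmod 2$, which fixes the relative offset of the two players' length-two blocks and pins down which of $t^\star-2,t^\star-1$ each treats as a ``first step.'' One checks that if $T_1=t^\star$ the two verification iterations coincide (both equal $(t^\star-2,t^\star-1)$), so both players pull $k$ at $t^\star-2$, whereas if $T_1=t^\star-1$ player $j_1$ already pulls $\hat k^{j_1}=k$ at her spell's first step $t^\star-1$; either way $j_2$ incurs a collision on $k$ during verification, contradicting Condition~\ref{eq:cd1}.

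I expect the main obstacle to be precisely this parity/alignment bookkeeping in the boundary cases: the asynchronous, two-step iteration structure means different players' length-two blocks need not be aligned, so one cannot simply say ``the other player was pulling $k$ somewhere in the last two steps.'' The heart of the matter is that Condition~\ref{eq:cd1} demands \emph{two} consecutive collision-free pulls exactly to defeat this misalignment: whatever the offset, an arm genuinely exploited during both steps collides with at least one of the two verification pulls. Once the boundary cases are dispatched, the minimality of $t^\star$ closes the contradiction and establishes the lemma.
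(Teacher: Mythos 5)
Your proof is correct and takes essentially the same approach as the paper's: the paper also argues by contradiction, splitting into the case where both players transition to exploitation simultaneously (your $T_1=t^\star$) and the case where one player is already exploiting (your $T_1<t^\star$), and in both cases uses the fact that Condition~\ref{eq:cd1}'s two consecutive collision-free pulls cannot survive an occupant who pulls $\hat k^j$ at least once in every two consecutive steps. Your explicit minimal-counterexample framing and parity bookkeeping for the misaligned-block boundary cases is a more careful rendering of what the paper's Case~2 treats informally, but the underlying mechanism is identical.
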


\begin{proof}
We consider the following two cases that may occur at step \(t\): 
\begin{itemize}
    \item Case 1: Both $j_1$ and $j_2$ are in exploration phase, and they both choose to set $\hat{k}^{j_1}(t) \leftarrow k$ and $\hat{k}^{j_2}(t) \leftarrow k$ in this step $t$.
    \item Case 2: $j_1$ is in exploitation phase with $\hat{k}^{j_1}(t) = k$, while $j_2$ is in exploration phase, and $j_2$ choose to set $\hat{k}^{j_2}(t) \leftarrow k$ in this step $t$.
\end{itemize}
If Case 1 happens, $j_1$ and $j_2$ execute Line \ref{l:nsame_0} in Algorithm \ref{alg:1} simultaneously. %; otherwise, they enter the exploitation phase. Let $t$ be the last time when $j_1$ and $j_2$ find $\hat{k}^{j_1}(t)$ and $\hat{k}^{j_2}(t)$ in the exploration phase. 
According to Line \ref{l:nsame_0}, a player transitions to the exploitation phase only when Condition \ref{eq:cd1} and Condition \ref{eq:cd2} are satisfied. To ensure Condition \ref{eq:cd1}, we have $\eta_k(t)=0$, indicating that only one player pulls the arm at $t$. Consequently, Case 1 never happens.

Next, we consider Case 2. Suppose that $j_1$ has already occupied $k$, i.e., $\hat{k}^{j_1} = k$. From Line \ref{l:explt1}-\ref{l:nsame_2} in Algorithm \ref{alg:2}, $j_1$ pulls $k$ twice with probability $1-\varepsilon$. Otherwise, she pulls $k$ in one time step and next pulls a different arm $k^{\prime}$, which is uniformly sampled from $\mathcal{A}^{j_1}(t)$. Thus, $j$ pulls $k$ and $k^{\prime}$ alternately. Meanwhile, $j_2$ is still in exploration phase and observes that arm $k$ satisfies Condition \ref{eq:cd2} at $t$. To ensure Condition \ref{eq:cd1}, we reqire $\eta_k(t-1)=0$ and $\eta_k(t)=0$, i.e., $j_2$ needs to observe two consecutive non-collision events on $k$. However, since player $j_1$ pulls $k$ and $k'$ in at least an alternating fashion, Case 2 cannot occur.
\end{proof}

Let $T_{o}^j$ denote the number of time steps that is required for player $j$ to identify an occupied arm $k$, and let $T_{r}^j$ denote the number of time steps that is required for player $j$ to identify a released arm $k$. The following lemma proves an upper bound for their expectation. %the time steps that player $j$ needs to update $\mathcal{A}^j(t)$.

\begin{lemma}
Under the condition of $\overline{\mathcal{E}_1}$ and $\overline{\mathcal{E}_2}$, for any player $j$ and arm $k$,

(i) if arm $k$ is occupied %at time step $t-1$ 
and remains occupied thereafter, 
player $j$ will add $k$ to $\mathcal{A}^j(t)$ with $\mathbb{E}[T_o^j] \le 1926 K\ln(T)$ time steps;

(ii) if arm $k$ is not occupied %at time step $t-1$ 
and remains not occupied thereafter, player $j$ will not add $k$ to $\mathcal{A}^j(t)$;

(iii) if arm $k$ is released %at time $t-1$ 
and never occupied again, then player $j$ will remove $k$ from $\mathcal{A}^j(t)$ with $\mathbb{E}[T_r^j] \le {1141 m\ln(T) }/{\varepsilon}$ time steps;

(iv) if arm $k$ is not released %at time $t-1$ 
and remains not released thereafter, player $j$ will not remove $k$ from $\mathcal{A}^j(t)$.
\label{lm:temp-step-new}
\end{lemma}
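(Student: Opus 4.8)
The plan is to handle all four parts through a single template. For each queue I would first derive a bound on the conditional expectation of the bit that player \(j\) pushes—\(C^j_k\) for \(\mathcal{P}^j_k\) and \(D^j_k\) for \(\mathcal{Q}^j_k\)—that depends only on whether arm \(k\) is currently occupied, then invoke the concentration events \(\overline{\mathcal{E}_1}\) and \(\overline{\mathcal{E}_2}\) to transfer that bound from the conditional means to the realized window sums \(\sum_{i\in\mathcal{P}^j_k} i\) and \(\sum_{i\in\mathcal{Q}^j_k} i\), which are exactly the quantities compared against the thresholds \(\lceil 0.85 L_p\rceil\) and \(\lceil 0.142 L_q\rceil\). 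For the two existence statements (i) and (iii) I would additionally bound the expected number of steps needed to refill the relevant queue with \(L_p\) (resp.\ \(L_q\)) fresh entries, converting a per-iteration insertion probability into an expected hitting time.

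The technical core is a per-step collision computation that I would establish first, valid under arbitrary asynchronous alignment of the players' two-step iterations. If arm \(k\) is occupied by some player \(j'\), then across any two consecutive global steps \(j'\) pulls \(k\) at her guaranteed sub-step and pulls it again at the other sub-step with probability at least \(1-\varepsilon\); hence when \(j\) double-pulls \(k\) both of her pulls collide with probability at least \(1-\varepsilon\), so \(\mathbb{E}[C^j_k\mid\mathcal{F}]\ge 1-\varepsilon\), while a single probe of \(k\) avoids collision with probability at most \(\varepsilon\), so \(\mathbb{E}[D^j_k\mid\mathcal{F}]\le\varepsilon\). If instead \(k\) is not occupied, then by Remark~\ref{remark:c1} every other player pulls \(k\) in any given step with probability at most \(1/m\), so over the at most \(m-1\) other active players the non-collision probability is at least \((1-1/m)^{m-1}\ge 1/(2e)\); this yields \(\mathbb{E}[D^j_k\mid\mathcal{F}]\ge 1/(2e)\) and \(\mathbb{E}[C^j_k\mid\mathcal{F}]\le 1-1/(2e)\).

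With these two regimes in hand, each part follows by comparing against the thresholds using the deviations permitted by \(\overline{\mathcal{E}_1}\) and \(\overline{\mathcal{E}_2}\), where the strict inequality in the definitions of \(\mathcal{E}_1,\mathcal{E}_2\) gives the margin. For (i), once \(k\) is occupied the window sum exceeds \((1-\varepsilon-0.034)L_p\ge 0.866L_p\ge\lceil 0.85 L_p\rceil\), triggering the addition; since each exploration iteration inserts into \(\mathcal{P}^j_k\) with probability at least \((1-\varepsilon)/K\) and spans two steps, filling the window costs \(\mathbb{E}[T_o^j]\le 2L_p K/(1-\varepsilon)\le 1926K\ln T\). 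For (ii), the sum stays below \((1-1/(2e)+0.034)L_p=0.85L_p<\lceil 0.85 L_p\rceil\), so \(k\) is never added. For (iii), once \(k\) is released the sum exceeds \((1/(2e)-0.0419)L_q\ge\lceil 0.142 L_q\rceil\), triggering the removal; each iteration probes a given arm of \(\mathcal{A}^j\) with probability at least \(\varepsilon/m\), giving \(\mathbb{E}[T_r^j]\le 2L_q m/\varepsilon\le 1141m\ln T/\varepsilon\). For (iv), using \(\varepsilon\le 1/10\), the sum stays below \((\varepsilon+0.0419)L_q\le 0.1419L_q<\lceil 0.142 L_q\rceil\), so \(k\) is never removed.

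The main obstacle I anticipate is establishing the per-step collision and non-collision bounds under asynchrony, together with the conditional framing required by \(\mathcal{E}_1\) and \(\mathcal{E}_2\). Because the players run on independent clocks, \(j\)'s double-pull iteration need not align with \(j'\)'s, so I must argue that an occupying player always covers at least one of any two consecutive global steps with a sure pull of her arm (the structural fact underlying Condition~\ref{eq:cd1}) and that the occupied-case product \(C^j_k\) stays near one despite the \(\varepsilon\)-probes. I also need to keep the conditioning on the history clean, so that these entry-wise expectation bounds feed directly into the martingale-type concentration of \(\mathcal{E}_1,\mathcal{E}_2\), and to restrict the window accounting to the interval after the occupancy status has stabilized, ensuring that all \(L_p\) (resp.\ \(L_q\)) counted entries obey the same bound.
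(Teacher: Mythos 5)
Your proposal follows essentially the same route as the paper's proof: the same two-regime conditional bounds (occupied arm: two-step collision probability $\ge 1-\varepsilon$ and single-probe non-collision $\le \varepsilon$; unoccupied arm: non-collision $\ge 1/(2e)$ via the $1/m$ pull-probability guarantee of Remark~\ref{remark:c1}), the same transfer from conditional means to the window sums via $\overline{\mathcal{E}_1}$, $\overline{\mathcal{E}_2}$ compared against the thresholds $\lceil 0.85L_p\rceil$ and $\lceil 0.142L_q\rceil$, and the same waiting-time conversions yielding $2L_pK/(1-\varepsilon)\le 1926K\ln T$ and $2L_qm/\varepsilon\le 1141m\ln T/\varepsilon$. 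The only differences are cosmetic and immaterial (e.g., you use $(1-1/m)^{m-1}$ where the paper uses $(1-1/m)^m$, and insertion probability $(1-\varepsilon)/K$ in place of $(1-\varepsilon)/(K-|\mathcal{A}^j(t)|)$), so both the structure and the constants coincide with the paper's argument.
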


\begin{proofOfOne}
We begin to prove the first term. Let arm $k$ be occupied by player $j^\prime$ and never released. Suppose that the next time that player $j$ pulls arm $k$ twice is the $\tau$-th time we insert a value into $\mathcal{P}^j_k$. Then we know that for any $\tau_p \ge \tau$, the probability that player $j$ experiences two consecutive collisions is at least $1- \varepsilon$ (since player $j^\prime$ pulls $k$ twice with probability at least $1- \varepsilon$).
%
%at $1$
%
%at time steps $t-1$ and $t$ twice during the exploration phase, and $k\notin \mathcal{A}^j(t)$. 
%
%Let $p^j_i(t)$ denote the probability that player $j$ experiences two consecutive collisions at both $t-1$ and $t$ in this case. Since arm $k$ is occupied, player $j^\prime$ pulls $k$ twice with probability at least $1- \varepsilon$. Thus, $p^j_i(t)\geq 1- \varepsilon \geq 0.9$. Note that $C^j_k(t) \sim \mathrm{Bernoulli}(p^j_i(t))$, 
This implies
\begin{align*}
    \sum_{\tau_p= \tau}^{L_p+\tau}\mathbb{E}\left[ C^j_k\left(t^j_{k,p}(\tau_p)\right) \update{\Big| \mathcal{F}_{\tau_{p-1}}} \right] %= \sum_{\tau_p= \tau}^{L_p+\tau} p^j_i\left( t^j_k(\tau_p) \right) 
    \ge 0.9 L_p ~.
\end{align*}

Condition on $\overline{\mathcal{E}_1}$, we know that $\sum_{\tau_p= \tau}^{L_p+\tau} C^j_k(t^j_{k,p}(\tau_p)) \geq 0.9 L_p - 0.034 L_p \ge 0.85 L_p$ \update{under the condition of $\mathcal{F}_{\tau_{p-1}}$}. 
That is, after $L_p$ times of insert, player $j$ will put $k$ into $\mathcal{A}^j$.

%
%By Lemma \ref{lm:e1},
%\begin{align*}
%    \Pr\left[ \sum_{\tau_p= \tau}^{L_p+\tau} C^j_k\left(t^j_k(\tau_p)\right) \geq  0.85L_p  \right] &= \Pr\left[  \sum_{\tau_p= \tau}^{L_p+\tau} C^j_k\left(t^j_k(\tau_p)\right) \geq  0.9L_p - 0.05L_p  \right]  \\
%    &\geq \Pr\left[  \sum_{\tau_p= \tau}^{L_p+\tau} C^j_k\left(t^j_k(\tau_p)\right) \geq  \mathbb{E}[p^j_{i}(\tau_p)]L_p  - 0.05L_p  \right]  \\
%    &\geq \Pr\left[  \sum_{\tau_p= \tau}^{L_p+\tau} C^j_k\left(t^j_k(\tau_p)\right) \geq  \mathbb{E}[p^j_{i}(\tau_p)]L_p  - 0.034L_p  \right]  \\
%    &\geq \Pr \left[ \left| \sum_{\tau_p= \tau}^{L_p+\tau} C^j_k\left(t^j_k(\tau_p)\right) - \mathbb{E}\left[\sum_{\tau_p= \tau}^{L_p+\tau} C^j_k\left(t^j_k(\tau_p)\right)\right] \right| \leq 0.034L_p \right]  \\
%    &= \Pr\left[\,\overline{\mathcal{E}_1}\,\right] ~.
%\end{align*}
%Thus, when arm $k$ is occupied, $\sum_{\tau_p= \tau}^{L_p+\tau} C^j_k(t^j_k(\tau_p)) \geq 0.85 L_p$ and Line \ref{l:cond1} in Algorithm \ref{alg:1} is executed, resulting in arm $k$ being added to $\mathcal{A}^j(t)$.

Also note that if player $j$ is in the exploration phase and $k$ has not yet been added to $\mathcal{A}^j(t)$, the probability of pulling arm $k$ twice is at least $(1 - \varepsilon)/(K - |\mathcal{A}^j(t)|)$. %To fulfill Line \ref{l:cond1} in Algorithm \ref{alg:1}, 
Hence, $\mathbb{E}[T_{o}^j]$ is bounded by
\begin{align}
    \mathbb{E}[T_{o}^j] &\leq \max_{j,t} \frac{1}{(1-\varepsilon)/(K-|\mathcal{A}^j(t)| )} \cdot 2L_p + 1 \nonumber \\
    &\leq \max_{j,t} \frac{K-|\mathcal{A}^j(t)| }{1-\varepsilon}  \cdot  2L_p   + 1  \nonumber \\
    &\leq \frac{10}{9}K  \cdot  866 \ln(T) + 1 \label{eq:t-o-a} \\
    &\leq 1926 K\ln(T) ~,  \nonumber 
\end{align}
where \eqref{eq:t-o-a} is from $\varepsilon \leq 1/10$.
\end{proofOfOne}

\begin{proofOfTwo}
%Next, we prepare to bound the second term. 
Let arm $k$ remain unoccupied.  Suppose that the next time that player $j$ pulls arm $k$ twice is the $\tau$-th time we insert a value into $\mathcal{P}^j_k$. Then we know that for any $\tau_p \ge \tau$, the probability that player $j$ experiences two consecutive collisions is upper bounded by (note that by Remark \ref{remark:c1}, every player who is not exploiting arm $k$ can pull arm $k$ with probability at most $1/m$) :
%
%at least $1- \varepsilon$ (since player $j^\prime$ pulls $k$ twice with probability at least $1- \varepsilon$).
%Suppose player $j$ pulls arm $k$ at time steps $t-1$ and $t$ during the exploration phase, and $k\notin \mathcal{A}^j(t)$. 
%
%Let $p^j_{ii}(t)$ denote the probability that player $j$ experiences collisions at both $t-1$ and $t$. Although there does not exist a player $j^\prime \in [M]$ such that $\hat{k}^{j^\prime} = k$, arm $k$ may still be pulled due to random exploration by other players. Therefore, $p^j_{ii}(t)$ is bounded by 
\begin{align}
    1- \prod_{j'\ne j: j' \text{ is active}}(1-{\frac{1}{m}}) 
    %&\leq 1 - (1-\varepsilon)^{|\mathcal{M}_\mathrm{check}|} \cdot (1-\frac{1}{K-K/2})^{|\mathcal{M}_\mathrm{exp}|} \cdot (1-\frac{1}{K/2})^{|\mathcal{M}_\mathrm{cor}|} \label{eq:pdbar-a2} \\
    %&\leq 1 - (1-\frac{1}{K})^{|\mathcal{M}_\mathrm{check}|} (1-\frac{2}{K})^{|\mathcal{M}_\mathrm{exp}|} (1-\frac{2}{K})^{|\mathcal{M}_\mathrm{cor}|} \label{eq:pdbar-b2} \\
    %&\leq 1 - (1-\frac{2}{K})^{m} \nonumber \\
    %&\leq 1 - (1-\frac{2}{K})^{K/2} \label{eq:pdbar-d2} \\
    \leq 1 - (1-{\frac{1}{m}})^m\leq 1 - \frac{1}{2e} %\label{eq:pdbar-c2} \\
    \leq 0.816 ~. \nonumber
\end{align}

This implies
\begin{align*}
    \sum_{\tau_p= \tau}^{L_p+\tau}\mathbb{E}\left[ C^j_k\left(t^j_{k,p}(\tau_p)\right) \update{\Big| \mathcal{F}_{\tau_{p-1}}} \right] %= \sum_{\tau_p= \tau}^{L_p+\tau} p^j_i\left( t^j_k(\tau_p) \right) 
    \le 0.816 L_p ~.
\end{align*}

Condition on $\overline{\mathcal{E}_1}$, we know that $\sum_{\tau_p= \tau}^{L_p+\tau} C^j_k(t^j_{k,p}(\tau_p)) \leq 0.816 L_p + 0.034 L_p \le 0.85 L_p$ \update{under the condition of $\mathcal{F}_{\tau_{p-1}}$}. 
That is, player $j$ will never put $k$ into $\mathcal{A}^j$.
\end{proofOfTwo}

\begin{proofOfThree}
Now, we move to bound the third term. Let arm $k$ be released by player $j^\prime$ and never occupied. Suppose that the next time that player $j$ pulls arm $k$ is the $\tau$-th time we insert a value into $\mathcal{Q}^j_k$. Then we know that for any $\tau_q \ge \tau$, the probability that player $j$ does not experience a collision is at least $1/2e \ge 0.1839$, as stated in the proof of part (ii).
%
%at $1$
%
%at time steps $t-1$ and $t$ twice during the exploration phase, and $k\notin \mathcal{A}^j(t)$. 
%
%Let $p^j_i(t)$ denote the probability that player $j$ experiences two consecutive collisions at both $t-1$ and $t$ in this case. Since arm $k$ is occupied, player $j^\prime$ pulls $k$ twice with probability at least $1- \varepsilon$. Thus, $p^j_i(t)\geq 1- \varepsilon \geq 0.9$. Note that $C^j_k(t) \sim \mathrm{Bernoulli}(p^j_i(t))$, 
This implies
\begin{align*}
    \sum_{\tau_q= \tau}^{L_q+\tau} \mathbb{E}\left[ D^j_k\left(t^j_{k,q}(\tau_q)\right) \update{\Big| \mathcal{F}_{\tau_{q-1}}} \right] %= \sum_{\tau_p= \tau}^{L_p+\tau} p^j_i\left( t^j_k(\tau_p) \right) 
    \ge 0.1839 L_q ~.
\end{align*}

Condition on $\overline{\mathcal{E}_2}$, we know that $\sum_{\tau_q= \tau}^{L_p+\tau} C^j_k(t^j_{k,q}(\tau_q)) \geq 0.1839 L_q  - 0.0419 L_p \ge 0.142 L_q$ \update{under the condition of $\mathcal{F}_{\tau_{q-1}}$}. 
That is, after $L_q$ times of insert, player $j$ will remove $k$ from $\mathcal{A}^j$.

Also note that the probability of pulling a specific arm $k$ in $\mathcal{A}^j$ is either at least $1/K$ (during correction) or at least $\varepsilon / m$ (in other cases), and $\varepsilon / m \le 1/K$.
Therefore, \(\mathbb{E}[T_{r}^j]\) is bounded by
\begin{align*}
    \mathbb{E}[T_{r}^j] 
    &\leq \frac{m}{\varepsilon}  \cdot  2L_q  + 1   \\
    &\leq \frac{1140 m\ln(T) }{\varepsilon} + 1 \\
    &\leq \frac{1141 m\ln(T) }{\varepsilon}~.
\end{align*}

\end{proofOfThree}

\begin{proofOfFour}
Let arm $k$ remain occupied.  Suppose that the next time that player $j$ pulls arm $k$ s the $\tau$-th time we insert a value into $\mathcal{Q}^j_k$. Then we know that, similar to the proof of part (i), for any $\tau_q \ge \tau$, the probability that player $j$ does not experience a collision is at most $\varepsilon \le 0.1$.

This implies
\begin{align*}
    \sum_{\tau_q= \tau}^{L_q+\tau}\mathbb{E}\left[ C^j_k\left(t^j_{k,q}(\tau_q)\right) \update{\Big| \mathcal{F}_{\tau_{q-1}}} \right] %= \sum_{\tau_p= \tau}^{L_p+\tau} p^j_i\left( t^j_k(\tau_p) \right) 
    \le 0.1 L_q ~.
\end{align*}

Condition on $\overline{\mathcal{E}_2}$, we know that $\sum_{\tau_q= \tau}^{L_q+\tau} C^j_k(t^j_{k,q}(\tau_q)) \leq 0.1 L_q + 0.0419 L_q \le 0.142 L_q$ \update{under the condition of $\mathcal{F}_{\tau_{q-1}}$}. 
That is, player $j$ will never remove $k$ from $\mathcal{A}^j$.
\end{proofOfFour}

\begin{lemma}
Let $\overline{\mathcal{E}}_0$ holds, and consider any two arms $k$ and $k^\prime$ such that $\mu_k > \mu_{k^\prime}$ and $k \le m$. If \(\min\{N^j_k(t), N^j_{k^\prime}(t)\} \geq {96 \log(T)}/{\Delta^2}\),
then we have $\mathrm{LCB}^j_k(t) \geq \mathrm{UCB}^j_{k^\prime}(t)$.
\label{lm:exp-times}
\end{lemma}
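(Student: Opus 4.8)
The plan is to unfold the definitions of $\mathrm{LCB}$ and $\mathrm{UCB}$ and reduce the desired inequality to a single statement about the reward gap that the sample-count hypothesis will supply. First I would invoke the event $\overline{\mathcal{E}_0}$, which guarantees $|\hat{\mu}^j_\ell(t) - \mu_\ell| < \sqrt{6\log(T)/N^j_\ell(t)}$ for every arm $\ell$, player $j$, and time step $t$. Applying this to $\ell = k$ gives $\mathrm{LCB}^j_k(t) = \hat{\mu}^j_k(t) - \sqrt{6\log T/N^j_k(t)} > \mu_k - 2\sqrt{6\log T/N^j_k(t)}$, and applying it to $\ell = k'$ gives $\mathrm{UCB}^j_{k'}(t) = \hat{\mu}^j_{k'}(t) + \sqrt{6\log T/N^j_{k'}(t)} < \mu_{k'} + 2\sqrt{6\log T/N^j_{k'}(t)}$. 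Hence it suffices to establish
\[
\mu_k - \mu_{k'} \ge 2\sqrt{\frac{6\log T}{N^j_k(t)}} + 2\sqrt{\frac{6\log T}{N^j_{k'}(t)}}.
\]

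Next I would bound each confidence radius using the hypothesis $\min\{N^j_k(t), N^j_{k'}(t)\} \ge 96\log(T)/\Delta^2$. Substituting this lower bound into either radius yields $\sqrt{6\log T/N^j_k(t)} \le \sqrt{6\Delta^2/96} = \Delta/4$, and identically for $k'$, so the right-hand side above is at most $\Delta/2 + \Delta/2 = \Delta$.

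Finally I would argue that the left-hand side is at least $\Delta$. Since the means are sorted as $\mu_1 > \cdots > \mu_K$, the assumption $\mu_k > \mu_{k'}$ forces $k < k'$, and telescoping gives $\mu_k - \mu_{k'} = \sum_{\ell=k}^{k'-1}(\mu_\ell - \mu_{\ell+1}) \ge \mu_k - \mu_{k+1}$ since every summand is positive. Because $k \le m$, the gap $\mu_k - \mu_{k+1}$ is one of the terms in the minimum defining $\Delta = \min_{\ell \le m}(\mu_\ell - \mu_{\ell+1})$, so $\mu_k - \mu_{k+1} \ge \Delta$ and therefore $\mu_k - \mu_{k'} \ge \Delta$. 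Chaining the three bounds closes the argument. There is no substantive obstacle beyond bookkeeping; the only points that require care are verifying that the constant $96$ is precisely what collapses each radius to $\Delta/4$ so that the two radii sum to exactly the available gap $\Delta$, and confirming that the restriction $k \le m$ is genuinely used to guarantee $\mu_k - \mu_{k'} \ge \Delta$ rather than merely $\mu_k - \mu_{k'} > 0$.
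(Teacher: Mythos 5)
Your proof is correct and follows essentially the same route as the paper's: both invoke $\overline{\mathcal{E}_0}$ to replace empirical means by true means up to a radius $\sqrt{6\log T/N^j_\ell(t)}$, use the sample-count hypothesis to bound each radius by $\Delta/4$ (the identity $4\sqrt{6}=\sqrt{96}$), and use $k\le m$ together with the ordering of the means to get $\mu_k-\mu_{k'}\ge\Delta$. The only difference is presentational — you reduce to an inequality between true means, whereas the paper chains the same bounds through the empirical quantities — and your version is, if anything, slightly cleaner in making explicit why $k\le m$ is needed.
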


\begin{proof} 
\(\min\{N^j_k(t), N^j_{k'}(t)\} \geq {96 \log(T)}/{\Delta^2}\) implies that
\begin{align}
    \mu_k - \mu_{k^\prime}  \geq \Delta \geq \sqrt{\frac{96 \log(T)}{\min\{N^j_k(t), N^j_{k'}(t)\}}} \label{eq:ucb-a} ~.
\end{align}
Since $\overline{\mathcal{E}_0}$ holds, for player $j$,
\begin{align*}
    \hat{\mu}_k - \sqrt{\frac{6\log(T)}{N^j_k(t)}} \leq \mu_{k} \leq \hat{\mu}_k + \sqrt{\frac{6\log(T)}{N^j_k(t)}} ,\ \forall k \in[K] ~.
\end{align*}
Then, $\mu_k - \mu_{k^\prime}$ is upped bounded by
\begin{align}  
    \mu_k - \mu_{k^\prime} &\leq  \hat{\mu}_k + \sqrt{\frac{6\log(T)}{N^j_k(t)}} - \mu_{k^\prime} \nonumber \\
    &\leq \left[\hat{\mu}_k + \sqrt{\frac{6\log(T)}{N^j_k(t)}}\ \right] - \left[ \hat{\mu}_{k^\prime} - \sqrt{\frac{6\log(T)}{N^j_{k^\prime}(t)}}\ \right]~. \label{eq:ucb-b}
\end{align}
Combining \eqref{eq:ucb-a} and \eqref{eq:ucb-b} leads to
\begin{align*}
\hat{\mu}_k - \sqrt{\frac{6\log(T)}{N^j_k(t)}} &\geq 
    \left[\hat{\mu}_k + \sqrt{\frac{6\log(T)}{N^j_k(t)}}\ \right] - 2\sqrt{\frac{6 \log(T)}{\min\{N^j_k(t), N^j_{k'}(t)\}}} \\
    &\geq \left[ \hat{\mu}_{k^\prime} - \sqrt{\frac{6\log(T)}{N^j_{k^\prime}(t)}}\ \right] + 2\sqrt{\frac{6 \log(T)}{\min\{N^j_k(t), N^j_{k'}(t)\}}} \\
    &\geq \hat{\mu}_{k^\prime} + \sqrt{\frac{6\log(T)}{N^j_{k^\prime}(t)}} ~,
\end{align*}
which indicates $\mathrm{LCB}^j_k(t) \geq \mathrm{UCB}^j_{k^\prime}(t)$.
\end{proof}

\begin{lemma}
\label{lm:mlogT-3}
Given the condition of $t \notin \mathcal{G}^{j}_1 \cup \mathcal{G}^{j}_2$, let $\overline{\mathcal{E}_0}$ and $\overline{\mathcal{E}_3}$ holds. Then for any player $j$ and arm $k$, we have $N^j_k(t) \le {288 m \log(T)}/{\Delta^2}$.
\end{lemma}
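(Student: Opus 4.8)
The plan is to bound $N^j_k(t)$ by controlling the total amount of exploration player $j$ spends on arm $k$. First I would observe that $N^j_k$ increases only during the exploration phase while $k \notin \mathcal{A}^j$: the update in Line~\ref{l:update} and the definition \eqref{eq:mu2} touch only the available arms in the exploration branch, so occupied arms and exploitation steps contribute nothing to the count. The driving tool is Lemma~\ref{lm:exp-times}: under $\overline{\mathcal{E}_0}$, once every available arm has at least $96\log(T)/\Delta^2$ successful pulls, the best available arm meets Condition~\ref{eq:cd2} and the player switches to exploitation. Crucially, because $t \notin \mathcal{G}_1^j \cup \mathcal{G}_2^j$ the set $\mathcal{A}^j(t)$ is exactly correct and has at most $m-1$ elements, so the best available arm has index at most $m$ and Lemma~\ref{lm:exp-times} indeed applies to it.

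Next I would exploit the uniform sampling. During regular exploration player $j$ draws $k^j_1$ uniformly from $[K]\setminus\mathcal{A}^j$ (Line~\ref{l:ds-exp-1}), so in expectation every available arm accrues successful pulls at the same rate; the ``bottleneck'' arm whose count triggers the stopping condition therefore controls $N^j_k$ for every other available $k$ up to fluctuations. Combined with the constant lower bound on the per-step success probability during exploration (a consequence of the collision bound in Remark~\ref{remark:c1}), this lets me bound the expected number of successful exploration pulls of $k$ accrued within a single exploration phase by $\mathcal{O}(\log(T)/\Delta^2)$.

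The factor $m$ comes from re-exploration. Player $j$ re-enters exploration (Line~\ref{l:return}) only when a newly released arm is credibly better than her current $\hat{k}^j$, so under $\overline{\mathcal{E}_0}$ each re-entry strictly lowers the index of $\hat{k}^j$; since a clean-time exploitation index lies in $\{1,\dots,m\}$, at most $m$ exploration phases contribute to $N^j_k$. Summing the per-phase bound over these phases gives $\mathbb{E}[N^j_k(t)] \le 192\,m\log(T)/\Delta^2$. Finally I would invoke $\overline{\mathcal{E}_3}$: when $\mathbb{E}[N^j_k(t)] \ge 36\ln(T)$ it yields $N^j_k(t) \le \tfrac{3}{2}\mathbb{E}[N^j_k(t)] \le 288\,m\log(T)/\Delta^2$, and the regime $\mathbb{E}[N^j_k(t)] < 36\ln(T)$ is subsumed since $288\,m\log(T)/\Delta^2$ dominates $\mathcal{O}(\ln T)$.

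The main obstacle is the re-exploration accounting in the presence of counts that are never reset. Since $N^j_k$ accumulates across phases and $\mathcal{A}^j$ changes over time, I must argue that the cumulative count is governed by only $\mathcal{O}(m)$ threshold crossings rather than by the total number of arm releases over the horizon (which can be far larger). Establishing that each re-entry genuinely advances $\hat{k}^j$ under $\overline{\mathcal{E}_0}$, and that the uniform-sampling balance is robust enough across a changing available set for $\overline{\mathcal{E}_3}$ to upgrade the expected bound into a high-probability one, is the delicate part of the argument.
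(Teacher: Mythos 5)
Your proposal gets several ingredients right---the counters only grow during exploration over $[K]\setminus\mathcal{A}^j$ (Line~\ref{l:update}), Lemma~\ref{lm:exp-times} caps the exploration needed per arm at $\theta := 96\log(T)/\Delta^2$, and the final upgrade from $\mathbb{E}[N^j_k(t)]$ to $N^j_k(t)$ via $\overline{\mathcal{E}_3}$ matches the paper---but the step that produces the factor $m$ is wrong. You claim that under $\overline{\mathcal{E}_0}$ each return to exploration (Line~\ref{l:return}) strictly lowers the index of $\hat{k}^j$, so that at most $m$ exploration phases ever contribute to $N^j_k$. Neither part holds. The re-entry test is $\mathrm{LCB}^j_{\hat{k}^j} < \mathrm{UCB}^j_{k'}$ for a newly released arm $k'$, which can fire even when $\mu_{k'} < \mu_{\hat{k}^j}$ (the released arm is merely under-explored), and after re-exploring, player $j$ may settle back on the very same arm, so the index need not decrease. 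Worse, it can increase: while $j$ is re-exploring, both the arm she vacated and the released arm can be claimed by other (possibly newly arrived) players, forcing her onto a higher-index arm. More fundamentally, the number of exploration phases of a single player is driven by the number of release events she witnesses, which scales with the number of departures, i.e.\ with $M$, not with $m$; this is precisely why the paper's Lemma~\ref{lm:switch-times} can only bound the total number of changes of $\mathcal{A}^j$ by $3m^2M$. A ``per-phase bound times number of phases'' argument therefore yields a bound that grows with $M$, not the claimed $288m\log(T)/\Delta^2$.

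The paper closes this gap with a different mechanism: a capacity (water-filling) argument on expected counts that never counts phases at all. Because exploration is uniform over at least $K-m$ available arms, by the time $\mathbb{E}[N^j_k(t)]$ first reaches $2\theta$, the remaining arms have already absorbed total capped mass $\sum_{k'\neq k}\min\{\mathbb{E}[N^j_{k'}(t)],2\theta\} \ge 2(K-m)\theta$ (claim (i) in the paper's proof); and any further increase of an already-saturated arm must be matched, again by uniformity together with Lemma~\ref{lm:exp-times} and $\overline{\mathcal{E}_3}$, by an equal increase of some arm still below $2\theta$ (claim (ii)). Since this capped mass never exceeds $2(K-1)\theta$, the excess growth of arm $k$ is at most $2(m-1)\theta$, giving $\mathbb{E}[N^j_k(t)]\le 2m\theta$ no matter how many times the player re-enters exploration. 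The structural fact your decreasing-index claim tries to substitute for is that only the at most $m-1$ arms hidden inside $\mathcal{A}^j$ during the initial exploration can ever resurface as genuinely under-explored arms (counters are never reset), and each can do so essentially once; that is what the capped-mass bookkeeping formalizes. To salvage a phase-based proof you would need to (a) separate ``long'' re-explorations, triggered by a released arm with fewer than $\theta$ samples, of which there are at most $m-1$ by the above, from ``short'' ones, and (b) bound the leakage that the many short phases add to $N^j_k$---not assert that the number of phases is at most $m$.
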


\begin{proof}
Let $\theta:= {96 \log(T)}/{\Delta^2}$. Since \( t \notin \mathcal{G}^{j}_1 \cup \mathcal{G}^{j}_2 \), there are at most \( m - 1 \) arms in \( \mathcal{A}^j(t) \). Given that event \( \overline{\mathcal{E}_3} \) holds, we have for all \( k \in [K] \):
\begin{align}
\frac{1}{2} \mathbb{E}[N^j_k(t)] \le N^j_k(t) \le \frac{3}{2} \mathbb{E}[N^j_k(t)]~. \label{eq:n-ne}
\end{align}

We first claim that (i) for any arm $k$, when $\mathbb{E}[N^j_k(t)] = 2 \theta$, it must hold that $\sum_{k'\ne k} \min \{\mathbb{E}[N^j_{k'}(t)], 2\theta\} \ge 2(K-m)\theta$.

To prove this, let $k^*$ be the first arm such that $\mathbb{E}[N^j_{k^*}(t)] = 2 \theta$. Since there are at most $m-1$ arms in $\mathcal{A}^j$, there are at least $K-m+1$ arms in $[K] \setminus \mathcal{A}^j$. 
That is, when $\mathbb{E}[N^j_{k^*}(t)]$ increases $\delta$, there are another $K-m$ arms $k'$ have their $\mathbb{E}[N^j_{k'}(t)]$ increases $\delta$ since players are doing uniform exploration. Consequently, when $\mathbb{E}[N^j_{k^*}(t)] = 2 \theta$, we have $\sum_{k'\ne k^*} \min \{\mathbb{E}[N^j_{k'}(t)], 2\theta\} = \sum_{k'\ne k^*} \mathbb{E}[N^j_{k'}(t)] \ge 2(K-m)\theta$.

Now consider another arm $k \ne k^*$. 
% we know that at this time $t$, 
At this time step $t$, it follows that 
$\sum_{k'\ne k} \min \{\mathbb{E}[N^j_{k'}(t)], 2\theta\} \ge \sum_{k'\ne k^*} \min \{\mathbb{E}[N^j_{k'}(t)], 2\theta\} \ge 2(K-m)\theta$, which finishes the proof of our claim (i).

Next, we claim that (ii) condition on event $\overline{\mathcal{E}_0}$ and $\overline{\mathcal{E}_3}$, for any arm $k$ with $\mathbb{E}[N^j_{k}(t)] \ge 2 \theta$, if $\mathbb{E}[N^j_{k}(t)]$ increases by $\delta$, then there must be another arm $k'$ with $\mathbb{E}[N^j_{k'}(t)] < 2 \theta$, and $\mathbb{E}[N^j_{k'}(t)]$ increases by $\delta$.

This is proved by contradiction. Assume that when arm $k$ with $\mathbb{E}[N^j_{k}(t)] \ge 2 \theta$ increases its $\mathbb{E}[N^j_{k}(t)]$ by $\delta$, there are no other arm $k'$ with $\mathbb{E}[N^j_{k'}(t)] \le 2 \theta$. Then, by event $\overline{\mathcal{E}_3}$, 
all arms $k'' \notin \mathcal{A}^j(t)$ must satisfy $\mathbb{E}[N^j{k''}(t)] \ge 2\theta$, which implies $N^j_{k''}(t) \ge \theta$. 
% we know that for all arm $k'' \notin \mathcal{A}^j(t)$, $\mathbb{E}[N^j_{k''}(t)] \ge 2 \theta$, which implies $N^j_{k''}(t) \ge \theta$. 
By Lemma \ref{lm:exp-times}, we must figure out which arm is optimal in $[K] \setminus \mathcal{A}^j(t)$ and do not need to explore, 
% this means $\mathbb{E}[N^j_{k}(t)]$ cannot increase, and finish the proof of claim (ii).
Thus, $\mathbb{E}[N^j_k(t)]$ cannot increase, leading to a contradiction and completing the proof of claim (ii).

From claim (ii), we know that once $\sum_{k'\ne k} \min \{\mathbb{E}[N^j_{k'}(t)], 2\theta\} = 2(K-1)\theta$, $\mathbb{E}[N^j_{k}(t)]$ cannot increase. Also, when $\mathbb{E}[N^j_{k}(t)]$ increases by $\delta$, $\sum_{k'\ne k} \min \{\mathbb{E}[N^j_{k'}(t)], 2\theta\}$ must increase by $\delta$. By claim (i), we know that when $\mathbb{E}[N^j_k(t)] = 2 \theta$, $\sum_{k'\ne k} \min \{\mathbb{E}[N^j_{k'}(t)], 2\theta\} \ge 2(K-m)\theta$. Combining these results yields that $\mathbb{E}[N^j_k(t)]$ is at most $2\theta + 2(K-1)\theta -  2(K-m)\theta = 2m\theta$.

Finally, under event $\overline{\mathcal{E}_3}$, it follows that 
% Then by event $\overline{\mathcal{E}_3}$, we prove that 
$N^j_k(t) \le 3m\theta = {288 m \log(T)}/{\Delta^2}$.
%Now we prove that $\mathbb{E}[N^j_k(t)]$ will increase only when there are some arm $k'$,  
%Suppose that there exists an arm \( k \) such that \( N^j_k(t) \ge 3\theta \). Then, it follows that \( \mathbb{E}[N^j_k(t)] \ge \frac{2}{3} \cdot 3\theta = 2\theta \). Since player \( j \) pulls arms uniformly from \( [K] \setminus \mathcal{A}^j(t) \), an increase in \( \mathbb{E}[N^j_k(t)] \) implies a uniform increase across all arms in \( [K] \setminus \mathcal{A}^j(t) \). When \( \mathbb{E}[N^j_k(t)] \ge 2\theta \),
%\[
%\sum_{k \in [K] \setminus \mathcal{A}^j(t)} \mathbb{E}[N^j_k(t)] \ge 2(K - m + 1)\theta~.
%\]
%Now, let \(\mathbb{E}[N^j_k(t)] \ge 2\theta + 2(m - 1)\theta = 2m\theta\) and the following holds:
%\[
%\sum_{k \in [K] \setminus \mathcal{A}^j(t)} \mathbb{E}[N^j_k(t)] \ge 2(K - m + 1)\theta + 2(m-1)\theta = 2K\theta~,
%\]
%which implies \( \mathbb{E}[N^j_{k^\prime}(t)] \ge 2\theta \) for all \( k^\prime \in [K] \setminus \mathcal{A}^j(t) \). By \eqref{eq:n-ne}, this yields \( N^j_{k^\prime}(t) \ge \theta \). Therefore, by Lemma~\ref{lm:exp-times}, each such arm \( k^\prime \) has been distinguished and will no longer be explored by player \( j \) in future steps.
\end{proof}

\begin{lemma}
Under events $\overline{\mathcal{E}_1}$ and $\overline{\mathcal{E}_2}$,
\begin{align*}
    \sum_{j\le M}\sum_{t \in \mathcal{T}^j} \mathds{1}\left[\mathcal{A}^j(t) \ne \mathcal{A}^j(t+1) \right] \leq 3m^2M ~. 
\end{align*}
\label{lm:switch-times}
\end{lemma}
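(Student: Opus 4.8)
The plan is to charge every modification of the sets $\mathcal{A}^j$ to a genuine change in the occupation status of some arm, and then to control the number of such genuine changes through the (few) player departures. The first step is to use Lemma~\ref{lm:temp-step-new}: under $\overline{\mathcal{E}_1}\cap\overline{\mathcal{E}_2}$, parts~(ii) and~(iv) guarantee that player $j$ never inserts into $\mathcal{A}^j$ an arm that is not genuinely occupied, and never deletes from $\mathcal{A}^j$ an arm that is not genuinely released. Hence every event $\mathcal{A}^j(t)\ne\mathcal{A}^j(t+1)$ is either an \emph{addition} matched to a true occupation of the corresponding arm, or a \emph{removal} matched to a true release. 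For a fixed player $j$ and arm $k$, additions and removals of $k$ must alternate, and since $\mathcal{A}^j$ starts empty and $|\mathcal{A}^j(t)|\le m$ at all times (additions are blocked once correction begins, cf.\ Remark~\ref{remark:c1}), the number of additions made by $j$ exceeds her number of removals by at most $|\mathcal{A}^j(\text{final})|\le m$. Summing over players, the total number of additions is at most the total number of removals plus $mM$, so it suffices to bound the total number of removals.

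The second step bounds removals by counting \emph{release events} (an arm passing from occupied to free). When an arm $k$ is released, only players that are active at that moment and carry $k$ in $\mathcal{A}^j$ can delete it, and each such player deletes $k$ at most once per release (to delete it again she must first re-add it, which requires a fresh occupation). Since at most $m_t\le m$ players are active at any instant, each release event contributes at most $m$ removals across all players. Therefore the total number of removals is at most $m$ times the total number of release events.

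The third and central step is to show that the total number of release events is at most $mM$. By Lemma~\ref{lm:nsame} the occupied arms are distinct, so the only primitive cause of a release is a player's permanent departure (at most $M$ of these), since occupying a newly freed arm is the only other way an arm is released and it always stems from a player vacating her arm to move elsewhere. I would argue that each departure triggers a cascade of at most $m$ releases: the departure frees a single arm, and the ensuing voluntary returns form a chain in which the successively vacated arms have strictly increasing index, because each returning player re-occupies a strictly better arm (guaranteed, together with $\overline{\mathcal{E}_0}$ and Lemma~\ref{lm:exp-times}, by the fact that she returns only when a released arm $k$ satisfies $\mathrm{LCB}^j_{\hat{k}^j}<\mathrm{UCB}^j_k$). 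Consequently each of the at most $m$ active players moves at most once per cascade; attributing every voluntary release to the departure at the root of its trigger chain yields at most $mM$ release events. Combining the three steps gives removals $\le m\cdot mM = m^2M$, additions $\le m^2M+mM$, and hence a total of at most $2m^2M+mM\le 3m^2M$ changes, which is the claimed bound.

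The main obstacle is making the cascade bound of the third step fully rigorous under asynchrony. Because detection is delayed and several players may react to the same release simultaneously, I must rule out that a player moves more than once within one cascade and that interleaving cascades inflate the count. The key lever is the monotone ``improvement frontier'': no arm better than the one originally freed is ever re-freed inside a cascade, and every voluntary return strictly improves the mover's arm index, so the process is acyclic and terminates within $m$ moves. Formalizing this while juggling the approximate, history-dependent detection of releases — and, in particular, pinning down the exact role of the reward-concentration event $\overline{\mathcal{E}_0}$ in guaranteeing strict progress — is the delicate part of the argument.
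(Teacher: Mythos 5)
Your proposal follows essentially the same route as the paper's proof: removals are charged to the at most $M$ permanent departures, each triggering a cascade of at most $m$ releases with at most $m$ deletions (summed over active players) per release, giving at most $m^2M$ removals; additions are then paired one-to-one with removals up to the at most $mM$ arms remaining in the sets $\mathcal{A}^j$ at the end, yielding $2m^2M + mM \le 3m^2M$. If anything, your justification of the cascade-length bound (the strictly-improving chain of re-occupied arms) is more detailed than the paper's, which simply asserts that each departure causes at most $m$ players to switch back to exploration, each removing at most $m$ arms.
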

\begin{proof}
This bound follows from the fact that arm removals from \(\mathcal{A}^j\) are triggered by the permanent departure of players. Each such departure may cause a switch from exploitation to exploration for up to \(m\) remaining players, and each of these players may remove up to \(m\) arms from $\mathcal{A}^j$. Since at most \(M\) players can permanently leave the system, the total number of such removals is at most \(m^2M\). 

Since adding arms to $\mathcal{A}^j(t)$ and removing arms from $\mathcal{A}^j(t)$ can be a one-one mapping, except for those arms in $\mathcal{A}^j(t)$ at the end of the game. Hence, the number of times adding arms to $\mathcal{A}^j(t)$ is at most $m^2M + mM$.

Since both adding or removing leads to the change of $\mathcal{A}^j(t)$, taking the summation, we prove that $\sum_{j\le M}\sum_{t \in \mathcal{T}^j} \mathds{1}\left[\mathcal{A}^j(t) \ne \mathcal{A}^j(t+1) \right] \leq 2m^2M + mM \le 3m^2M$.
\end{proof}

The following analysis focuses on bounding the regret arising from $\vb, \va+\vc$ and $\vd$.

\Appvb*

\begin{proof}
Each exploration phase ends when both Condition~\ref{eq:cd1} and Condition~\ref{eq:cd2} are satisfied. Let \(T_c\) denote the time steps required for a player to satisfy Condition~\ref{eq:cd1} after Condition~\ref{eq:cd2} has been met during any exploration phase and $\mathcal{A}^j(t)$ does not change. $\mathbb{E}[T_c]$ is bounded as 
\begin{align}
    \mathbb{E}\left[T_c\right] &\leq 2\max_{j\leq M,t\in \mathcal{T}^j}\left[ \frac{1-\varepsilon}{K- |\mathcal{A}^j(t)|} \prod _{{j^{\prime}\in \mathcal{M}_\mathrm{exp}(t) }} \left(1 - \frac{1-\varepsilon}{K-|\mathcal{A}^{j^\prime}(t)|} \right) \right]^{-2} \label{eq:tao-a} \\
    &\leq 2\left[\min_{j\leq M,t\in \mathcal{T}^j} \frac{1-\varepsilon}{K- |\mathcal{A}^j(t)|} \prod _{{j^{\prime}\in \mathcal{M}_\mathrm{exp}(t) }} \left(1 - \frac{1-\varepsilon}{K-|\mathcal{A}^{j^\prime}(t)|} \right) \right]^{-2} \nonumber \\
    &\leq 2\left[\frac{9}{10K} \left(1- \frac{1}{K-m} \right)^{m} \right]^{-2} \label{eq:tao-b} \\
    &\leq 2\left[ \frac{9}{10K} \left((1- \frac{1}{m}\right)^{m} \right]^{-2} \label{eq:tao-c} \\
    &\leq {4e^2K^2} ~, \label{eq:tao-d}
\end{align}
where \eqref{eq:tao-a} holds because, at any given time $t$, the event that player $j$ selects arm $k$ and observes $\eta_k(t) = 0$ can be modeled as a Bernoulli trial. Condition \ref{eq:cd1} is satisfied only after observing two consecutive collision-free rounds. This corresponds to the waiting time until the first occurrence of two consecutive successes in a Bernoulli process, whose expected length is $({1 + p})/{p^2} \leq 2/p^2$, where $p$ is the single-trial success probability. \eqref{eq:tao-b} follows from the assumption that $\varepsilon \leq 1/10$, and \eqref{eq:tao-c} follows by Assumption \ref{Assumption:active_players}.

Define:
\begin{align*}
    & \mathcal{K}_1(t):= \{ k\leq K: k\text{ does not satisfy Condition \ref{eq:cd1} at step } t  \}~, \\
    & \mathcal{K}_2(t):= \{ k\leq K: k\text{ does not satisfy Condition \ref{eq:cd2} at step } t  \} ~.
\end{align*}
We deompose $\vb$ as
\begin{align*}
    \vb &= \sum_{j \le M}\sum_{t \in \mathcal{T}^j_{\mathrm{exp}}}  \mathbb{E}\left[\sum_{k\leq K } \mathds{1}[\pi^j(t)=k] \mathds{1}[t \notin \mathcal{G}^{j}_1 \cup \mathcal{G}^{j}_2]  \,\middle|\, \overline{\mathcal{E}_0} \right] \\
    &\leq \sum_{j \le M}\sum_{t \in \mathcal{T}^j_{\mathrm{exp}}}  \mathbb{E}\left[\sum_{k\leq K } \mathds{1}[\pi^j(t)=k] \mathds{1}[t \notin \mathcal{G}^{j}_1 \cup \mathcal{G}^{j}_2 ] \,\middle|\, \overline{\mathcal{E}_0}\cap \overline{\mathcal{E}_3} \right] + \sum_{j \le M} T^j_{\mathrm{exp}} \Pr[\overline{\mathcal{E}_3} ] \\
    &\leq \sum_{j \le M}\sum_{t \in \mathcal{T}^j_{\mathrm{exp}}}  \mathbb{E}\left[\sum_{k\leq K } \mathds{1}[\pi^j(t)=k] \mathds{1}[t \notin \mathcal{G}^{j}_1 \cup \mathcal{G}^{j}_2 ]\mathds{1}\{\pi^j(t) \in \mathcal{K}_2(t) \} \,\middle|\, \overline{\mathcal{E}_0}\cap \overline{\mathcal{E}_3} \right] \\
    &\quad + \sum_{j \le M}\sum_{t \in \mathcal{T}^j_{\mathrm{exp}}}  \mathbb{E}\left[\sum_{k\leq K } \mathds{1}[\pi^j(t)=k] \mathds{1}[t \notin \mathcal{G}^{j}_1 \cup \mathcal{G}^{j}_2 ]\mathds{1}\{\pi^j(t) \notin \mathcal{K}_2(t) \} \,\middle|\, \overline{\mathcal{E}_0}\cap \overline{\mathcal{E}_3} \right] + \sum_{j \le M} T^j_{\mathrm{exp}} \Pr[\overline{\mathcal{E}_3} ] \\
    &\leq \underbrace{\sum_{j \le M}\sum_{t \in \mathcal{T}^j_{\mathrm{exp}}}  \mathbb{E}\left[\sum_{k\notin \mathcal{A}^j(t) } \mathds{1}[\pi^j(t)=k] \mathds{1}[t \notin \mathcal{G}^{j}_1 \cup \mathcal{G}^{j}_2 ]\mathds{1}\{\pi^j(t) \in \mathcal{K}_2(t) \} \,\middle|\, \overline{\mathcal{E}_0}\cap \overline{\mathcal{E}_3} \right]}_{\vbo} \\
    &\quad + \underbrace{\sum_{j \le M}\sum_{t \in \mathcal{T}^j_{\mathrm{exp}}}  \mathbb{E}\left[\sum_{k\in \mathcal{A}^j(t) } \mathds{1}[\pi^j(t)=k] \mathds{1}[t \notin \mathcal{G}^{j}_1 \cup \mathcal{G}^{j}_2 ]\mathds{1}\{\pi^j(t) \in \mathcal{K}_2(t) \} \,\middle|\, \overline{\mathcal{E}_0}\cap \overline{\mathcal{E}_3} \right]}_{\vbt} \\
    &\quad + \underbrace{\sum_{j \le M}\sum_{t \in \mathcal{T}^j_{\mathrm{exp}}}  \mathbb{E}\left[\sum_{k\leq K } \mathds{1}[\pi^j(t)=k] \mathds{1}[t \notin \mathcal{G}^{j}_1 \cup \mathcal{G}^{j}_2 ]\mathds{1}\{\pi^j(t) \notin \mathcal{K}_2(t), \pi^j(t) \in \mathcal{K}_1(t) \} \,\middle|\, \overline{\mathcal{E}_0}\cap \overline{\mathcal{E}_3} \right]}_{\vbth} \\
    &\quad + \underbrace{\sum_{j \le M}\sum_{t \in \mathcal{T}^j_{\mathrm{exp}}}  \mathbb{E}\left[\sum_{k\leq K } \mathds{1}[\pi^j(t)=k] \mathds{1}[t \notin \mathcal{G}^{j}_1 \cup \mathcal{G}^{j}_2 ]\mathds{1}\{\pi^j(t) \notin \mathcal{K}_2(t), \pi^j(t) \notin \mathcal{K}_1(t) \} \,\middle|\, \overline{\mathcal{E}_0}\cap \overline{\mathcal{E}_3} \right]}_{\vbf} \\
    &\quad + \underbrace{\sum_{j \le M} T^j_{\mathrm{exp}} \Pr[\overline{\mathcal{E}_3} ]}_{\vbfi} ~.
\end{align*}
Here \(\vbo\) corresponds to the regret incurred from regular exploration from $[K] \setminus \mathcal{A}^j(t)$ when Condition \ref{eq:cd2} is not satisfied. \(\vbt\) captures the regret associated with exploring arms that are already occupied, to determine whether they have been released, when Condition \ref{eq:cd2} is not satisfied. $\vbth$ is the regret that arms satisfy Condition \ref{eq:cd2} but has not satisfy Condition \ref{eq:cd1} yet. Note that $\vbf$ $=0$ since players will leave the exploration phase if both Condition \ref{eq:cd1} and Condition \ref{eq:cd2} are satisfied. $\vbfi$ accounts for the regret due to the bad event $\mathcal{E}_3$.

$\vbo$ is upper bounded as 
\begin{align}
    \vbo &= \sum_{j \le M}\sum_{t \in \mathcal{T}^j_{\mathrm{exp}}}  \mathbb{E}\left[\sum_{k\notin \mathcal{A}^j(t) } \mathds{1}[\pi^j(t)=k, \eta_k(t) = 0] \mathds{1}[t \notin \mathcal{G}^{j}_1 \cup \mathcal{G}^{j}_2 ]\mathds{1}\{\pi^j(t) \in \mathcal{K}_2(t) \} \,\middle|\, \overline{\mathcal{E}_0}\cap \overline{\mathcal{E}_3} \right] \nonumber\\
    &\quad + \sum_{j \le M}\sum_{t \in \mathcal{T}^j_{\mathrm{exp}}}  \mathbb{E}\left[\sum_{k\notin \mathcal{A}^j(t) } \mathds{1}[\pi^j(t)=k, \eta_k(t) = 1] \mathds{1}[t \notin \mathcal{G}^{j}_1 \cup \mathcal{G}^{j}_2 ]\mathds{1}\{\pi^j(t) \in \mathcal{K}_2(t) \} \,\middle|\, \overline{\mathcal{E}_0}\cap \overline{\mathcal{E}_3} \right] \nonumber\\
    &= \sum_{j \le M}  \mathbb{E}\left[ \sum_{t \in \mathcal{T}^j_{\mathrm{exp}}} \sum_{k\notin \mathcal{A}^j(t) } \mathds{1}[\pi^j(t)=k, \eta_k(t) = 0] \mathds{1}[t \notin \mathcal{G}^{j}_1 \cup \mathcal{G}^{j}_2 ]\mathds{1}\{\pi^j(t) \in \mathcal{K}_2(t) \} \,\middle|\, \overline{\mathcal{E}_0}\cap \overline{\mathcal{E}_3} \right] \nonumber\\
    &\quad + \sum_{j \le M} \mathbb{E}\left[\sum_{t \in \mathcal{T}^j_{\mathrm{exp}}}  \sum_{k\notin \mathcal{A}^j(t) } \mathds{1}[\pi^j(t)=k, \eta_k(t) = 1] \mathds{1}[t \notin \mathcal{G}^{j}_1 \cup \mathcal{G}^{j}_2 ]\mathds{1}\{\pi^j(t) \in \mathcal{K}_2(t) \} \,\middle|\, \overline{\mathcal{E}_0}\cap \overline{\mathcal{E}_3} \right] \nonumber\\
    &\leq \sum_{j \le M}  \sum_{k\leq K }  \left(\frac{288m\log(T)}{\Delta^2} \right)  + \sum_{j \le M} \sum_{k\leq K } \left( \frac{1-1/2e}{1/2e} \frac{288m\log(T)}{\Delta^2} \right) \label{eq:boundB-a} \\
    &\leq \frac{576emKM\log(T)}{\Delta^2} ~, \nonumber
\end{align}
where \eqref{eq:boundB-a} follows from Lemma \ref{lm:mlogT-3}, and the probability that player $j$ pulls an arm \(k \notin \mathcal{A}^j(t)\) during the exploration phase and encounters a collision is at most \(1 - \frac{1}{2e}\). %, as established in \eqref{eq:pdbar-c2}. 

Since player $j$ pulls an arm in $\mathcal{A}^j(t)$ with probability $\varepsilon$ during the exploration phase (Line \ref{l:nsame_1}, Algorithm \ref{alg:2}), $\vbt$ is upper bounded by $\sum_{j\leq M} \varepsilon T^j_{\mathrm{exp}}$.

$\vbth$ is bounded as
\begin{align}
    \vbth &= \sum_{j \le M}\sum_{t \in \mathcal{T}^j_{\mathrm{exp}}}  \mathbb{E}\left[\sum_{k\leq K } \mathds{1}[\pi^j(t)=k] \mathds{1}[t \notin \mathcal{G}^{j}_1 \cup \mathcal{G}^{j}_2 ]\mathds{1}\{\pi^j(t) \notin \mathcal{K}_2(t), \pi^j(t) \in \mathcal{K}_1(t) \} \,\middle|\, \overline{\mathcal{E}_0}\cap \overline{\mathcal{E}_3} \right] \nonumber \\
    %&\leq \sum_{j \le M}  \mathbb{E}\left[\sum_{t \in \mathcal{T}^j_{\mathrm{exp}}} \sum_{k\notin \mathcal{A}^j(t)} \mathds{1}[t \notin \mathcal{G}^{j}_1 \cup \mathcal{G}^{j}_2, \mathcal{A}^j(t) \ne \mathcal{A}^j(t+1)] \,\middle|\, \overline{\mathcal{E}_0}\cap \overline{\mathcal{E}_3} \right] \nonumber \\
    &\leq \sum_{j \le M}\sum_{t \in \mathcal{T}^j}   \mathds{1}\left[\mathcal{A}^j(t) \ne \mathcal{A}^j(t+1) \right] \mathbb{E}[T_c] \label{eq:bound-e3-c} \\
    &\leq 3m^2M \mathbb{E}[T_c] \label{eq:bound-e3-a} \\
    &\leq 12e^2m^2K^2M ~, \label{eq:bound-e3-b}
\end{align}
Note that each update of \(\mathcal{A}^j(t)\) may trigger a new phase transition between exploration and exploitation, and each such transition requires player~\(j\) to satisfy both Condition~\ref{eq:cd1} and Condition~\ref{eq:cd2}. Thus, \eqref{eq:bound-e3-c} is the product of (i) the number of times \(\mathcal{A}^j(t)\) changes and (ii) the number of steps required for player~\(j\) to satisfy both Condition~\ref{eq:cd1} after Condition~\ref{eq:cd2} has been met. \eqref{eq:bound-e3-a} is from Lemma \ref{lm:switch-times}. \eqref{eq:bound-e3-b} is derived directly from \eqref{eq:tao-d}.

By Lemma~\ref{lm:p-e-4}, \(\vbfi\) is bounded by \(2KM^2\). Combining the bounds for \(\vbo\) through \(\vbfi\), we obtain the final bound for \(\vb\).
\end{proof}

\Appvavc*

\begin{proof}
Recall that the definitions are
\begin{align*}
     \va = \sum_{j \le M} \mathbb{E}\left[|\mathcal{G}^{j}_2| \,\middle|\, \overline{\mathcal{E}_0} \right] ,\quad
    \vc  = \sum_{j \le M}\sum_{t \in \mathcal{T}^j_{\mathrm{exp}}}  \mathbb{E}\left[\mathds{1}[t \in \mathcal{G}^{j}_1] \,\middle|\, \overline{\mathcal{E}_0} \right] ~,
\end{align*}
where
\begin{align*}
    & \mathcal{G}^{j}_1 = \left\{ T^j_\mathrm{start} \le  t \le T^j_\mathrm{end} : \exists j^{\prime}\neq j, j^{\prime}\in[M],\exists k\leq K, k\notin \mathcal{A}^j(t), \hat{k}^{j^{\prime}}(t) = k \right\} ~,\\
    & \mathcal{G}^{j}_2 = \left\{T^j_\mathrm{start} \le  t \le T^j_\mathrm{end}: \exists k\in \mathcal{A}^j(t), \forall j^{\prime}\neq j, j^{\prime}\in [M], \hat{k}^{j^{\prime}}\ne k \right\} ~.
\end{align*}
\(\va\) denotes the regret incurred when players have not yet removed released arms from \(\mathcal{A}^j(t)\). \(\vc\) corresponds to the regret caused by not yet adding occupied arms into \(\mathcal{A}^j(t)\).

We begin by analyzing the regret term $\va$.
\begin{align}
    \va &= \sum_{j\leq M} \mathbb{E}\left[\sum_{t\in \mathcal{T}^j} \mathds{1}[\exists k\in \mathcal{A}^j(t), \forall j^{\prime}\neq j, j^{\prime} \in [M], \hat{k}^{j^{\prime}}\ne k] \,\middle|\, \overline{\mathcal{E}_0}  \right] \nonumber \\
    &\leq  \sum_{j\leq M} \mathbb{E}\left[\sum_{t\in \mathcal{T}^j} \mathds{1}[\exists k\in \mathcal{A}^j(t), \forall j^{\prime}\neq j, j^{\prime} \in [M], \hat{k}^{j^{\prime}}\ne k] \,\middle|\, \overline{\mathcal{E}_2}\cap \overline{\mathcal{E}_0}  \right] + T^j\Pr[\mathcal{E}_2] \nonumber \\
    &\leq \sum_{j\leq M} \sum_{t \in \mathcal{T}^j} \frac{1}{3} \mathds{1}\left[\mathcal{A}^j(t) \ne \mathcal{A}^j(t+1) \right] \mathbb{E}[T_r^j] + \sum_{j\leq M}T^j \frac{2KM}{T} \label{eq:boundA-a} \\
    &\leq m^2M \mathbb{E}[T_r^j] + 2KM^2 \nonumber \\
    &\leq \frac{1141m^3M\ln(T)}{\varepsilon} + 2KM^2  ~, \nonumber
\end{align} 
where \eqref{eq:boundA-a} holds because when an arm $k$ is released, player $j$ will removes it from $\mathcal{A}^j(t)$ after $\mathbb{E}[T^j_r]$ time steps in expectation under the condition of $\overline{\mathcal{E}_2}$, as formally established in Lemma \ref{lm:temp-step-new}. 
Moreover, the total number of times arms are removed from $\mathcal{A}^j(t)$ is upper bounded by $m^2M$, as stated in Lemma \ref{lm:switch-times}. 
% \eqref{eq:boundA-b} comes from Lemma \ref{lm:switch-times}.

% 

% The bound in \eqref{eq:boundA-b} uses the fact that $S \leq m^2M$, since arm releases are triggered by the permanent departure of some player. Each such departure may cause the remaining players to switch from the exploitation phase to the exploration phase, during which at most $m$ players may remove arms $\mathcal{A}^j$, and each player can remove up to $m$ arms. Since at most $M$ players can permanently leave the system, the total number of removals is bounded by $m^2M$.

Next, we turn to bounding $\vc$.
\begin{align}
    \vc &= \sum_{j \le M} \mathbb{E}\left[ \sum_{t \in \mathcal{T}^j_{\mathrm{exp}}} \mathds{1} [\exists j^{\prime}\neq j, j^{\prime}\in[M],\exists k\leq K, k\notin \mathcal{A}^j(t), \hat{k}^{j^{\prime}}(t) = k ] \,\middle|\, \overline{\mathcal{E}_0} \right] \nonumber \\
    &\leq \sum_{j \le M} \mathbb{E}\left[ \sum_{t \in \mathcal{T}^j_{\mathrm{exp}}} \mathds{1} [\exists j^{\prime}\neq j, j^{\prime}\in[M],\exists k\leq K, k\notin \mathcal{A}^j(t), \hat{k}^{j^{\prime}}(t) = k ] \,\middle|\, \overline{\mathcal{E}_1} \cap \overline{\mathcal{E}_0} \right] + T^j_{\mathrm{exp}}\Pr[\mathcal{E}_1]  \nonumber \\
    &\leq \sum_{j\leq M} \sum_{t \in \mathcal{T}^j} \frac{2}{3} \mathds{1}\left[\mathcal{A}^j(t) \ne \mathcal{A}^j(t+1) \right] \mathbb{E}[T_o^j]  + \sum_{j\leq M}T^j \frac{2KM}{T} \label{eq:boundC-a} \\
    &\leq 2m^2M\mathbb{E}[T_o^j]  + 2KM^2 \nonumber \\
    &\leq 3852m^2KM\ln(T) + 2KM^2 ~, \nonumber 
\end{align}
where \eqref{eq:boundC-a} follows from the fact that when an arm becomes occupied, player $j$ adds it to $\mathcal{A}^j(t)$ after $E[T^j_o]= 964K\ln(T)$ time steps under the condition of $\overline{\mathcal{E}_1}$, as established in Lemma \ref{lm:temp-step-new}. The total number of times arms are added into $\mathcal{A}^j(t)$ is upper bounded by $2m^2M$, as stated in Lemma~\ref{lm:switch-times}.
% \eqref{eq:boundC-b} comes from Lemma \ref{lm:switch-times}.

% 

Finally, combining the bounds of $\va$ and $\vc$ leads to the desired result.
\end{proof}

\Appvd*

\begin{proof}
Recall the definition of $\vd$ is
\begin{align*}
    \vd = \sum_{j \le M}\sum_{t \in \mathcal{T}^j_{\mathrm{explt}}}  \mathbb{E}\left[\left(1 - \mathds{1}[\pi^j(t) \le m_t, \eta_{\pi^j(t)}(t) = 0]\right)\mathds{1}[t \notin \mathcal{G}^{j}_2] \,\middle|\,  \overline{\mathcal{E}_0} \right] ~,
\end{align*}
which represents the regret incurred during the exploitation phase. When player $j$ is in this phase, she either pulls her estimated best arm $\hat{k}^j$ or pulls an arm uniformly sampled from $\mathcal{A}^j(t)$. Accordingly, we decompose $\vd$ as follows:
\begin{align}
    \vd &\le \sum_{j \le M}\sum_{t \in \mathcal{T}^j_{\mathrm{explt}}}  \mathbb{E}\left[ \left(1 - \mathds{1}[\pi^j(t) \le m_t, \eta_{\pi^j(t)}(t) = 0]\right) \mathds{1}[\pi^j(t)= \hat{k}^j, t \notin \mathcal{G}^{j}_2] \,\middle|\,  \overline{\mathcal{E}_0} \right] \nonumber \\
    &\quad + \sum_{j \le M}\sum_{t \in \mathcal{T}^j_{\mathrm{explt}}}  \mathbb{E}\left[ \left(1 - \mathds{1}[\pi^j(t) \le m_t, \eta_{\pi^j(t)}(t) = 0]\right)  \mathds{1}[\pi^j(t) \in \mathcal{A}^j(t), t \notin \mathcal{G}^{j}_2] \,\middle|\,  \overline{\mathcal{E}_0} \right] \nonumber \\
    &\leq \sum_{j \le M}\sum_{t \in \mathcal{T}^j_{\mathrm{explt}}}  \mathbb{E}\left[ \left(1- \mathds{1}[ \eta_{\hat{k}^j}(t) = 0] \right) \mathds{1}[\pi^j(t)= \hat{k}^j, t \notin \mathcal{G}^{j}_2] \,\middle|\,  \overline{\mathcal{E}_0} \right] \label{eq:dmp-D-a} \\ 
    &\quad + \sum_{j \le M}\sum_{t \in \mathcal{T}^j_{\mathrm{explt}}}  \mathbb{E}\left[ \mathds{1}[\pi^j(t) \in \mathcal{A}^j(t), t \notin \mathcal{G}^{j}_2] \,\middle|\,  \overline{\mathcal{E}_0} \right] \nonumber \\
    &\leq \sum_{j \le M}\sum_{t \in \mathcal{T}^j_{\mathrm{explt}}}  \mathbb{E}\left[ \eta_{\hat{k}^j}(t)\mathds{1}[\pi^j(t)= \hat{k}^j, t \notin \mathcal{G}^{j}_2] \,\middle|\,  \overline{\mathcal{E}_0} \right] + \sum_{j \le M}\varepsilon T^j_{\mathrm{explt}}  \label{eq:dmp-D-b} \\ 
    &\le \underbrace{\sum_{j \le M}\sum_{t \in \mathcal{T}^j_{\mathrm{explt}}}  \mathbb{E}\left[ \eta_{\hat{k}^j}(t) \mathds{1}[\pi^j(t)= \hat{k}^j, \exists j^\prime\ne j,j^\prime\in[M], \hat{k}^j\notin \mathcal{A}^{j^\prime}(t) , t \notin \mathcal{G}^{j}_2] \,\middle|\,  \overline{\mathcal{E}_0} \right]}_{\vdo} \nonumber \\
    &\quad + \underbrace{\sum_{j \le M}\sum_{t \in \mathcal{T}^j_{\mathrm{explt}}}  \mathbb{E}\left[ \eta_{\hat{k}^j}(t)  \mathds{1}[\pi^j(t)= \hat{k}^j, \forall j^\prime\ne j, j^\prime\in[M], \hat{k}^j\in \mathcal{A}^{j^\prime}(t) , t \notin \mathcal{G}^{j}_2 ] \,\middle|\,  \overline{\mathcal{E}_0} \right]}_{\vdt} \nonumber \\
    &\quad + \sum_{j \le M}\varepsilon T^j_{\mathrm{explt}}~,  \nonumber 
\end{align}
where \eqref{eq:dmp-D-a} follows from the fact that \(t \notin \mathcal{G}_2\) implies no arms are mistakenly included in \(\mathcal{A}^j(t)\). Therefore, when player \(j\) is in the exploitation phase, she is exploiting an arm \(\hat{k}^j \leq m_t\). 
\eqref{eq:dmp-D-b} holds because $(1 - \mathds{1}[\eta_{\pi^j(t)}(t) = 0]) = \eta_{\pi^j(t)}(t)$, and player $j$ pulls arms from $\mathcal{A}^j(t)$ with probability $\varepsilon$ during the exploitation phase.

At the last inequality, $\vdo$ represents the regret caused when a newly joining player $j^\prime$ has not yet added $\hat{k}^j$ to her set $\mathcal{A}^{j^\prime}(t)$, explores $\hat{k}^j$, and collides with player $j$. The term $\vdt$ accounts for the regret incurred when any player $j^\prime$ has already added $\hat{k}^j$ into $\mathcal{A}^{j^\prime}(t)$, but still selects $\hat{k}^j$ and collides with player $j$.

$\vdo$ is bounded as
\begin{align}
    \vdo &= \sum_{j \le M} \mathbb{E}\left[ \sum_{t \in \mathcal{T}^j_{\mathrm{explt}}}  \mathds{1}[\pi^j(t)= \hat{k}^j, t \notin \mathcal{G}^{j}_2] \mathds{1}[ \exists j^\prime\ne j, j^\prime\in[M], \pi^{j^\prime}(t)= \hat{k}^j, \hat{k}^j\notin \mathcal{A}^{j^\prime}(t) ] \,\middle|\,  \overline{\mathcal{E}_0} \right] \nonumber \\
    &\leq \sum_{j \le M} \mathbb{E}\left[ \sum_{t \in \mathcal{T}^j_{\mathrm{explt}}}  \mathds{1}[\pi^j(t)= \hat{k}^j, \exists j^\prime\ne j, j^\prime\in[M], \pi^{j^\prime}(t)= \hat{k}^j, \hat{k}^j\notin \mathcal{A}^{j^\prime}(t) ] \,\middle|\,  \overline{\mathcal{E}_0} \right] \nonumber \\
    &= \sum_{j^{\prime} \le M} \mathbb{E}\left[ \sum_{t \in \mathcal{T}^{j^{\prime}}_{\mathrm{exp}}} \mathds{1} [\exists j\neq j^{\prime}, j\leq M,\exists k\leq K, k\notin \mathcal{A}^{j^{\prime}}(t), \hat{k}^{j}(t) = k ] \,\middle|\, \overline{\mathcal{E}_0} \right] \label{eq:boundD-o-a} \\
    &= \vc  \nonumber \\
    &\leq 3852m^2KM\ln(T) + 2KM^2  ~, \nonumber 
\end{align}
where \eqref{eq:boundD-o-a} holds because for any collision event on arm \( k \) between an exploiting player \( j \) and an exploring player \( j^\prime \), it is equivalent to count the event from the perspective of \( j^\prime \), who pulls arm \( k \notin \mathcal{A}^{j^\prime}(t) \) while \( k = \hat{k}^j \) is being exploited by player \( j \). The rest of the analysis is identical to that of \(\vc\).

Next, we proceed to bound $\vdt$.
\begin{align}
    \vdt &= \sum_{j \le M}\sum_{t \in \mathcal{T}^j_{\mathrm{explt}}}  \mathbb{E}\left[ \eta_{\hat{k}^j}(t)  \mathds{1}[\pi^j(t)= \hat{k}^j, \forall j^\prime\ne j, \hat{k}^j\in \mathcal{A}^{j^\prime}(t) , t \notin \mathcal{G}^{j}_2 ] \,\middle|\,  \overline{\mathcal{E}_0} \right] \nonumber \\
    &= \sum_{j \le M}\sum_{t \in \mathcal{T}^j_{\mathrm{explt}}}  \mathbb{E}\left[ \mathds{1}[\pi^j(t)= \hat{k}^j, t \notin \mathcal{G}^{j}_2] \mathds{1}[ \forall j^\prime\ne j,\pi^{j^\prime}=\hat{k}^j, \hat{k}^j\in \mathcal{A}^{j^\prime}(t) ] \,\middle|\,  \overline{\mathcal{E}_0} \right] \nonumber \\
    &\leq \sum_{j \le M} \max_{j^\prime\leq M} \varepsilon T^{j^\prime}_{\mathrm{explt}} ~, \label{eq:boundD-t-a}
\end{align}
where \eqref{eq:boundD-t-a} is since player $j^\prime$ pulls arms in $\mathcal{A}^{j^\prime}(t)$ with probability $\varepsilon$.
\end{proof}

We now aggregate the bounds for \(\va\), \(\vb\), \(\vc\), \(\vd\), and \(\ve\) to obtain the total regret \(R(T)\).
\begin{align}
    R(T) &\leq \frac{576emKM\log(T)}{\Delta^2} + 12e^2m^2K^2M + 2KM^2 + \sum_{j\leq M} \varepsilon T^j_{\mathrm{exp}}  \nonumber \\
    &\quad +  \frac{1141m^3M\ln(T)}{\varepsilon} +  3852m^2KM\ln(T) + 4KM^2 + 2KM^2 \nonumber \\
    &\quad + 3852m^2KM\ln(T) + 2KM^2 + \sum_{j \le M}  \varepsilon  ( T^j_{\mathrm{explt}} + \max_{j^\prime\leq M} T^{j^\prime}_{\mathrm{explt}} )\nonumber \\
    &\leq \frac{576emKM\log(T)}{\Delta^2} + 7704m^2KM\ln(T) + (4emKM)^2 \nonumber \\
    &\quad + \frac{1141m^3M\ln(T)}{\varepsilon} + 2\varepsilon MT \label{eq:all-b} \\
    &\leq \frac{576emKM\log(T)}{\Delta^2} + 7704m^2KM\ln(T) + (4emKM)^2 \nonumber \\
    &\quad + 96m^{3/2}M\sqrt{T\ln(T)} \label{eq:all-a} ~,
\end{align}
where \eqref{eq:all-a} follows from the choice $\varepsilon= \sqrt{1141m^3\ln(T) / 2T}$.

\textbf{Discussion on Unknown $m$}

Although algorithm \ref{alg:1} requires \(m\) as input, it can still function when \(m\) is unknown. By Assumption \ref{Assumption:active_players}, \(m\) is upper-bounded by \(K/2\). Therefore, in Line \ref{l:cor-grt}, we can simply replace \(m\) with \(K/2\), and the algorithm remains valid.

In the theoretical analysis, when \(m\) is unknown, the set \(\mathcal{A}^j\) may contain up to \(K/2 - 1\) arms, since some released arms may not have been removed yet. As a result, it holds that
\begin{align}
    \sum_{j\le M}\sum_{t \in \mathcal{T}^j} \mathds{1}\left[\mathcal{A}^j(t) \ne \mathcal{A}^j(t+1) \right] \leq 3(\frac{K}{2})^2M \leq \frac{3K^2M}{4}  ~,
    \label{eq:switch-times-new}
\end{align}
Finally, setting $\varepsilon= \sqrt{1141K^3\ln(T) / 16T}$ leads to the following corollary:
\begin{corollary}
Given $K$ arms and $M$ players, $\varepsilon = \min\{ \sqrt{\frac{1141K^3\ln(T)}{16T}}, \frac{1}{K}, \frac{1}{10} \}$, the regret of Algorithm \ref{alg:1} is bounded by
    \begin{align*}
    R(T) \leq & \frac{288eK^2M\log(T)}{\Delta^2} + 34K^{3/2}M\sqrt{T\ln(T)} + 1926K^3M\ln(T) + (3eK^2M)^2  ~,
\end{align*}
where $\Delta = \min_{k\leq m}(\mu_k- \mu_{k+1})$.
\label{cor:upper}
\end{corollary}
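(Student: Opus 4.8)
The plan is to observe that Corollary~\ref{cor:upper} is nothing more than Theorem~\ref{th:upper} with the parameter \(m\) uniformly replaced by its worst-case surrogate \(K/2\), so the entire chain of lemmas can be reused almost verbatim. The justification rests on two facts noted just before the corollary: by Assumption~\ref{Assumption:active_players} the true number of active players satisfies \(m \le K/2\), and when \(m\) is unavailable we substitute \(K/2\) for \(m\) in the correction trigger of Line~\ref{l:cor-grt}. Consequently, during regular exploration we now have \(|\mathcal{A}^j(t)| \le K/2 - 1\) rather than \(m-1\), and during correction \(|\mathcal{A}^j(t)| \ge K/2\). The first task is therefore to re-derive the analogue of Remark~\ref{remark:c1}, showing that every exploring player still pulls any fixed arm with probability at most \(2/K\): in the non-correction case this probability is \(1/(K-|\mathcal{A}^j(t)|)\le 1/(K/2+1)\), and in the correction case it is \(1/|\mathcal{A}^j(t)|\le 2/K\).

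Next I would walk through the supporting lemmas and confirm which are insensitive to the substitution and which pick up a factor of \(K/2\). The concentration results (Lemmas~\ref{lm:p-e}, \ref{lm:e1}, \ref{lm:e2}, \ref{lm:p-e-4}) and the no-double-occupancy result (Lemma~\ref{lm:nsame}) make no reference to \(m\) and carry over unchanged, since the queue lengths in \eqref{eq:length} are fixed. The detection guarantees of Lemma~\ref{lm:temp-step-new} are the delicate point: the thresholds \(0.85 L_p\) and \(0.142 L_q\) were calibrated so that an unoccupied arm yields a collision probability at most \(1 - 1/(2e)\) and a released arm yields a non-collision probability at least \(1/(2e)\). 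I would check that these remain valid because the governing quantity is now \((1 - 2/K)^{K/2}\), which has exactly the same form \((1-1/N)^N \ge 1/(2e)\) that held for \(N = m\); hence parts (ii) and (iv) are unaffected, while parts (i) and (iii) simply replace \(m\) by \(K/2\) in the waiting-time bounds, giving \(\mathbb{E}[T_o^j] \le 1926 K\ln T\) (already a function of \(K\)) and \(\mathbb{E}[T_r^j] \le 1141(K/2)\ln(T)/\varepsilon\).

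With these in hand, the switching count of Lemma~\ref{lm:switch-times} becomes the \eqref{eq:switch-times-new} bound \(3(K/2)^2 M = 3K^2M/4\), the per-arm pull count of Lemma~\ref{lm:mlogT-3} becomes \(288(K/2)\log(T)/\Delta^2\), and feeding these into Lemmas~\ref{lm:vb}, \ref{lm:va-vc}, and \ref{lm:vd} reproduces those three bounds with every occurrence of \(m\) replaced by \(K/2\). Finally I would set \(\varepsilon = \sqrt{1141 K^3 \ln(T)/(16T)}\), which is precisely the theorem's \(\varepsilon = \sqrt{1141 m^3\ln(T)/(2T)}\) evaluated at \(m = K/2\), and repeat the aggregation \eqref{eq:all-b}--\eqref{eq:all-a}. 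Balancing \(1141(K/2)^3 M\ln(T)/\varepsilon\) against \(2\varepsilon MT\) yields the \(34 K^{3/2}M\sqrt{T\ln T}\) term, while the remaining terms collapse to \(288 e K^2 M\log(T)/\Delta^2\), \(1926 K^3 M\ln(T)\), and \((2eK^2M)^2 \le (3eK^2M)^2\).

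The main obstacle I anticipate is one of verification rather than of new ideas: I must confirm that the numerical thresholds \(0.85 L_p\) and \(0.142 L_q\) still separate the occupied/unoccupied (resp.\ released/unreleased) cases once the per-player pull probability is loosened from \(1/m\) to \(2/K\). Because \((1-1/N)^N\) is increasing in \(N\) and bounded below by \(1/(2e)\) for all integers \(N \ge 2\), this separation is preserved and no threshold needs recalibration; the rest is bookkeeping of constants, so I would keep the argument at the level of pointing to the corresponding equations in the proof of Theorem~\ref{th:upper} rather than re-deriving them.
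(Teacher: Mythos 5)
Your proposal is correct and follows essentially the same route as the paper: the paper's own proof likewise substitutes \(K/2\) for \(m\) in the correction trigger of Line~\ref{l:cor-grt}, notes that the switching bound of Lemma~\ref{lm:switch-times} becomes \(3(K/2)^2M \le 3K^2M/4\) as in \eqref{eq:switch-times-new}, and then re-reads the bound of Theorem~\ref{th:upper} with \(m = K/2\) and \(\varepsilon = \sqrt{1141K^3\ln(T)/(16T)}\). Your additional verification that the detection thresholds \(0.85L_p\) and \(0.142L_q\) survive the substitution (via \((1-2/K)^{K/2} \ge 1/(2e)\)) is a detail the paper leaves implicit, and it is carried out correctly.
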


\section{Technical Lemmas}

\begin{lemma}[Hoeffding's Inequality]
Let $X_1,...,X_N$ be i.i.d variables with $X_i\in[0,1]$ for any $i\leq N$. Define $\hat{\mu}:= \frac{1}{N} \sum_{i\leq N} X_i$. Denote the expectation of $X_i$ by $\mu$. For any $\delta>0$,
\begin{align*}
    \Pr[|\hat{\mu} - \mu| \geq \delta] \leq 2\exp(-{2N\delta^2}) ~.
\end{align*}
\label{lm:hfd}
\end{lemma}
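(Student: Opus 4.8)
The plan is to prove this two-sided concentration bound by the classical Chernoff--Hoeffding method, in three stages: reduce to a single tail, bound the moment generating function of the sum via independence and a per-variable estimate, and optimize the free exponential parameter. First I would split the event using the union bound, $\Pr[|\hat\mu-\mu|\ge\delta]\le\Pr[\hat\mu-\mu\ge\delta]+\Pr[\mu-\hat\mu\ge\delta]$, which is exactly where the leading factor $2$ comes from. The lower tail reduces to the upper tail by replacing each $X_i$ with $1-X_i$, which is again $[0,1]$-valued with mean $1-\mu$; hence both tails satisfy the same bound and it suffices to prove $\Pr[\hat\mu-\mu\ge\delta]\le\exp(-2N\delta^2)$.

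For the upper tail I would apply the exponential Markov inequality. Writing $S_N:=\sum_{i\le N}(X_i-\mu)$ so that $\hat\mu-\mu=S_N/N$, for every $s>0$ we have
\begin{align*}
    \Pr[\hat\mu-\mu\ge\delta]=\Pr[S_N\ge N\delta]\le e^{-sN\delta}\,\mathbb{E}\!\left[e^{sS_N}\right]=e^{-sN\delta}\prod_{i\le N}\mathbb{E}\!\left[e^{s(X_i-\mu)}\right],
\end{align*}
where the factorization uses that the $X_i$ are independent (and identically distributed).

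The crux of the argument --- and the step I expect to be the main obstacle --- is \emph{Hoeffding's lemma}, the per-variable bound $\mathbb{E}[e^{s(X-\mu)}]\le e^{s^2/8}$ for any $X\in[0,1]$ with mean $\mu$ and any $s\in\mathbb{R}$. I would establish this by analyzing the cumulant generating function $\psi(s):=\ln\mathbb{E}[e^{s(X-\mu)}]$. One checks $\psi(0)=0$ and $\psi'(0)=\mathbb{E}[X-\mu]=0$, while $\psi''(s)$ equals the variance of $X$ under the exponentially tilted measure $d\nu_s\propto e^{s(X-\mu)}d\mu_X$. Since this tilted law is still supported on $[0,1]$, its variance is at most $\tfrac14$ (the maximal variance of a $[0,1]$-valued random variable, attained at the two-point distribution on $\{0,1\}$). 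Taylor's theorem then gives $\psi(s)=\tfrac{s^2}{2}\psi''(\xi)\le \tfrac{s^2}{8}$ for some $\xi$ between $0$ and $s$. Substituting into the product yields $\prod_{i\le N}\mathbb{E}[e^{s(X_i-\mu)}]\le e^{Ns^2/8}$, so $\Pr[\hat\mu-\mu\ge\delta]\le e^{-sN\delta+Ns^2/8}$.

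It remains only to optimize over $s>0$. The exponent $-sN\delta+Ns^2/8$ is a convex quadratic in $s$, minimized at $s=4\delta$, where its value is $-2N\delta^2$; this gives $\Pr[\hat\mu-\mu\ge\delta]\le e^{-2N\delta^2}$. Combining with the symmetric lower-tail bound through the union bound from the first step produces the claimed $\Pr[|\hat\mu-\mu|\ge\delta]\le 2\exp(-2N\delta^2)$. The only delicate point worth double-checking is the variance-at-most-$\tfrac14$ estimate for the tilted distribution; everything else is routine convexity, independence, and calculus.
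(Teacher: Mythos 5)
Your proof is correct: the union-bound reduction to one tail, the Chernoff exponential-Markov step with factorization over independent variables, Hoeffding's lemma via the tilted-measure bound $\psi''(s)\le\tfrac14$ (valid since any $[0,1]$-supported law has variance at most $\tfrac14$), and the optimization $s=4\delta$ yielding exponent $-2N\delta^2$ are all sound, and this is the standard argument. The paper itself offers no proof to compare against --- it states Lemma~\ref{lm:hfd} as a classical technical lemma and simply invokes it --- so your self-contained derivation fills in exactly the textbook proof the authors implicitly rely on, with no gaps.
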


\begin{lemma}[Hoeffding's Inequality for Sum]
Let $X_1,...,X_N$ be independent variables with $X_i\in[0,1]$ for any $i\leq N$. Define $S_N:= \sum_{i\leq N} X_i$. For any $t>0$,
\begin{align*}
    \Pr\left[|S_N- \mathbb{E}[S_N] | \geq t\right] \leq 2\exp\left(-\frac{2t^2}{N}\right) ~.
\end{align*}
\label{lm:hfd-sum}
\end{lemma}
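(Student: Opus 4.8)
The plan is to prove this via the standard Chernoff bounding method combined with Hoeffding's lemma on the moment generating function of bounded random variables. First I would bound the upper tail $\Pr[S_N - \mathbb{E}[S_N] \geq t]$, and then recover the two-sided statement by symmetry and a union bound.

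For the upper tail, fix any $\lambda > 0$ and apply Markov's inequality to the nonnegative random variable $\exp(\lambda(S_N - \mathbb{E}[S_N]))$, which gives
\[
\Pr[S_N - \mathbb{E}[S_N] \geq t] \leq e^{-\lambda t}\, \mathbb{E}\!\left[ e^{\lambda(S_N - \mathbb{E}[S_N])} \right].
\]
Since the $X_i$ are independent, the expectation factorizes as $\prod_{i \leq N} \mathbb{E}\bigl[e^{\lambda(X_i - \mathbb{E}[X_i])}\bigr]$, reducing the problem to controlling each single-variable factor.

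The key step is Hoeffding's lemma: for each centered variable $Y_i := X_i - \mathbb{E}[X_i]$, whose support lies in an interval of length at most $1$ (because $X_i \in [0,1]$), one has $\mathbb{E}[e^{\lambda Y_i}] \leq \exp(\lambda^2 / 8)$. I would establish this by setting $\psi(\lambda) := \log \mathbb{E}[e^{\lambda Y_i}]$, noting $\psi(0) = 0$ and $\psi'(0) = \mathbb{E}[Y_i] = 0$, and then bounding $\psi''(\lambda)$ uniformly: the second derivative of the log-MGF equals the variance of $Y_i$ under the exponentially tilted measure, which is supported on the same length-$1$ interval and hence has variance at most $1/4$ by Popoviciu's inequality. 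A second-order Taylor expansion then yields $\psi(\lambda) \leq \lambda^2/8$. Multiplying over the $N$ factors produces the product bound $\exp(N\lambda^2/8)$, so that
\[
\Pr[S_N - \mathbb{E}[S_N] \geq t] \leq \exp\!\left( -\lambda t + \tfrac{N\lambda^2}{8} \right).
\]

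Finally I would optimize the exponent over $\lambda > 0$; the minimizer is $\lambda = 4t/N$, yielding the bound $\exp(-2t^2/N)$ on the upper tail. Applying the identical argument to the variables $-X_i$ bounds the lower tail $\Pr[\mathbb{E}[S_N] - S_N \geq t]$ by the same quantity, and combining the two tails via a union bound introduces the factor of $2$ in the stated inequality. The main obstacle is Hoeffding's lemma itself: establishing the uniform bound $\psi''(\lambda) \leq 1/4$ requires the interpretation of the second derivative as a tilted variance together with Popoviciu's inequality, but once this is in place the remaining steps are routine calculus and an elementary optimization.
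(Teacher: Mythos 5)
Your proof is correct: the Chernoff bounding step, Hoeffding's lemma via the tilted-variance bound $\psi''(\lambda) \leq 1/4$, the optimization $\lambda = 4t/N$ yielding the exponent $-2t^2/N$, and the union bound for the two-sided statement are all carried out accurately. The paper states this lemma as a standard technical result without proof, and your derivation is precisely the classical argument that underlies it, so there is nothing to reconcile.
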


\begin{lemma}[Chernoff Bound] 
Let $X_1,...,X_N$ be independent variables with $X_i\sim \mathrm{Bernoulli}(p_i)$. Define $S_N:= \sum_{i\leq N} X_i$. $\mathbb{E}[S] = \sum_{i\leq N} p_i$. For any $\delta>0$,
\[
\Pr\left[ |S_N- \mathbb{E}[S_N]| \geq \delta \mathbb{E}[S_N] \right] \leq 2\exp\left( -\frac{\delta^2\mathbb{E}[S_N]}{3} \right) ~.
\]
\label{lm:chr}
\end{lemma}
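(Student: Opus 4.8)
The plan is to establish this two-sided multiplicative concentration inequality by the standard Chernoff (exponential-moment) method, bounding the upper and lower deviations separately and combining them with a union bound. Write $\mu := \mathbb{E}[S_N] = \sum_{i \le N} p_i$. Since the event $\{|S_N - \mu| \ge \delta\mu\}$ is contained in $\{S_N \ge (1+\delta)\mu\} \cup \{S_N \le (1-\delta)\mu\}$, it suffices to bound each of $\Pr[S_N \ge (1+\delta)\mu]$ and $\Pr[S_N \le (1-\delta)\mu]$ by $\exp(-\delta^2\mu/3)$ and then add.

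For the upper tail I would apply the exponential Markov inequality: for every $s > 0$,
\[
\Pr[S_N \ge (1+\delta)\mu] \le e^{-s(1+\delta)\mu}\,\mathbb{E}\!\left[e^{sS_N}\right].
\]
Because the $X_i$ are independent $\mathrm{Bernoulli}(p_i)$, the moment generating function factorizes as $\mathbb{E}[e^{sS_N}] = \prod_{i\le N}\bigl(1 + p_i(e^s - 1)\bigr)$, and the elementary bound $1 + x \le e^x$ gives $\mathbb{E}[e^{sS_N}] \le \exp\bigl(\mu(e^s - 1)\bigr)$. Substituting and optimizing over $s$ by taking $s = \ln(1+\delta)$ yields the closed form $\exp\bigl(-\mu[(1+\delta)\ln(1+\delta) - \delta]\bigr)$.

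The lower tail is symmetric: applying the same argument to $-S_N$ (equivalently, Markov with a negative parameter $s$) produces $\Pr[S_N \le (1-\delta)\mu] \le \exp\bigl(-\mu[(1-\delta)\ln(1-\delta) + \delta]\bigr)$. To reduce both exponents to the uniform form $\delta^2\mu/3$, I would invoke the scalar estimates $(1+\delta)\ln(1+\delta) - \delta \ge \delta^2/3$ and $(1-\delta)\ln(1-\delta) + \delta \ge \delta^2/2 \ge \delta^2/3$, valid on $\delta \in (0,1)$. Combining the two tail bounds via the union bound then gives $\Pr[|S_N - \mu| \ge \delta\mu] \le 2\exp(-\delta^2\mu/3)$, as claimed.

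The only genuinely delicate step is verifying the scalar inequality $(1+\delta)\ln(1+\delta) - \delta \ge \delta^2/3$, which is exactly what fixes the constant $3$ in the exponent; I expect this to be the main obstacle, and it can be handled by examining $f(\delta) := (1+\delta)\ln(1+\delta) - \delta - \delta^2/3$ via its derivatives at the origin (with $f(0) = f'(0) = 0$). This estimate degrades for large $\delta$, which is why the clean $\delta^2/3$ form is intended for $\delta \in (0,1)$ — precisely the regime ($\delta = 1/2$) in which the lemma is actually applied in the preceding proofs (e.g.\ Lemma~\ref{lm:p-e-4}).
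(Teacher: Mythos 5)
Your proof is correct on the range where it is stated to apply, and there is nothing in the paper to compare it against: the paper lists this Chernoff bound in its ``Technical Lemmas'' appendix as a standard result, with no proof, exactly as it does for Hoeffding's inequality (Lemma~\ref{lm:hfd}). Your argument is the canonical one --- exponential Markov, factorized MGF with $1+x\le e^x$, optimizer $s=\ln(1+\delta)$, the two KL-type exponents, and the scalar estimates $(1+\delta)\ln(1+\delta)-\delta\ge\delta^2/3$ and $(1-\delta)\ln(1-\delta)+\delta\ge\delta^2/2$ on $(0,1)$ --- and each of these steps checks out (for the first scalar inequality, $f'(\delta)=\ln(1+\delta)-2\delta/3$ has $f''(\delta)=\tfrac{1}{1+\delta}-\tfrac23$, so $f'$ increases on $[0,\tfrac12]$ and decreases afterwards, and $f'(1)=\ln 2-\tfrac23>0$ gives $f'\ge 0$, hence $f\ge f(0)=0$ on $[0,1]$).

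One point worth making explicit: the restriction to $\delta\in(0,1)$ is not merely a convenience of your argument --- the lemma as printed, ``for any $\delta>0$,'' is actually false with the constant $3$. Take a single $\mathrm{Bernoulli}(p)$ with small $p$ and $\delta=1/p-1$: then $\Pr[S_1\ge(1+\delta)\mu]=p$, while $2\exp(-\delta^2\mu/3)\approx 2\exp(-1/(3p))$ is far smaller. The correct all-$\delta$ form has exponent $\delta^2\mu/(2+\delta)$, which only dominates $\delta^2\mu/3$ when $\delta\le 1$. So your proof establishes the strongest statement that is true, and since the paper invokes the lemma only with $\delta=1/2$ (in Lemma~\ref{lm:p-e-4}), nothing downstream is affected; the paper's phrasing should simply be amended to $0<\delta\le 1$.
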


\section{Experimental Details} \label{app:exp}

This section provides additional details of the experiments.

\begin{table}[t]
  \centering
  % ---- Row 1 ----
\begin{subtable}[b]{0.24\textwidth}
  \centering
  \begin{tabular}{ccc}
    \toprule
    \textbf{j} & \textbf{Start} & \textbf{End} \\
    \midrule
    1  & 25419  & 1107891 \\
    2  & 522732 & 1427541 \\
    3  & 770967 & 1493795 \\
    4  & 760785 & 1561277 \\
    5  & 119594 & 1713244 \\
    6  & 887212 & 1472214 \\
    7  & 729606 & 1637557 \\
    8  & 310982 & 1325183 \\
    9  & 330898 & 1063558 \\
    10 & 863103 & 1623298 \\
    11 & 358465 & 1115869 \\
    12 & 771270 & 1074044 \\
    \bottomrule
  \end{tabular}
  \\[1em]
  \caption{Players 1--12}
  \label{tb:players-1-12}
\end{subtable}
\hfill
% ---- Row 2 ----
\begin{subtable}[b]{0.24\textwidth}
  \centering
  \begin{tabular}{ccc}
    \toprule
    \textbf{j} & \textbf{Start} & \textbf{End} \\
    \midrule
    13 & 706857 & 1729007 \\
    14 & 5522   & 1815461 \\
    15 & 772244 & 1198715 \\
    16 & 74550  & 1986886 \\
    17 & 140924 & 1802196 \\
    18 & 280934 & 1542696 \\
    19 & 828737 & 1356753 \\
    20 & 388677 & 1271349 \\
    21 & 45227  & 1325330 \\
    22 & 88492  & 1195982 \\
    23 & 597899 & 1921874 \\
    24 & 939498 & 1894827 \\
    \bottomrule
  \end{tabular}
  \\[1em]
  \caption{Players 13--24}
  \label{tb:players-13-24}
\end{subtable}
\hfill
% ---- Row 3 ----
\begin{subtable}[b]{0.24\textwidth}
  \centering
  \begin{tabular}{ccc}
    \toprule
    \textbf{j} & \textbf{Start} & \textbf{End} \\
    \midrule
    25 & 969584 & 1775132 \\
    26 & 546710 & 1184854 \\
    27 & 311711 & 1520068 \\
    28 & 258779 & 1662522 \\
    29 & 34388  & 1909320 \\
    30 & 122038 & 1495176 \\
    31 & 684233 & 1440152 \\
    32 & 304613 & 1097672 \\
    33 & 965632 & 1808397 \\
    34 & 65051  & 1948885 \\
    35 & 607544 & 1170524 \\
    36 & 592414 & 1046450 \\
    37 & 199673 & 1514234 \\
    \bottomrule
  \end{tabular}
  \\[1em]
  \caption{Players 25--37}
  \label{tb:players-25-37}
\end{subtable}
\hfill
% ---- Row 4 ----
\begin{subtable}[b]{0.24\textwidth}
  \centering
  \begin{tabular}{ccc}
    \toprule
    \textbf{j} & \textbf{Start} & \textbf{End} \\
    \midrule
    38 & 456069 & 1785175 \\
    39 & 292144 & 1366361 \\
    40 & 611852 & 1139493 \\
    41 & 431945 & 1291229 \\
    42 & 304242 & 1524756 \\
    43 & 181824 & 1183404 \\
    44 & 832442 & 1212339 \\
    45 & 20584  & 1969909 \\
    46 & 601115 & 1708072 \\
    47 & 58083  & 1866176 \\
    48 & 156018 & 1155994 \\
    49 & 731993 & 1598658 \\
    50 & 374540 & 1950714 \\
    \bottomrule
  \end{tabular}
  \\[1em]
  \caption{Players 38--50}
  \label{tb:players-38-50}
\end{subtable}
\caption{Players' Active periods for comparison on varying $M$ under the random asynchronization setting.}
\label{tb:m-set-r}
\end{table}

\begin{table}[ht]
  \centering
  % ---- Row 1 ---- (synthetic data)
\begin{subtable}[t]{0.24\textwidth}
  \centering
  \begin{tabular}{ccc}
    \toprule
    \textbf{j} & \textbf{Start} & \textbf{End} \\
    \midrule
    1--3   & $0$             & $1 \times 10^5$ \\
    4--6   & $8 \times 10^4$ & $2 \times 10^6$ \\
    7--10  & $0$             & $2 \times 10^6$ \\
    \bottomrule
  \end{tabular}
  \\[1em]
  \caption{M=10.}
  \label{tb:m-set-s-10}
\end{subtable}
\hfill
\begin{subtable}[t]{0.24\textwidth}
  \centering
  \begin{tabular}{ccc}
    \toprule
    \textbf{j} & \textbf{Start} & \textbf{End} \\
    \midrule
    1--7   & $0$             & $1 \times 10^5$ \\
    8--13  & $8 \times 10^4$ & $2 \times 10^6$ \\
    14--20 & $0$             & $2 \times 10^6$ \\
    \bottomrule
  \end{tabular}
  \\[1em]
  \caption{M=20.}
  \label{tb:m-set-s-20}
\end{subtable}
% \hfill
% \begin{subtable}[t]{0.25\textwidth}
%   \centering
%   \begin{tabular}{ccc}
%     \toprule
%     \textbf{j} & \textbf{Start} & \textbf{End} \\
%     \midrule
%     1--10   & $0$             & $1 \times 10^5$ \\
%     11--20  & $8 \times 10^4$ & $2 \times 10^6$ \\
%     21--30  & $0$             & $2 \times 10^6$ \\
%     \bottomrule
%   \end{tabular}
%   \\[1em]
%   \caption{M=30.}
%   \label{tb:m-set-s-30}
% \end{subtable}
\hfill
\begin{subtable}[t]{0.25\textwidth}
  \centering
  \begin{tabular}{ccc}
    \toprule
    \textbf{j} & \textbf{Start} & \textbf{End} \\
    \midrule
    1--17   & $0$             & $1 \times 10^5$ \\
    18--33  & $8 \times 10^4$ & $2 \times 10^6$ \\
    34--50  & $0$             & $2 \times 10^6$ \\
    \bottomrule
  \end{tabular}
  \\[1em]
  \caption{M=50.}
  \label{tb:m-set-s-50}
\end{subtable}
\caption{Players' active periods for comparison on varying $M$ under the synthetic asynchronization setting.}
\label{tb:m-set-s}
\end{table}

\subsection{Implementation of Baseline Algorithms}

D-MC requires a shared clock across all players to synchronize the start of new epochs, and the size of the exploration phase in an epoch needs to depend on the lower bound of $\Delta$. To ensure a fair comparison in our decentralized and asynchronous setting, we implement D-MC such that players do not have access to a global clock. Instead, each player resets its epoch every \(T / 5\) steps, and half of an epoch is used to explore.
MCTopM is not designed for asynchronous environments and assumes knowledge of the total number of players \(M\). In our setting, 
this corresponds to knowing \(m_t\) for any time step \(t\). Since MCTopM cannot estimate \(m_t\) like D-MC does, and in accordance with our algorithm which takes $m$ as input, we set each player's estimate of \(m_t\) to \(m\) throughout the experiment. 

\subsection{Comparison of Number of Players}

\textbf{Setup: } 
We evaluate the performance of our algorithm under different numbers of players, with \(M = 10\), \(20\), and \(50\). The environment consists of Gaussian bandits, where the reward of each arm \(k\) is drawn from \(\mathcal{N}(\mu_k, 0.5^2)\). The smallest mean is fixed at \(\mu_{K} = 0.1\), and the gap between adjacent arms is set to \(0.05\). The number of arms is fixed at \(K = 100\) across all experiments. 
Both random and synthetic asynchronous scenarios are considered. For the random asynchronous setting, we use the same generation process described earlier, with the same random seed but a larger number of players, \(M = 50\). Specifically, each player \(j\) is active from time step \(T^j_{\mathrm{start}} \in [1, T/2]\) to \(T^j_{\mathrm{end}} \in [T/2, T]\), with \(T^j_{\mathrm{end}} - T^j_{\mathrm{start}} \geq T/50\). The resulting data is provided in Table~\ref{tb:m-set-r}, where the experiment with \(M = 10\) uses the first 10 players, and those with \(M = 20\) and \(50\) use the first 20 and 50 players, respectively. The synthetic case is manually constructed and summarized in Table~\ref{tb:m-set-s}.

\update{
\textbf{Result Analysis on Figure \ref{fig:r-m}:} $\ \ $ 
Figure~\ref{fig:r-m-10}-\ref{fig:r-m-50} presents the performance under different values of \(M\) in the random asynchronous setting. As \(M\) increases, all algorithms exhibit higher regret. This is primarily due to the decentralized nature of the environment, where players cannot communicate directly and therefore tend to explore independently. All algorithms exhibit slow regret growth near the end of the time horizon, which is expected since players gradually leave the system in this setting. Once no players remain active, regret accumulation naturally stops. Both ACE and UCB exhibit early convergence compared to SMAA, GoT, DYN-MMAB, D-MC, and MCTopM.

From Figure~\ref{fig:s-m-10} to Figure \ref{fig:s-m-50}, we compare the performance under different values of \(M\) in the synthetic asynchronous setting. D-MC shows a phase-wise growth pattern, with distinct stages of regret increase. UCB maintains lower regret in the early stages but shows linear regret growth later. 
In contrast, ACE demonstrates both stability and strong convergence across all settings, including varying levels of asynchrony and different numbers of players. This robustness makes it a highly reliable choice in decentralized and asynchronous environments.

\textbf{Result Analysis on Figure \ref{fig:ucb-m-rdm}:} $\ \ $ 
We compare ACE with different types of UCB algorithms with various parameters in Figure \ref{fig:ucb-m-rdm}. 
%ACE displays a similar multi-phase behavior when \(M\) is small (Figure~\ref{fig:ucb-m-rdm-10}). %This behavior fades for larger \(M\), as the lack of communication in decentralized settings forces each player to explore more independently, slowing down convergence. 
While UCB algorithms demonstrate superior performance in the random asynchronous setting (Figure \ref{fig:ucb-m-rdm-10}–\ref{fig:ucb-m-rdm-50}), they still incur linearly increasing regret in the synthetic asynchronous setting (Figure \ref{fig:ucb-m-syn-10}-\ref{fig:ucb-m-syn-50}). In contrast, ACE converges and eventually outperforms UCB algorithms in the synthetic asynchronous setting.
}

\begin{figure}[th]
    \centering
    \begin{subfigure}[b]{0.33\textwidth}
        \includegraphics[width=\textwidth]{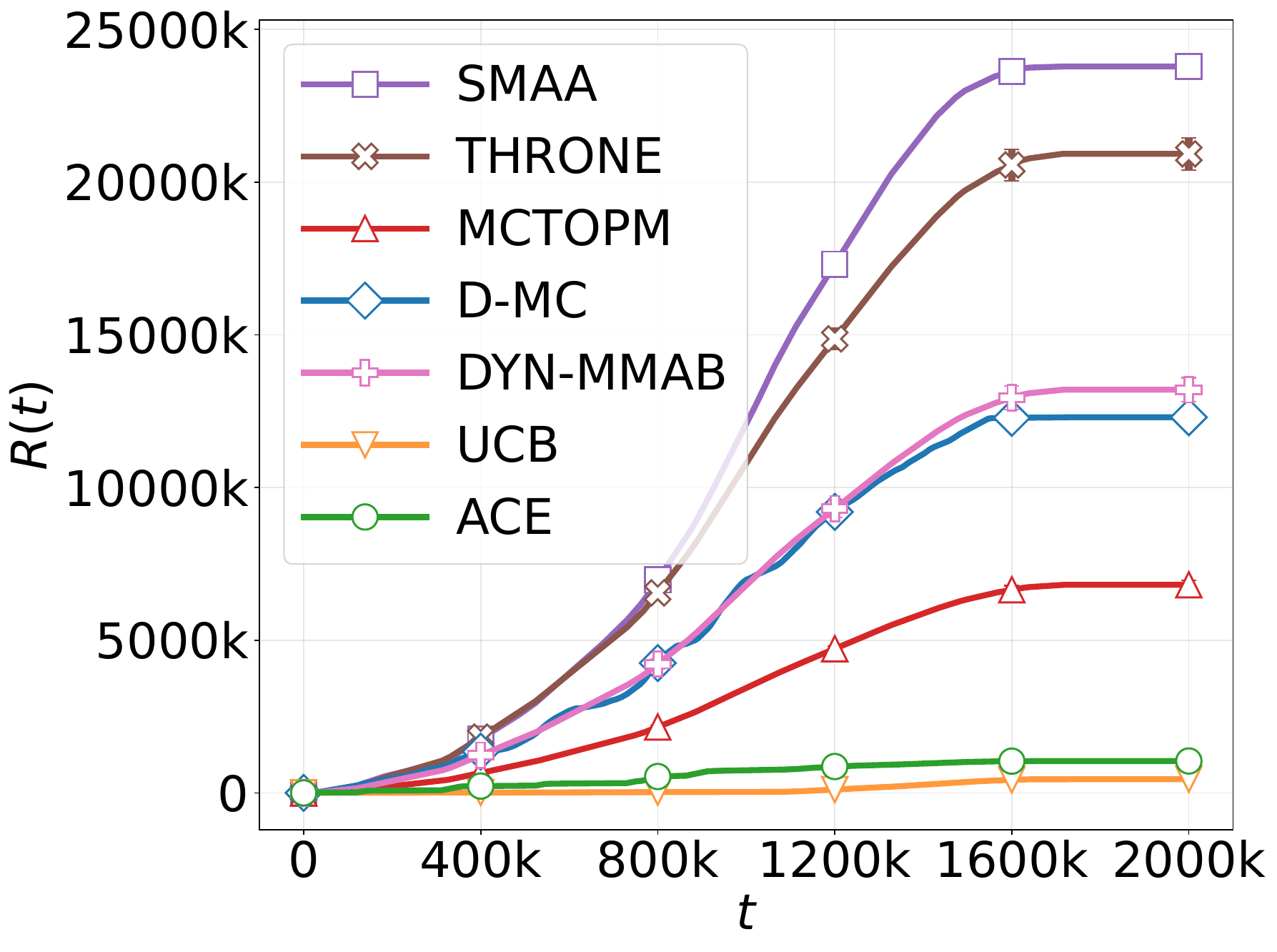}
        \caption{M=10, random.}
        \label{fig:r-m-10}
    \end{subfigure}
    \hfill
    \begin{subfigure}[b]{0.33\textwidth}
        \includegraphics[width=\textwidth]{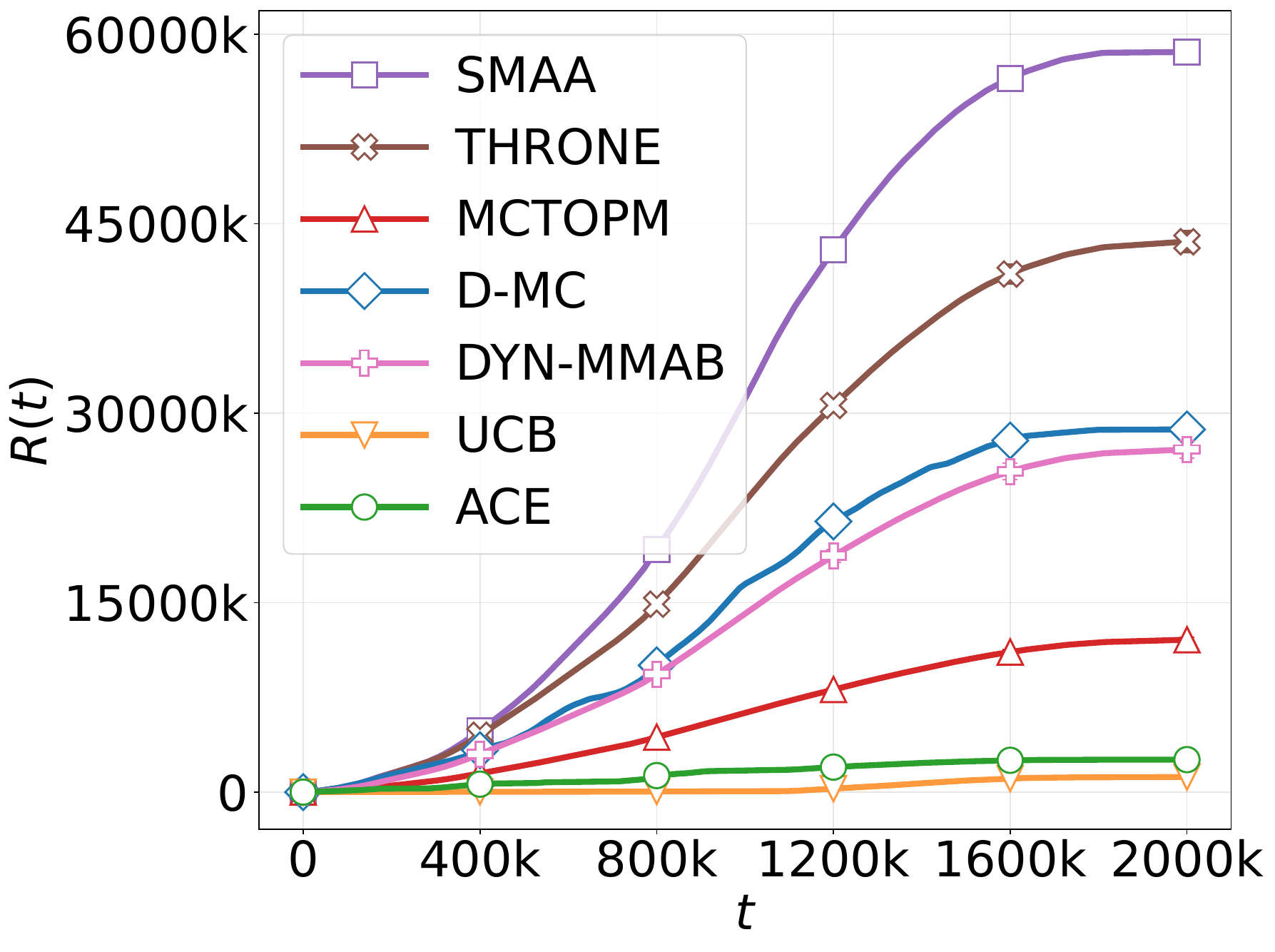}
        \caption{M=20, random.}
        \label{fig:r-m-20}
    \end{subfigure}
    \hfill
    \begin{subfigure}[b]{0.33\textwidth}
        \includegraphics[width=\textwidth]{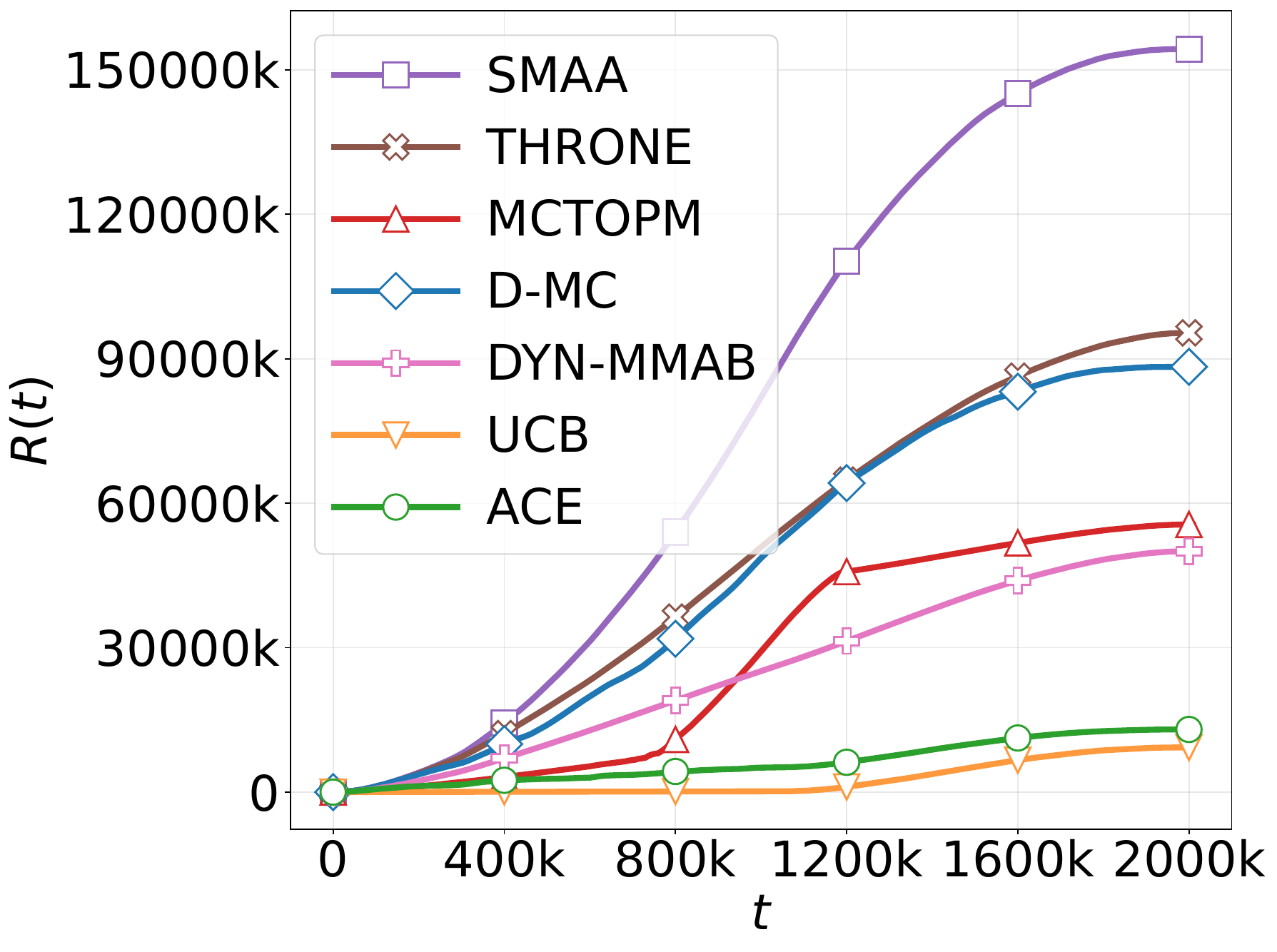}
        \caption{M=50, random.}
        \label{fig:r-m-50}
    \end{subfigure}
    \\[1em]
    \begin{subfigure}[b]{0.33\textwidth}
        \includegraphics[width=\textwidth]{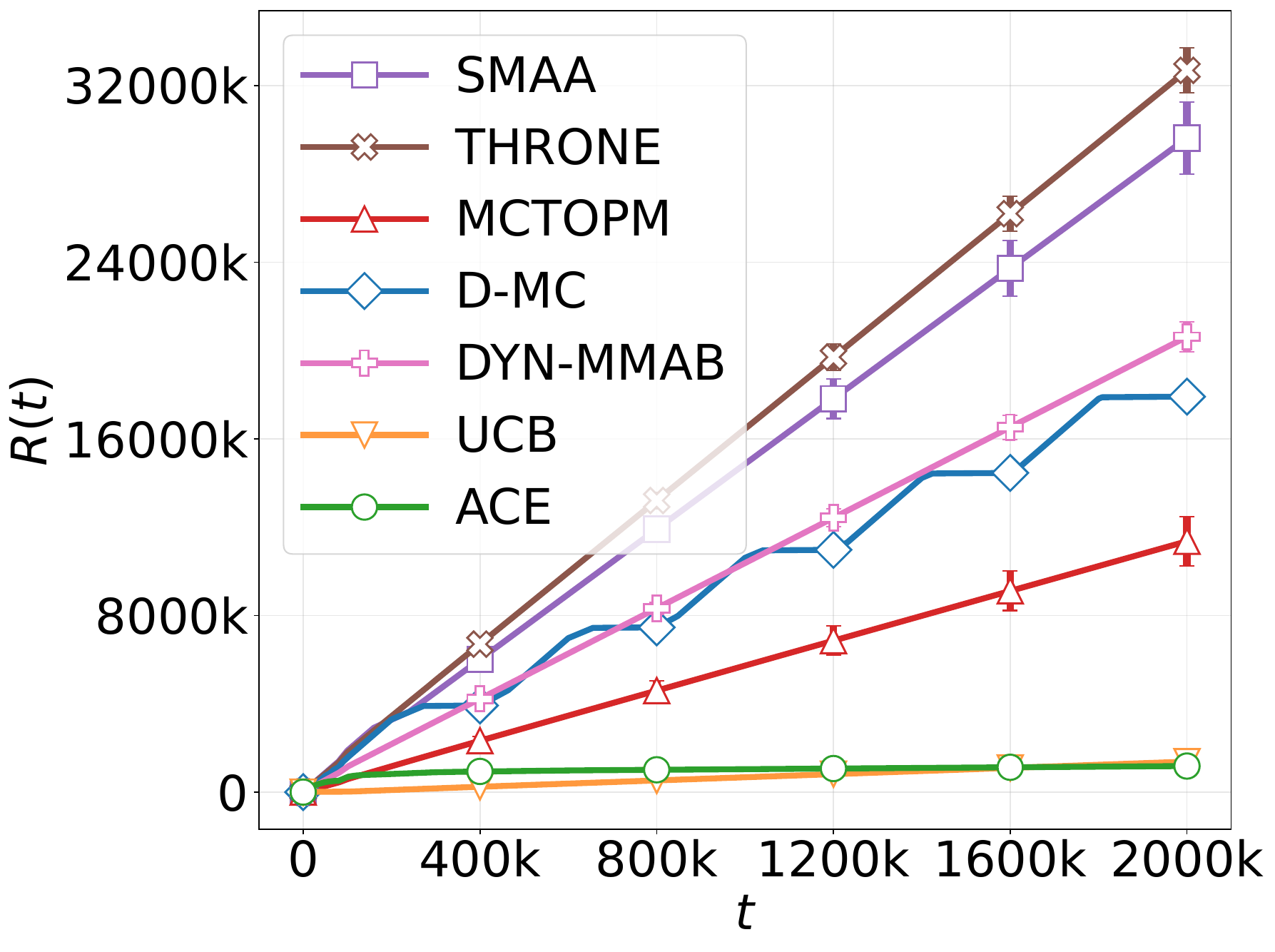}
        \caption{M=10, synthetic.}
        \label{fig:s-m-10}
    \end{subfigure}
    \hfill
    \begin{subfigure}[b]{0.33\textwidth}
        \includegraphics[width=\textwidth]{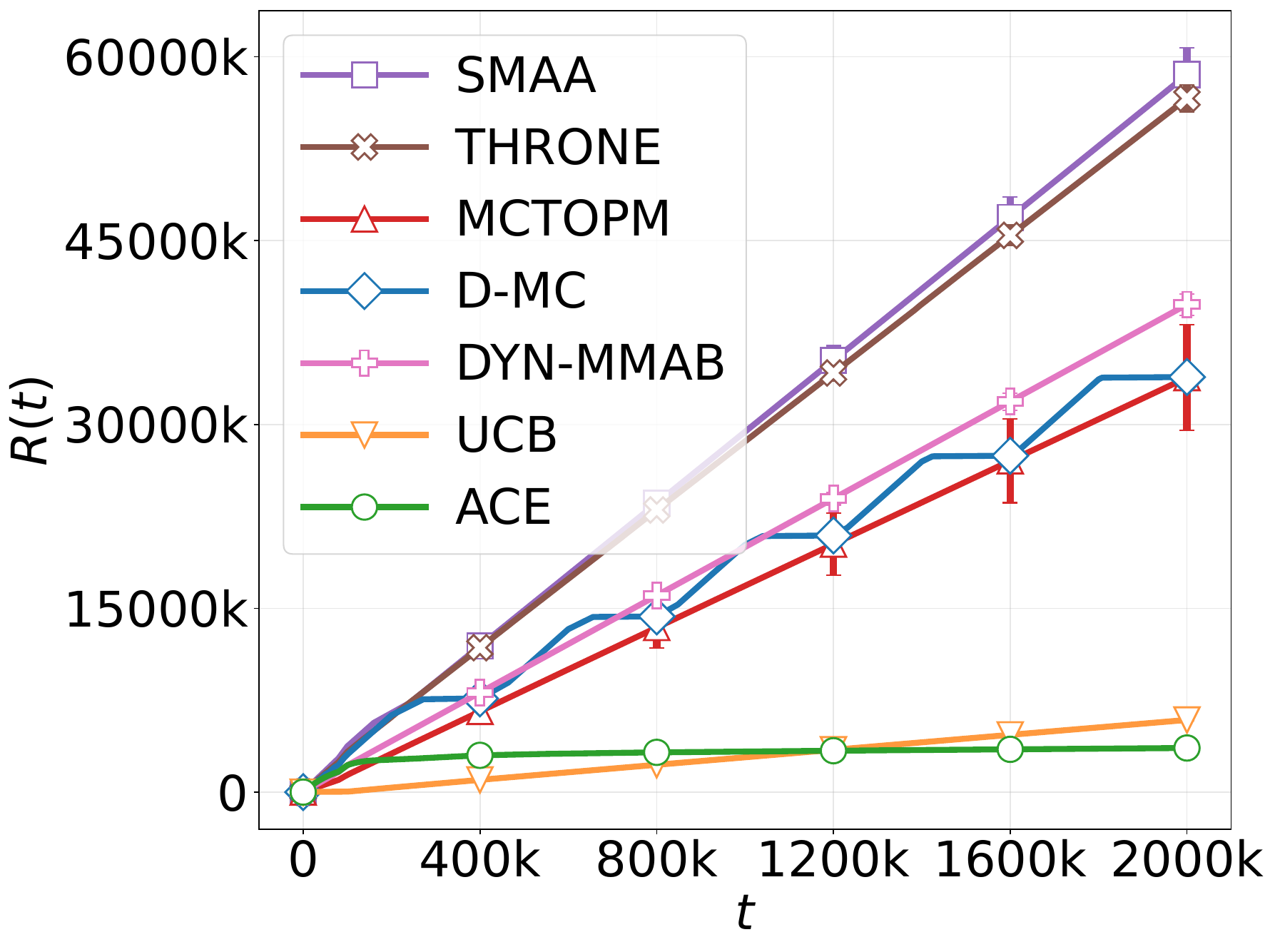}
        \caption{M=20, synthetic.}
        \label{fig:s-m-20}
    \end{subfigure}
    \hfill
    \begin{subfigure}[b]{0.33\textwidth}
        \includegraphics[width=\textwidth]{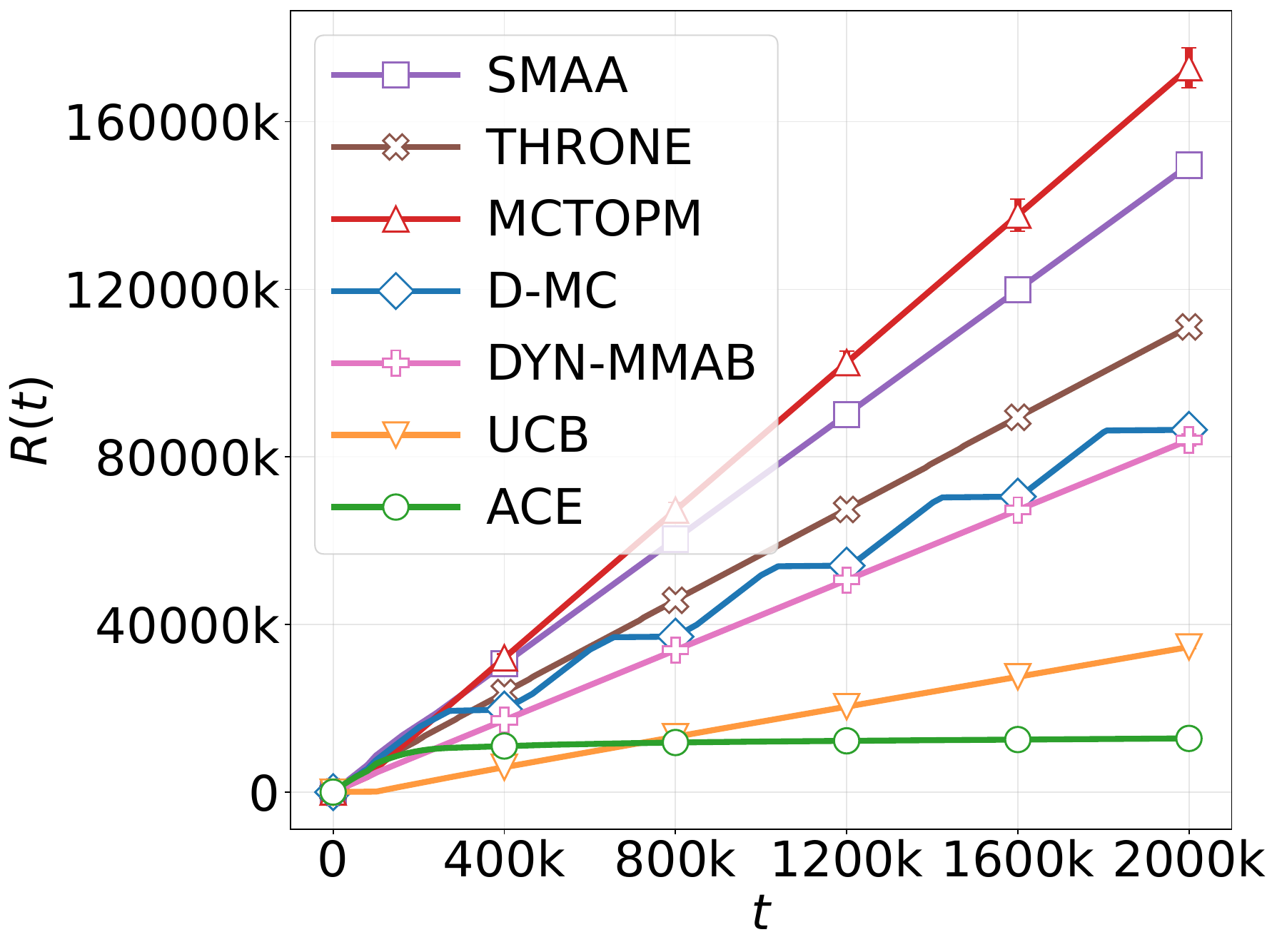}
        \caption{M=50, synthetic.}
        \label{fig:s-m-50}
    \end{subfigure}
    \caption{Comparison of cumulative regret for different numbers of players \(\mathbf{M}\) under different asynchronization settings.}
    \Description{Comparison of cumulative regret for different numbers of players \(\mathbf{M}\) under different asynchronization settings.}
    \label{fig:r-m}
\end{figure}

\begin{figure}[ht]
    \centering
    \begin{subfigure}[b]{0.33\textwidth}
        \includegraphics[width=\textwidth]{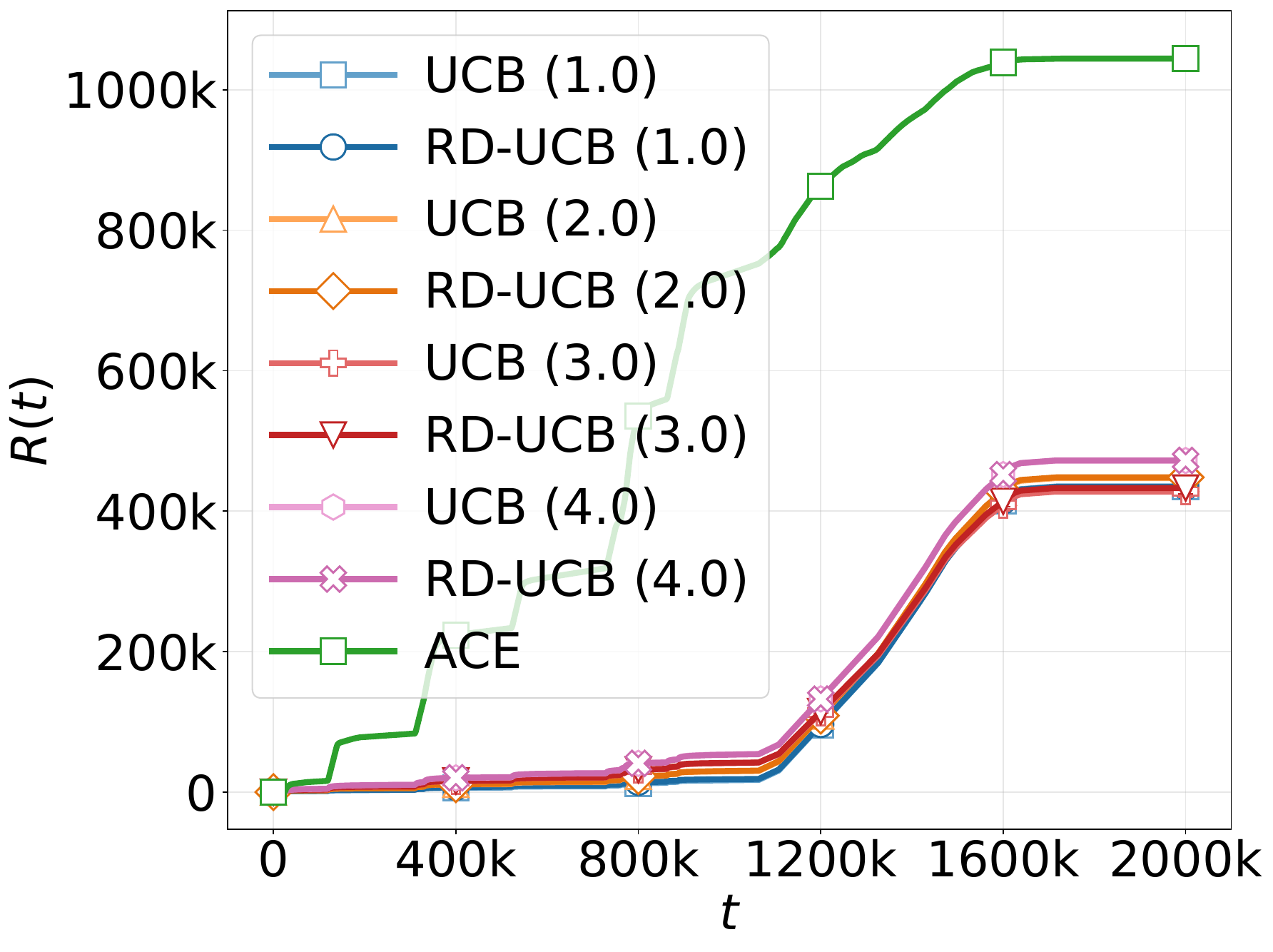}
        \caption{M=10, random. with UCBs.}
        \label{fig:ucb-m-rdm-10}
    \end{subfigure}
    \hfill
    \begin{subfigure}[b]{0.33\textwidth}
        \includegraphics[width=\textwidth]{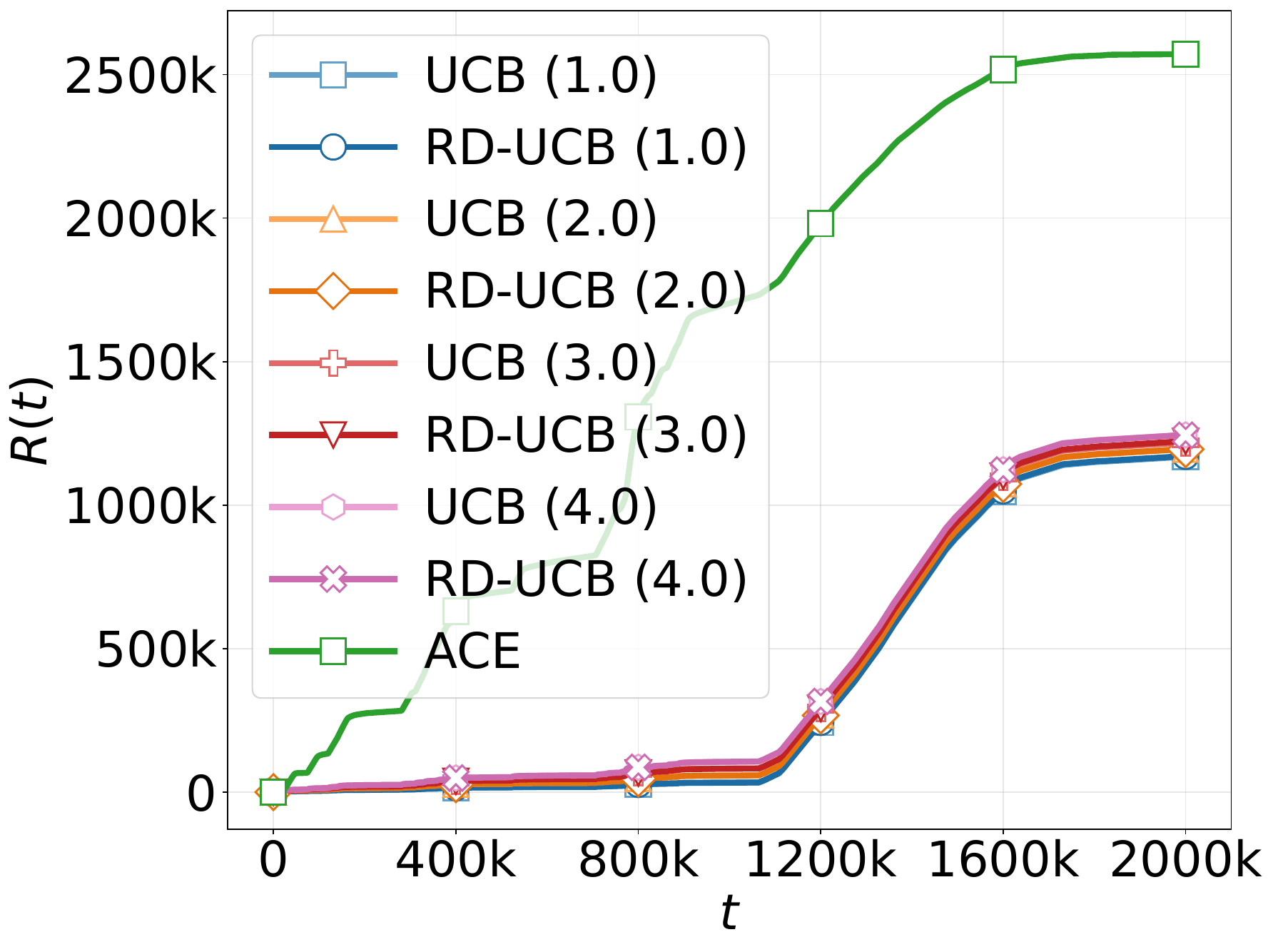}
        \caption{M=20, random. with UCBs.}
        \label{fig:ucb-m-rdm-20}
    \end{subfigure}
    \hfill
    \begin{subfigure}[b]{0.33\textwidth}
        \includegraphics[width=\textwidth]{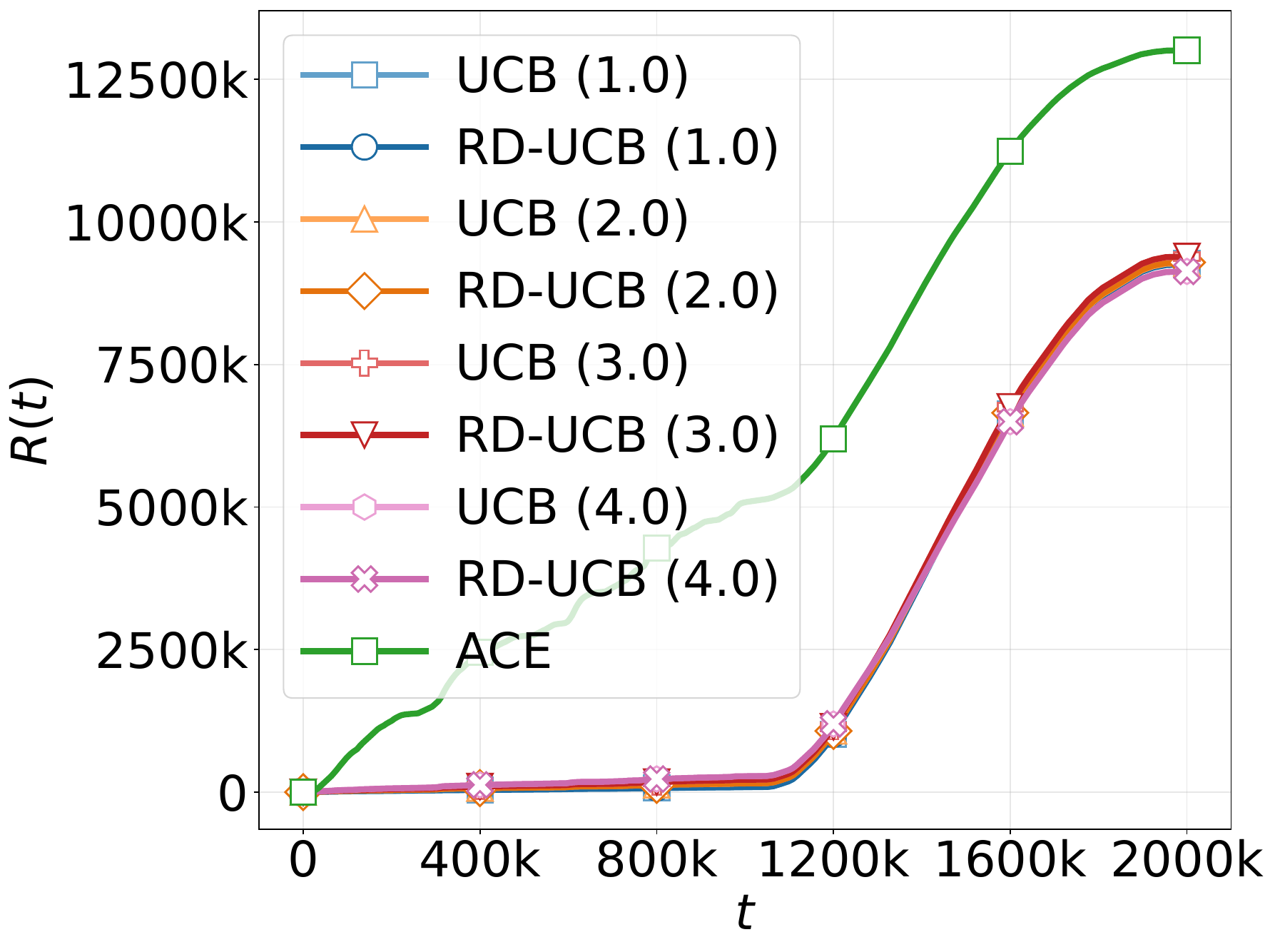}
        \caption{M=50, random. with UCBs.}
        \label{fig:ucb-m-rdm-50}
    \end{subfigure}
    \\[1em]
    \begin{subfigure}[b]{0.33\textwidth}
        \includegraphics[width=\textwidth]{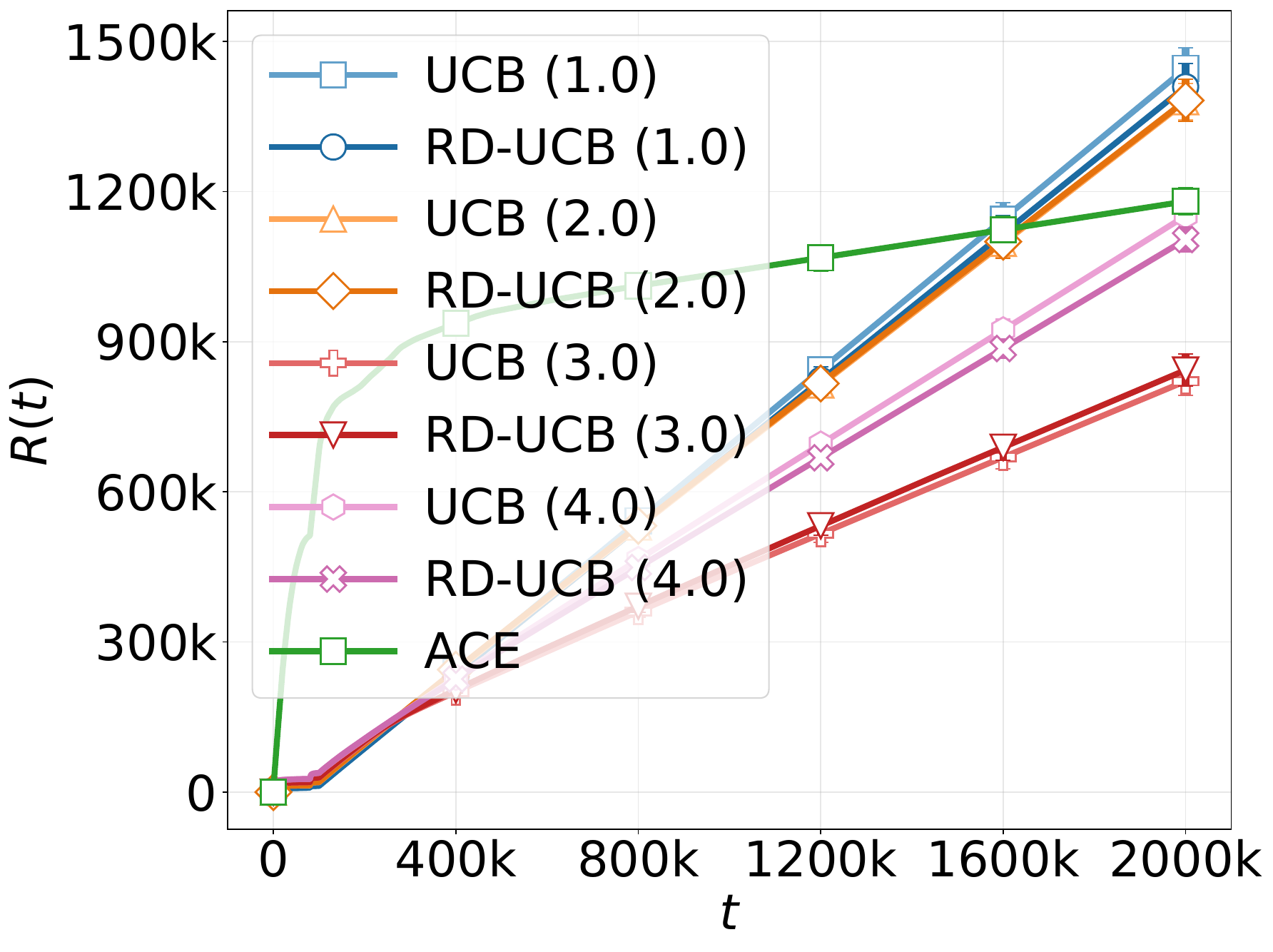}
        \caption{M=10, synthetic. with UCBs.}
        \label{fig:ucb-m-syn-10}
    \end{subfigure}
    \hfill
    \begin{subfigure}[b]{0.33\textwidth}
        \includegraphics[width=\textwidth]{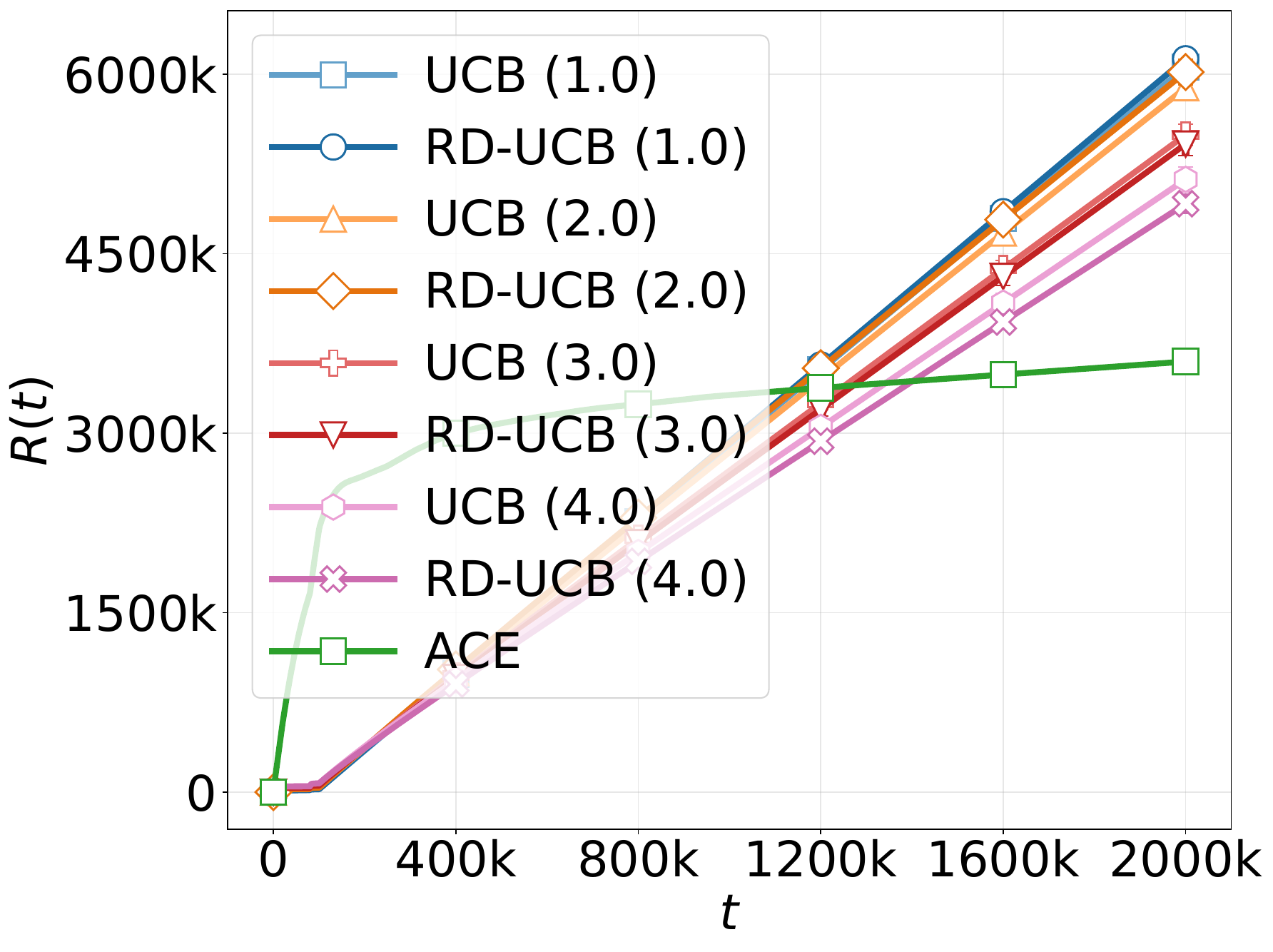}
        \caption{M=20, synthetic. with UCBs.}
        \label{fig:ucb-m-syn-20}
    \end{subfigure}
    \hfill
    \begin{subfigure}[b]{0.33\textwidth}
        \includegraphics[width=\textwidth]{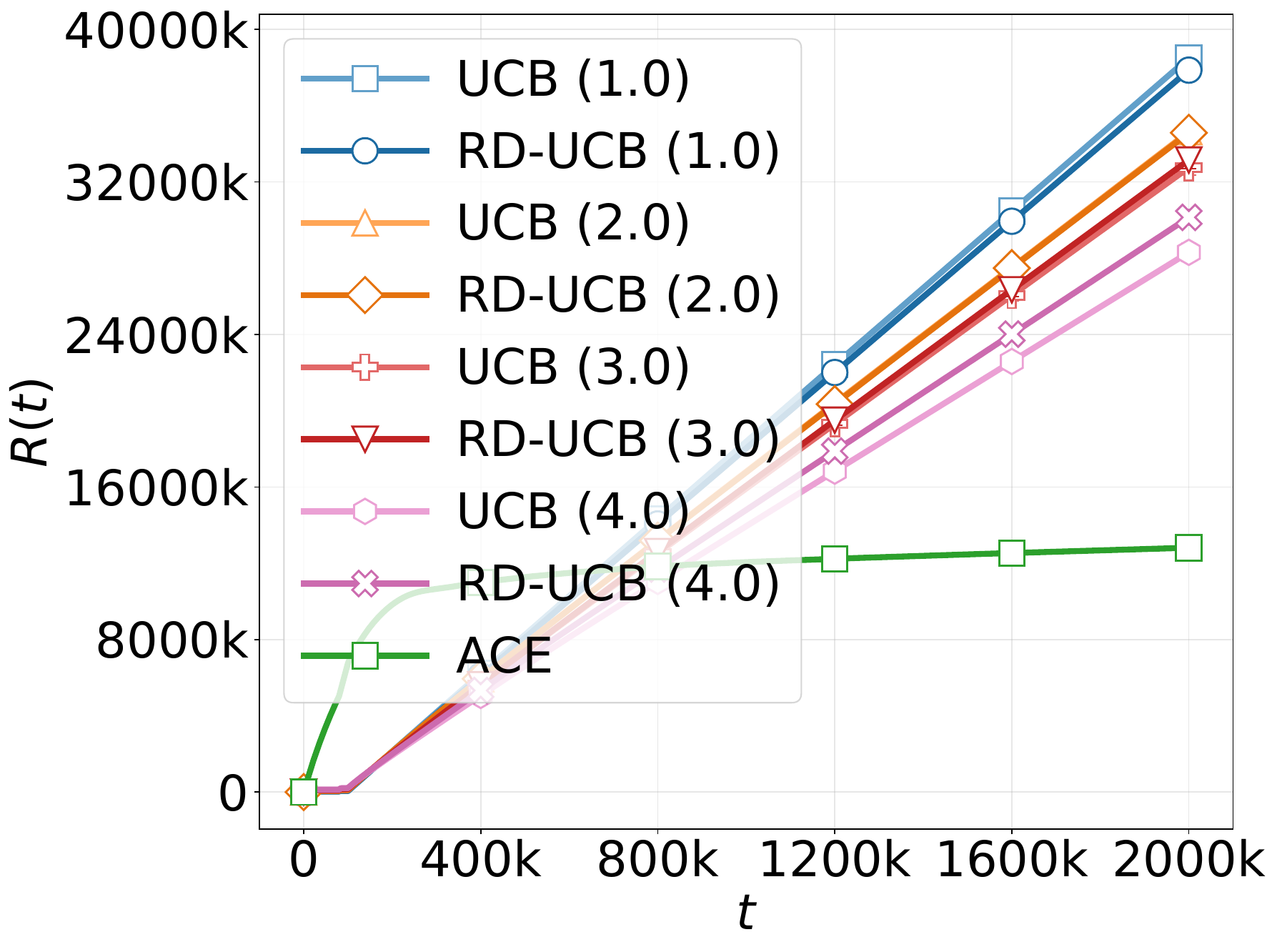}
        \caption{M=50, synthetic. with UCBs.}
        \label{fig:ucb-m-syn-50}
    \end{subfigure}
    \caption{Comparison of cumulative regret between UCB with multiple parameters and ACE for different $\mathbf{M}$ under different asynchronous settings.}
    \Description{Comparison of cumulative regret between UCB with multiple parameters and ACE for different $\mathbf{M}$ under different asynchronous settings.}
    \label{fig:ucb-m-rdm}
\end{figure}

\end{document}